\theoremstyle{plain}
\newtheorem{thm}{Theorem}[section] 
\newtheorem{lem}[thm]{Lemma} 
\newtheorem{prop}[thm]{Proposition}
\theoremstyle{definition} 
\newtheorem{defn}{Definition}[section] 
\newtheorem{exmp}{Example}[section]
\theoremstyle{remark} 
\newtheorem*{rem}{Remark}
\newcommand{\Luka}{\L{}ukasiewicz }
\newcommand{\Ccal}{\mathcal{C}}
\newcommand{\Acal}{\mathcal{A}}
\newcommand{\Ical}{\mathcal{I}}
\newcommand{\Ncal}{\mathcal{N}}
\newcommand{\dt}{\text{d}}
\newcommand\figref{Figure~\ref}
\newcommand\secref{Section~\ref}
\newcommand\tblref{Table~\ref}
\newcommand\defref{Definition~\ref}
\newcommand\exmpref{Example~\ref}
\newcommand\thmref{Theorem~\ref}
\newcommand\lemref{Lemma~\ref}
\newcommand\propref{Proposition~\ref}
\title{Cellular Automata, Many-Valued Logic, and Deep Neural Networks}
\author{Yani Zhang}
\address{Chair for Mathematical Information Science\\ ETH Zurich\\ Switzerland}
\email{yanizhang@mins.ee.ethz.ch}
\author{Helmut Bölcskei}
\thanks{H. Bölcskei gratefully acknowledges support by the Lagrange Mathematics and Computing Research Center, Paris, France.}
\address{Chair for Mathematical Information Science\\ ETH Zurich\\Switzerland} 
\email{hboelcskei@ethz.ch}
\begin{document}

\begin{abstract}
We develop a theory characterizing the fundamental capability of deep neural networks to learn, from evolution traces, the logical rules governing the behavior
of cellular automata (CA). This is accomplished by first establishing a novel connection between CA and {\L}ukasiewicz propositional logic. While binary CA have been known for decades to essentially perform operations in Boolean logic, no such relationship exists for general CA. We demonstrate that many-valued (MV) logic, specifically {\L}ukasiewicz propositional logic, constitutes a suitable language for characterizing general CA as logical machines. 
This is done by interpolating CA transition functions to continuous piecewise linear functions, which, by virtue of the McNaughton theorem, yield formulae in MV logic characterizing the CA. Recognizing that deep rectified linear unit (ReLU) networks realize continuous piecewise linear functions, it follows that these formulae are naturally extracted from CA evolution traces by deep ReLU networks. A corresponding algorithm together with a software implementation is provided.
Finally, we show that the dynamical behavior of CA can be
realized by recurrent neural networks. 
\end{abstract}
\maketitle

\section{Introduction}\label{sec:Intro}
Neural networks were originally introduced as a computational model whose functionality imitates that of the human brain \cite{mcculloch1943logical}. Machine learning based on neural networks has achieved state-of-the-art results in numerous applications, such as pattern recognition \cite{liang2015recurrent}, game intelligence \cite{silver2016mastering}, or protein structure prediction \cite{jumper2021highly}. Informally speaking, a neural network consists of layers of nodes that are connected by weighted edges. In practice, the network topology and weights are learned through training on data either in a supervised \cite{Kotsiantis2007supervised} or an unsupervised \cite{radford2015unsupervised} manner.


An ability ingrained in the human brain is to draw inferences from data it is presented with and apply what has been learned to new problems. In machine learning parlance, this aspect is usually referred to as generalization or extrapolation. For example, given the geometric sequence of integers $1,2,4,8,\ldots$, one would conclude that the rule generating the sequence is: The next number is obtained by multiplying the present number by $2$. Based on this extracted logical rule, the entire sequence can now be calculated. Such logical reasoning tasks, which appear natural to humans, may be challenging to learn for neural networks. Moreover, even if the network had learned, e.g., the mechanism generating the geometric sequence above, it would have the corresponding logical rule encoded in its topology and weights and it is unclear how to extract the rule and present it in a manner accessible to humans.

In this paper, we report an attempt at building a theory characterizing the fundamental capability of neural networks to learn logical rules from data. This immediately leads to the question, ``What is the logical structure behind data?'' Here, we shall make a first step towards developing the corresponding research program. Concretely, we consider data generated by cellular automata\footnote{We will abbreviate both the singular ``cellular automaton'' and the plural form ``cellular automata'' as ``CA''.} (CA). Abstractly speaking, a CA is a discrete dynamical system evolving on a regular lattice whose points take values in a finite set. Starting from an initial configuration, all lattice points change their states at synchronous discrete time steps according to a transition function that takes the present values of the lattice point under consideration and its neighbors as inputs. The evolution of the lattice point states over time furnishes the data sequence we shall be interested in. Now, assume that the underlying CA is binary, i.e., the state set is given by $\{0,1\}$. In this case the transition function governing the CA evolution is a Boolean function \cite[pp.~29-31]{voorhees1996computational}, which can be viewed 
as determining the logical structure behind the data sequence generated by the CA. 

The question we shall ask is whether one can extract the logical rule governing a general (i.e., not only binary) CA by training a neural network on data generated by the CA. While the connection between binary CA and Boolean logic has been known for decades \cite{wolfram1983statistical}, \cite{voorhees1996computational}, to the best of our knowledge no such relationship has been reported in the literature for CA with arbitrary state sets. One of the main conceptual contributions of this paper is to demonstrate that many-valued\footnote{With slight abuse of terminology, we shall use the term MV logic to refer to \Luka propositional logic.} (MV) logic is a suitable language for interpreting general CA as logical machines. We then show that 
all possible transition functions can be realized by deep ReLU networks\footnote{ReLU stands for the Rectified Linear Unit nonlinearity, defined as $x\mapsto \max\{0,x\}$.}, which, in turn, are found to naturally express statements in MV logic.
The dynamical system component of CA is demonstrated to be realizable by recurrent neural networks (RNNs). Finally, we propose and analyze a procedure for extracting the logical formulae behind CA transition functions from neural networks trained on corresponding CA evolution data. 

\textit{Notation:} $\mathbbm{1}_{ \{\cdot \}}$ denotes the truth function which takes the value $1$ if the statement inside $\{\cdot\}$ is true and equals $0$ otherwise. $0_N$ stands for the $N$-dimensional column vector with all entries equal to $0$. $\mathbb{N}_0:= \mathbb{N} \cup \{0\}$ and $|A|$ stands for the cardinality of the set $A$. $||\cdot||_1$ is the $\ell_1$-norm.

\subsection{Cellular automata} 
CA were invented in the 1940s by von Neumann \cite{von2017general} and Ulam \cite{ulam1952random} in an effort to build models that are capable of universal computation and self-reproduction. Von Neumann's conceptualization emphasized the aspect of self-reproduction, while Ulam suggested the use of finite state machines on two-dimensional lattices. A widely known CA is the two-dimensional Game of Life devised by Conway \cite{gardner1970fantastic}. 
Despite the simplicity of its rules, the Game of Life exhibits remarkable behavioral complexity and has therefore attracted widespread and long-standing interest. We begin by briefly reviewing the Game of Life.

Consider an infinite two-dimensional grid of square cells centered on the points of an underlying lattice (symbolized by dashed lines in~\figref{fig:neighborhood}). Initially, each cell (or equivalently lattice point) is in one of two possible states, namely ``live'' or ``dead''. Each cell has eight neighbors, taking into account horizontal, vertical, and diagonal directions (see~\figref{fig:neighborhood}). For a given cell, we refer to the set made up of the cell itself and its neighbors as the neighborhood of the cell. The cells change their states synchronously at discrete time steps, all following the same rule given by: 
\begin{enumerate}
    \item Every live cell with two or three live neighbors stays live; other live cells turn into dead cells.
    \item Every dead cell with three live neighbors becomes live; other dead cells stay dead. 
\end{enumerate} 
\begin{figure}[H]
\centering
\resizebox{2.5cm}{!}{

\begin{tikzpicture}[x=1cm, y=1cm, >=stealth, scale=1, transform shape]

  \node at (0.75, 0.75) {$z$};
 
      \draw[dashed] (-0.25,0.25) -- (1.75,0.25);
          \draw[dashed] (-0.25,0.75) -- (1.75,0.75);
         \draw[dashed] (-0.25,1.25) -- (1.75,1.25);
\draw[dashed] (0.25, -0.25) -- (0.25, 1.75); 
\draw[dashed] (0.75, -0.25) -- (0.75, 1.75); 
\draw[dashed] (1.25, -0.25) -- (1.25, 1.75); 
   
 \draw[line width=0.5mm] (0,0) -- (1.5,0);
  \draw[line width=0.5mm] (0,0.5) -- (1.5,0.5);
    \draw[line width=0.5mm] (1.5,0) -- (1.5,1.5);
    \draw[line width=0.5mm] (0,1) -- (1.5,1);
 \draw[line width=0.5mm] (1.5,1.5) -- (0,1.5);  
  \draw[line width=0.5mm] (0,1.5) -- (0,0); 
    \draw[line width=0.5mm] (0.5,1.5) -- (0.5,0); 
       \draw[line width=0.5mm] (1,1.5) -- (1,0); 
\end{tikzpicture}
}
\caption{Cell $z$ and its neighborhood.}\label{fig:neighborhood}
\end{figure}
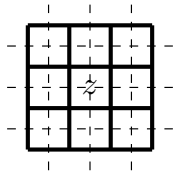
This process repeats at discrete time steps, forming an evolution of the system. We refer to \figref{fig:evolution} for an example illustration. As already mentioned, despite the simplicity of the transition rule, the evolutions induced by different initial configurations exhibit rich and complex long-term behavior. It is shown in \cite{berlekamp1982winning}, \cite{wainwright1974life}, \cite{rendell2002turing}, for example, that given an initial configuration with a finite number of live cells, it is an undecidable problem whether all these live cells would eventually die out. We refer to~\cite{adamatzky2010game} for an in-depth discussion of the behavioral patterns of the Game of Life and proceed to formally define CA. 
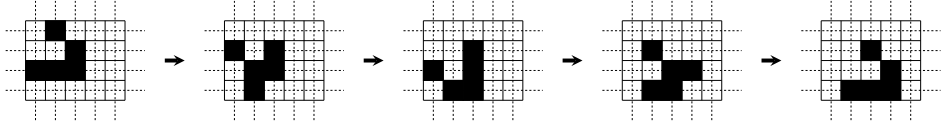
\begin{figure}[H]
\centering
\resizebox{12.5cm}{!}{

\begin{tikzpicture}[x=1cm, y=1cm, >=stealth, scale=1, transform shape]

   \draw[] (0,1) -- (0,5);   \draw[dashed] (0.5,0) -- (0.5,6);
  \draw[] (1,1) -- (1,5);\draw[dashed] (1.5,0) -- (1.5,6);
    \draw[] (2,1) -- (2,5); \draw[dashed] (2.5,0) -- (2.5,6);
   \draw[] (3,1) -- (3,5); \draw[dashed] (3.5,0) -- (3.5,6);
    \draw[] (4,1) -- (4,5); \draw[dashed] (4.5,0) -- (4.5,6);
     \draw[] (5,1) -- (5,5);
  \draw[] (0,1) -- (5,1);   \draw[dashed] (-1,1.5) -- (6,1.5);   
    \draw[] (0,2) -- (5,2);  \draw[dashed] (-1,2.5) -- (6,2.5);   
       \draw[] (0,3) -- (5,3);  \draw[dashed] (-1,3.5) -- (6,3.5);   
         \draw[] (0,4) -- (5,4); \draw[dashed] (-1,4.5) -- (6,4.5);   
        \draw[] (0,5) -- (5,5); 
        \draw[fill=black]  (1,4) -- (1,5) -- (2,5) -- (2,4) -- cycle;
      \draw[fill=black]  (2,3) -- (2,4) -- (3,4) -- (3,3) -- cycle;  
     \draw[fill=black]  (0,2) -- (0,3) -- (3,3) -- (3,2) -- cycle;      
  \draw[-stealth, line width=2mm] (7,3) -- (8,3) node[anchor=west] {};

  \draw[] (10,1) -- (10,5);\draw[dashed] (10.5,0) -- (10.5,6); 
\draw[] (11,1) -- (11,5); \draw[dashed] (11.5,0) -- (11.5,6); 
 \draw[] (12,1) -- (12,5); \draw[dashed] (12.5,0) -- (12.5,6); 
 \draw[] (13,1) -- (13,5); \draw[dashed] (13.5,0) -- (13.5,6); 
\draw[] (14,1) -- (14,5); \draw[dashed] (14.5,0) -- (14.5,6); 
 \draw[] (15,1) -- (15,5);    
 \draw[] (10,1) -- (15,1);   \draw[dashed] (9,1.5) -- (16,1.5);   
\draw[] (10,2) -- (15,2);  \draw[dashed] (9,2.5) -- (16,2.5);   
 \draw[] (10,3) -- (15,3); \draw[dashed] (9,3.5) -- (16,3.5);   
 \draw[] (10,4) -- (15,4);\draw[dashed] (9,4.5) -- (16,4.5);   
 \draw[] (10,5) -- (15,5); 
  \draw[fill=black]  (12,3) -- (12,4) -- (13,4) -- (13,3) -- cycle;  
  \draw[fill=black]  (10,3) -- (10,4) -- (11,4) -- (11,3) -- cycle; 
\draw[fill=black]  (11,2) -- (11,3) -- (13,3) -- (13,2) -- cycle;  
\draw[fill=black]  (11,1) -- (11,2) -- (12,2) -- (12,1) -- cycle;  
\draw[-stealth, line width=2mm] (17,3) -- (18,3) node[anchor=west] {};

 \draw[] (20,1) -- (20,5); \draw[dashed] (20.5,0) -- (20.5,6); 
\draw[] (21,1) -- (21,5); \draw[dashed] (21.5,0) -- (21.5,6); 
 \draw[] (22,1) -- (22,5); \draw[dashed] (22.5,0) -- (22.5,6); 
 \draw[] (23,1) -- (23,5);\draw[dashed] (23.5,0) -- (23.5,6); 
\draw[] (24,1) -- (24,5); \draw[dashed] (24.5,0) -- (24.5,6); 
 \draw[] (25,1) -- (25,5);  
 \draw[] (20,1) -- (25,1);  \draw[dashed] (19,1.5) -- (26,1.5);   
\draw[] (20,2) -- (25,2);   \draw[dashed] (19,2.5) -- (26,2.5);   
 \draw[] (20,3) -- (25,3);   \draw[dashed] (19,3.5) -- (26,3.5);   
 \draw[] (20,4) -- (25,4); \draw[dashed] (19,4.5) -- (26,4.5);   
 \draw[] (20,5) -- (25,5); 
 \draw[fill=black]  (20,2) -- (20,3) -- (21,3) -- (21,2) -- cycle; 
\draw[fill=black]  (21,1) -- (21,2) -- (22,2) -- (22,1) -- cycle;  
\draw[fill=black]  (22,1) -- (22,4) -- (23,4) -- (23,1) -- cycle;
\draw[-stealth, line width=2mm] (27,3) -- (28,3) node[anchor=west] {};

\draw[] (30,1) -- (30,5);\draw[dashed] (30.5,0) -- (30.5,6); 
\draw[] (31,1) -- (31,5); \draw[dashed] (31.5,0) -- (31.5,6); 
 \draw[] (32,1) -- (32,5); \draw[dashed] (32.5,0) -- (32.5,6); 
 \draw[] (33,1) -- (33,5); \draw[dashed] (33.5,0) -- (33.5,6); 
\draw[] (34,1) -- (34,5); \draw[dashed] (34.5,0) -- (34.5,6); 
 \draw[] (35,1) -- (35,5);      
 \draw[] (30,1) -- (35,1);  \draw[dashed] (29,1.5) -- (36,1.5);   
\draw[] (30,2) -- (35,2);  \draw[dashed] (29,2.5) -- (36,2.5);   
 \draw[] (30,3) -- (35,3); \draw[dashed] (29,3.5) -- (36,3.5);   
 \draw[] (30,4) -- (35,4);\draw[dashed] (29,4.5) -- (36,4.5);   
 \draw[] (30,5) -- (35,5);  
 \draw[fill=black]  (31,3) -- (31,4) -- (32,4) -- (32,3) -- cycle;
 \draw[fill=black]  (31,1) -- (31,2) -- (33,2) -- (33,1) -- cycle; 
 \draw[fill=black]  (32,2) -- (32,3) -- (34,3) -- (34,2) -- cycle; 
\draw[-stealth, line width=2mm] (37,3) -- (38,3) node[anchor=west] {};

\draw[] (40,1) -- (40,5); \draw[dashed] (40.5,0) -- (40.5,6); 
\draw[] (41,1) -- (41,5);\draw[dashed] (41.5,0) -- (41.5,6); 
 \draw[] (42,1) -- (42,5); \draw[dashed] (42.5,0) -- (42.5,6); 
 \draw[] (43,1) -- (43,5);\draw[dashed] (43.5,0) -- (43.5,6); 
\draw[] (44,1) -- (44,5);\draw[dashed] (44.5,0) -- (44.5,6); 
 \draw[] (45,1) -- (45,5);    
 \draw[] (40,1) -- (45,1);  \draw[dashed] (39,1.5) -- (46,1.5);   
\draw[] (40,2) -- (45,2);  \draw[dashed] (39,2.5) -- (46,2.5);   
 \draw[] (40,3) -- (45,3); \draw[dashed] (39,3.5) -- (46,3.5);   
 \draw[] (40,4) -- (45,4);\draw[dashed] (39,4.5) -- (46,4.5);   
 \draw[] (40,5) -- (45,5);  
\draw[fill=black]  (42,3) -- (42,4) -- (43,4) -- (43,3) -- cycle;
\draw[fill=black]  (43,2) -- (43,3) -- (44,3) -- (44,2) -- cycle;
\draw[fill=black]  (41,1) -- (41,2) -- (44,2) -- (44,1) -- cycle;
\end{tikzpicture}
}
\caption{Steps of the evolution of the Game of Life. Black cells are live; white cells are dead.}
    \label{fig:evolution}
\end{figure}


\begin{defn}[Cellular automaton]\label{defn:CA}
Let $d, n,k \in \mathbb{N}, k\geq 2$. A cellular automaton is an ordered quadruple $(\mathbb{Z}^d, K, \mathcal{E}, f)$, where \begin{enumerate}
    \item the $d$-dimensional lattice $\mathbb{Z}^d$ is referred to as the cellular space,
    \item $K = \{0,\frac{1}{k-1},\ldots \hspace{-0.02cm}, \frac{k-2}{k-1}, 1\}$ is the state set with $k$ states,
    \item $\mathcal{E}  = \{0_d, z_1,\ldots \hspace{-0.02cm}, z_{n-1}\}\subset \mathbb{Z}^d$ is the neighborhood set,
    \item $f: K^n \rightarrow K $ is the transition function.
\end{enumerate} 
The lattice dimension $d$ is called the cellular space dimension. Cells are centered on the lattice points of $\mathbb{Z}^d$. A configuration, denoted by $c$, over the cellular space assigns a state in the state set to each cell (or equivalently lattice point). For $z \in \mathbb{Z}^d$, denote by $c[z]$ its state under the configuration~$c$. Let $\Ccal$ be the set of all possible configurations over the cellular space. The CA map $F: \Ccal \rightarrow \Ccal$ effecting the configuration evolution from one time step to the next is determined by the transition function $f$ according to \begin{equation}\label{eq:globalmap}
     F(c)[z] = f(c[z], c[z+z_1],\ldots \hspace{-0.02cm}, c[z+z_{n-1}]), \quad \text{ for all } z\in \mathbb{Z}^d.
\end{equation} 
\end{defn}

\begin{rem}
The CA map $F$ is often referred to as the ``global mapping function'', whereas the transition function $f$ is called the ``local mapping function''. 
Note that our choice of the state set $K$ in \defref{defn:CA} is without loss of generality, as for every general state set $K^*$, there is a bijection $h: K^* \rightarrow K$, with $|K^*| = |K|$. For example, for the state set $K^* = \{1,2,3\}$, $h: K^* \rightarrow \{0,1/2,1\}, h(x) = (x-1)/2$ is a bijection. 
\end{rem}

We continue with examples illustrating \defref{defn:CA}.
\begin{exmp}
Consider a CA with cellular space dimension $d =1$, neighborhood set $\mathcal{E}=\{0,-1\}\subset \mathbb{Z}$, state set $K = \{0,1\}$, and the transition function $f:K^2\rightarrow K$ as provided in \tblref{tbl:example1}. The CA map $F$ is given by 
\begin{equation*}
     F(c)[z] = f( c[z], c[z-1]), \quad \text{ for all } z\in \mathbb{Z}.
\end{equation*}
\figref{fig:example1} depicts one step of the CA evolution. 
\begin{center}
\vspace{0.5cm}
\begin{tabular}{|c |c |c |c |c |} 
 \hline
$x_0\,x_{-1}$& 00 & 10 & 01 & 11 \\ \hline
$f(x_0, x_{-1})$ & 0 & 0 & 0 & 1   \\ \hline
\end{tabular}
\vspace{-0.2cm}
\captionof{table}{Transition function $f$.}\label{tbl:example1}
\end{center}
\begin{figure}[H]
\centering
\resizebox{4cm}{!}{

\begin{tikzpicture}
\draw (-4,1) -- (4,1) node[anchor= east] {};
\draw (-4,0) -- (4,0) node[anchor= east] {};

\draw (-4,1) -- (-4,0) node[anchor= east] {};
\draw (-3,1) -- (-3,0) node[anchor= east] {};
\draw (-2,1) -- (-2,0) node[anchor= east] {};
\draw (-1,1) -- (-1,0) node[anchor= east] {};
\draw (0,1) -- (0,0) node[anchor= east] {};
\draw (1,1) -- (1,0) node[anchor= east] {};
\draw (2,1) -- (2,0) node[anchor= east] {};
\draw (3,1) -- (3,0) node[anchor= east] {};
\draw (4,1) -- (4,0) node[anchor= east] {};
\draw[fill=black]  (-4,0) -- (-4,1) -- (-3,1) -- (-3,0) -- cycle;
\draw[fill=black]  (-3,0) -- (-3,1) -- (-2,1) -- (-2,0) -- cycle;
\draw[fill=black]  (0,0) -- (0,1) -- (1,1) -- (1,0) -- cycle;
\draw[fill=black]  (2,0) -- (2,1) -- (3,1) -- (3,0) -- cycle;
\node at (-4.5,0.5){ $\cdots$};
\node at (4.5,0.5){ $\cdots$};

\draw[-stealth, line width=1mm] (0,-0.4) -- (0,-1) node[anchor= east] {};


\draw (-4,-1.4) -- (4,-1.4) node[anchor= east] {};
\draw (-4,-2.4) -- (4,-2.4) node[anchor= east] {};

\draw (-4,-1.4) -- (-4,-2.4) node[anchor= east] {};
\draw (-3,-1.4) -- (-3,-2.4) node[anchor= east] {};
\draw (-2,-1.4) -- (-2,-2.4) node[anchor= east] {};
\draw (-1,-1.4) -- (-1,-2.4) node[anchor= east] {};
\draw (0,-1.4) -- (0,-2.4) node[anchor= east] {};
\draw (1,-1.4) -- (1,-2.4) node[anchor= east] {};
\draw (2,-1.4) -- (2,-2.4) node[anchor= east] {};
\draw (3,-1.4) -- (3,-2.4) node[anchor= east] {};
\draw (4,-1.4) -- (4,-2.4) node[anchor= east] {};

\draw[fill=black]  (-3,-2.4) -- (-3,-1.4) -- (-2,-1.4) -- (-2,-2.4) -- cycle;

\node at (-4.5,-1.9){ $\cdots$};
\node at (4.5,-1.9){ $\cdots$};



\draw[dashed] (-2.5,-0.3) -- (-2.5,1.3) node[anchor= east] {};
\draw[dashed] (-3.5,-0.3) -- (-3.5,1.3) node[anchor= east] {};
\draw[dashed] (-1.5,-0.3) -- (-1.5,1.3) node[anchor= east] {};
\draw[dashed] (-0.5,-0.3) -- (-0.5,1.3) node[anchor= east] {};
\draw[dashed] (0.5,-0.3) -- (0.5,1.3) node[anchor= east] {};
\draw[dashed] (1.5,-0.3) -- (1.5,1.3) node[anchor= east] {};
\draw[dashed] (2.5,-0.3) -- (2.5,1.3) node[anchor= east] {};
\draw[dashed] (3.5,-0.3) -- (3.5,1.3) node[anchor= east] {};

\draw[dashed] (3.5,-2.7) -- (3.5,-1.1) node[anchor= east] {};
\draw[dashed] (2.5,-2.7) -- (2.5,-1.1) node[anchor= east] {};
\draw[dashed] (1.5,-2.7) -- (1.5,-1.1) node[anchor= east] {};
\draw[dashed] (0.5,-2.7) -- (0.5,-1.1) node[anchor= east] {};
\draw[dashed] (-0.5,-2.7) -- (-0.5,-1.1) node[anchor= east] {};
\draw[dashed] (-1.5,-2.7) -- (-1.5,-1.1) node[anchor= east] {};
\draw[dashed] (-2.5,-2.7) -- (-2.5,-1.1) node[anchor= east] {};
\draw[dashed] (-3.5,-2.7) -- (-3.5,-1.1) node[anchor= east] {};
\end{tikzpicture}

}
\caption{The state $1$ is indicated by black cells, the state~$0$ by white cells.}\label{fig:example1}
\end{figure}
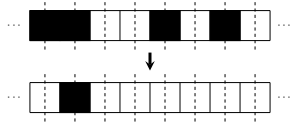
\end{exmp}

\begin{exmp}[Game of Life]
Consider a CA with cellular space dimension $d =2$, state set $K = \{0,1\}$, and neighborhood set
$\mathcal{E} = \{0_2,z_1,\ldots \hspace{-0.02cm},\\z_8\} \subset \mathbb{Z}^2$, where 
\begin{equation*}
\begin{aligned}
     z_1 &= (-1,1), \, z_2 = (0,1), \, z_3 = (1,1), \, z_4=(-1,0), \\[1mm]
    z_5 &= (1,0), \, z_6 = (-1,-1), \, z_7 = (0,-1), \, z_8=(1,-1).
\end{aligned}
\end{equation*}
The transition function $f:K^9\rightarrow K$ is given by 
\begin{equation*}
    f(x_0,\ldots \hspace{-0.02cm},x_8) =\begin{cases}
      \mathbbm{1}_{x_1+\cdots+x_8=3}   , &\text{if } x_0=0,\\
      \mathbbm{1}_{x_1+\cdots+x_8=2}+\mathbbm{1}_{x_1+\cdots+x_8=3},    & \text{if } x_0=1.
    \end{cases}
\end{equation*} 
The resulting CA is the Game of Life with dead cells corresponding to state $0$ and live cells to state $1$.  
\end{exmp}

One of the main goals of this paper is to show that deep ReLU networks are capable of learning CA transition functions from CA evolution sequences. This result also establishes that every CA transition function can be realized by a deep ReLU network. Concretely, we build this universal CA transition function representation theorem in two steps. First, we establish a correspondence between CA transition functions and MV algebras \cite{cignoli2013algebraic}. Then, we show that 
statements in MV logic are naturally expressed by deep ReLU networks. En route it is shown that MV logic is a suitable language for describing CA with general state sets as logical machines.


\subsection{One-dimensional binary CA}\label{sec:1.2Onedim}

In order to develop intuition on the connection between CA, MV logic, and ReLU networks, we consider a simple example setup. Specifically, we investigate CA with cellular space~$\mathbb{Z}$, state set $K = \{0,1 \}$, and neighborhood set $\mathcal{E} = \{-1,0,1\}$. This class of CA is referred to as elementary CA \cite{wolfram1983statistical}. As the transition function depends on the states of the cell itself and its left and right neighbor, there are $2^3$ possible states for a given cell's neighborhood and hence a total of $2^{2^3} = 256$ transition functions. Wolfram proposed a scheme \cite{wolfram1983statistical} to index the associated CA using an $8$-digit binary string. For example, \tblref{tbl:truth30} specifies the transition function of the elementary CA of index 30 ($30_{10} = 0001 1110_{2}$), and \tblref{tbl:truth110} contains that corresponding to elementary CA 110 ($110_{10} = 0110 1110_{2}$). It is remarkable that elementary CA, despite their simplicity, are capable of universal computation. Specifically, 
Cook \cite{cook2004universality} proved that elementary CA 110 is Turing complete.
\begin{center}
\vspace{0.5cm}
\begin{tabular}{|c |c |c |c |c |c |c |c |c |} 
 \hline
$x_{-1}\,x_0\,x_1$& 111 & 110 & 101 & 100 & 011 & 010 & 001 & 000 \\ \hline
$f_{30}(x_{-1}, x_0, x_1)$ & 0 & 0 & 0 & 1 & 1 & 1 & 1 & 0  \\ \hline
\end{tabular}
\vspace{-0.2cm}
\captionof{table}{Transition function of elementary CA 30.} \label{tbl:truth30} 
\end{center}

\begin{center}
\vspace{0.3cm}
\begin{tabular}{|c |c |c |c |c |c |c |c |c |} 
 \hline
$x_{-1}\,x_0\,x_1$& 111 & 110 & 101 & 100 & 011 & 010 & 001 & 000 \\ \hline
$f_{110}(x_{-1}, x_0, x_1)$ & 0 & 1 & 1 & 0 & 1 & 1 & 1 & 0  \\ \hline
\end{tabular}
\vspace{-0.2cm}
\captionof{table}{Transition function of elementary CA 110.} \label{tbl:truth110} 
\end{center}

We now bring Boolean logic into the picture, concretely by interpreting the state set elements 0 and 1 as truth values in Boolean algebra, and assuming that $x_{-1}, x_0, x_1$ are Boolean propositional variables. 
The transition function $f_{110}$, for example, can in effect be regarded as a Boolean function, namely $f_{110} = \texttt{OR}(\texttt{XOR}(x_0,x_1), \texttt{AND}(\texttt{NOT}(x_{-1}),\texttt{OR}(x_0,x_1)))$.
In the same manner, we get $f_{30}(x_{-1}, x_0, x_1) = \texttt{XOR}(x_{-1}, \texttt{OR}(x_0,x_1))$. As shown in \cite[pp.~29-31]{voorhees1996computational}, \cite[Section~12.2]{rosen2007discrete}, the transition function of every elementary CA can be expressed as a formula in Boolean logic, particularly in a canonical form called disjunctive normal form (DNF), which we define next. 

\begin{defn}\label{defn:DisjunctiveNormalForm}
In Boolean logic, a finite string of symbols is called a \textit{conjunction} if it is given by \texttt{AND} connections of a finite number of propositional variables (possibly negated), and a \textit{disjunction} if the connections are effected through the \texttt{OR} operation. A logical formula is said to be in \textit{disjunctive normal form} if it is given by the \texttt{OR} connection of a finite number of conjunctions.
\end{defn}

Let the $\texttt{OR}$ operation be denoted by $\oplus$, $\texttt{AND}$ by $\odot$, and $\texttt{NOT}$ by $\lnot$. Following the procedure in \cite[Section~12.2]{rosen2007discrete}, we can express the transition function in \tblref{tbl:truth30} in DNF according to
\begin{equation}\label{eq:rule30logic}
f_{30} = (x_{-1}\odot  \lnot x_0 \odot \lnot x_1) \oplus (\lnot x_{-1}\odot   x_1 ) \oplus (\lnot x_{-1}\odot   x_0).
\end{equation}

Having established the connection between elementary CA and Boolean logic, by expressing the transition function in terms of a Boolean formula, we are ready to have ReLU networks enter the story. Specifically, this shall be effected by realizing the Boolean logical operations $\oplus, \odot,$ and~$\lnot$ through ReLU networks (see \defref{defn:ReLU} below). With the ReLU function $\rho: \mathbb{R} \rightarrow \mathbb{R}, \hspace{0.2cm} \rho (x):=\max\{0,x\}$, the binary and $n$-ary conjunction can be written as 
\begin{align}
x_1 \odot x_2 &= \rho\hspace{0.02cm}(x_1 + x_2 - 1), \\\label{eq:ReLUandgate}
x_1 \odot \cdots \odot x_n &=\rho\hspace{0.02cm}(\Sigma_{i = 1}^n x_i - (n-1)).
\end{align}
Likewise, the binary and $n$-ary disjunction can be realized according to
\begin{align}
x_1 \oplus x_2 &=\rho\hspace{0.02cm}(x_1+x_2) -\rho\hspace{0.02cm}(x_1 + x_2 - 1), \\\label{eq:ReLUorgate}
x_1 \oplus \cdots \oplus x_n &=\rho\hspace{0.02cm}(\Sigma_{i = 1}^n x_i) -\rho(\Sigma_{i = 1}^n x_i-1).
\end{align}
Finally, the Boolean operation $\lnot$ can be expressed as 
\[
\lnot x = 1-x.
\]
We now combine these results to realize the logical formula $f_{30}$ in~\eqref{eq:rule30logic} in terms of the ReLU network  $\Phi^{f_{30}}:\mathbb{R}^3 \rightarrow \mathbb{R} $ with three layers according to
\begin{equation}\label{eq:NNf30}
    \Phi^{f_{30}}:= W_3^{f_{30}} \circ \rho \circ W_2^{f_{30}} \circ \rho \circ  W_1^{f_{30}}
\end{equation} where 
\begin{equation*}
 W_1^{f_{30}}(x_{-1},x_0,x_1) = \begin{pmatrix}
    -1 & -1 & 1 \\
    1 & 0 & -1 \\
    0 & 1 & -1 
    \end{pmatrix} \begin{pmatrix}
        x_1 \\x_0 \\x_{-1}
    \end{pmatrix} \text{, }
\end{equation*} 
\begin{equation*}
     W_2^{f_{30}}(x) = \begin{pmatrix}
    1 & 1 & 1  \\
    1 & 1 & 1 
    \end{pmatrix} x + \begin{pmatrix}
   0 \\-1
    \end{pmatrix}\text{, } \quad \text{ for } x\in \mathbb{R}^3,
\end{equation*}
\begin{equation*}
    W_3^{f_{30}}(x) = \begin{pmatrix}1 & -1 \end{pmatrix} x,  \quad \text{ for } x\in \mathbb{R}^2.
\end{equation*}
It can now be verified directly that
\begin{equation*}
     \Phi^{f_{30}}(x_{-1},x_0,x_1) = f_{30}(x_{-1},x_0,x_1), \text{ for } (x_{-1},x_0,x_1 ) \in \{0,1\}^3.
\end{equation*}   

We hasten to add that ReLU network realizations of Boolean formulae are not unique. For example, upon noting that $\Phi^{\text{Id}}(x) = \rho(x) - \rho(-x)=x, x\in \mathbb{R},$ is a valid ReLU network, the map $\Phi^{f_{30}}$ in~\eqref{eq:NNf30} can be augmented according to 
\begin{equation*}
\Phi^{\text{Id}} \circ \cdots \circ \Phi^{\text{Id}} \circ \Phi^{f_{30}}
\end{equation*}
without changing the input-output function it realizes. This results in an infinite collection of different ReLU networks, all realizing the Boolean formula $f_{30}$. 
In addition, the Boolean logical formula corresponding to a given binary truth table is not unique either \cite[Section~12.4]{rosen2007discrete}. These issues of nonuniqueness will be addressed in more detail in~\secref{sec:Extract}. 

We can summarize what was done in this section as follows. In an effort to implement the transition function of a one-dimensional binary CA by a ReLU neural network, we first expressed the transition function, originally specified in the form of a truth table, by a formula in Boolean logic. This formula was then realized by a ReLU network. The bridge via Boolean algebra effected in this manner serves to illustrate that CA are actually ``logical machines'', an aspect elaborated on in the literature before  \cite{voorhees1996computational}, \cite{wolfram1983statistical}, \cite{von2017general}, but notably only in the binary case. 

Inspired by this connection, we shall show that the transition functions of general CA with arbitrary state sets, in particular of arbitrary cardinality, can be expressed in terms of formulae in MV logic \cite{cignoli2013algebraic}, which, in turn, are found to be naturally realized by ReLU networks. Embedding these (feedforward) networks into a recurrent neural network structure \cite{elman1990finding}, we get a dynamical system realization of general CA, entirely in terms of neural networks. 


\subsection{Previous work}

Despite the fact that neural networks and CA stand for parallel efforts in building computational models of the human brain \cite{burks1969neumann}, \cite{kari2005theory}, a profound theoretical understanding of the connections between the two structures seems to be lacking. Wulff and Hertz \cite{wulff_learning_1992} consider shallow neural networks with the threshold activation function to learn the transition functions of one- and two-dimensional binary CA. While no details on the training algorithm are provided, the experiments in~\cite{wulff_learning_1992} lead to the conclusion that the approach employed is capable of learning only a small subset of the CA considered. More recently, Gilpin~\cite{gilpin2019cellular} designed a convolutional neural network simulating two-dimensional CA with arbitrary state sets and neighborhood size.

The connection between general CA and MV logic we report here seems completely new. For binary CA and Boolean logic, besides the classical results in \cite{voorhees1996computational}, \cite{wolfram1983statistical}, more recent work studying CA evolution through the lens of formal logic is reported in \cite{das2021formal}, \cite{ishida2014cellular}. Specifically, Sukanta and Chakraborty \cite{das2021formal} show that binary CA on finite one-dimensional cellular spaces are models of certain logical languages in the domain of binary strings. Inokuch et al. \cite{ishida2014cellular} use the interpretation of binary CA transition functions as Boolean logical formulae to demonstrate that the multiplication of logical formulae, defined by monoid action, corresponds to the composition of global CA maps associated with the logical formulae under consideration.  

As for the connection between MV logic and neural networks, Amato et al. \cite{amato2002neural} establish a correspondence between formulae in 
{\L}ukasiewicz logic and neural networks with rational weights and the clipped ReLU (CReLU) nonlinearity $\sigma(x) = \min\{1, \max\{0,x\}\}$. In \cite{di2013adding} this result is further extended to a correspondence between Riesz MV algebras and CReLU networks with real weights; this is accomplished by adding scalar multiplication over the real numbers to \Luka propositional logic.

\subsection{Outline of the paper}
\secref{sec:MVLogic} establishes that CA are in essence machines performing MV logic.
In~\secref{sec:NetworkConstruction}, we show how formulae in MV logic can be realized by (deep) ReLU networks.
The problem of extracting CA transition functions, along with underlying formulae in MV logic, from ReLU networks that have been trained on corresponding CA evolution data is addressed
in~\secref{sec:Extract}. 

\section{Many-valued logic and cellular automata}\label{sec:MVLogic}
We now turn to generalizing the correspondence between one-dimensional binary CA and Boolean logic. Specifically, we show that CA with arbitrary cellular space dimension and arbitrary state set (cardinality) can be seen as machines carrying out operations in \Luka propositional logic \cite{tarski1983logic}, a many-valued extension of Boolean logic. The corresponding algebraic counterpart is known as Chang's many-valued algebras \cite{chang1958algebraic}. The connection we uncover is built on a fundamental result in MV logic,  which states that every piecewise linear function, whose linear pieces have integer coefficients, corresponds to a formula in MV logic and vice versa. Known in the literature as McNaughton theorem \cite{mcnaughton1951theorem}, this result constitutes the equivalent of translating between binary truth tables and Boolean logical formulae.

\subsection{Many-valued algebras}
We start with a brief review, following~\cite{cignoli2013algebraic}, of the basic elements in the theory of MV algebras.

\begin{defn}\label{defn:MV algebra}
A many-valued algebra is a structure $\mathcal{A} = \langle A, \oplus, \lnot,0\rangle$ consisting of a nonempty set $A$, a constant $0 \in A$, a binary operation  $\oplus$, and a unary operation $\lnot$ satisfying the following axioms:  
\begin{subequations}
\renewcommand{\theequation}{\theparentequation.\arabic{equation}}
\begin{align}
 \label{eq:associativity}    x \oplus (y \oplus z) &= (x \oplus y) \oplus z\\ 
  \label{commutivity}   x\oplus y  &= y\oplus x\\
 \label{eq:NeutralZero}    x \oplus 0  &=x\\
   \label{eq:NOTNOT}  \lnot \lnot x  &= x\\
 \label{eq:NeutralOne}    x \oplus \lnot 0  &= \lnot 0\\
  \label{eq:wedge operations}   \lnot (\lnot x \oplus y)\oplus y  &= \lnot (\lnot y \oplus x)\oplus x.
\end{align}
\end{subequations}
\end{defn}
Specifically,~\eqref{eq:associativity}-\eqref{eq:NeutralZero} state that the structure $\langle A, \oplus, 0 \rangle$ is an abelian monoid. An MV algebra $\langle A, \oplus, \lnot,0\rangle$ is said to be nontrivial iff $|A| > 1$. On each MV algebra we can define a constant $1$ and a binary operation~$\odot$ as follows: \begin{align}
  \label{eq:1}    1 &:=\lnot 0  \\
  \label{eq:odot}    x \odot y&:= \lnot (\lnot x \oplus \lnot y).
\end{align} The ensuing identities are then direct consequences of~\defref{defn:MV algebra}:
\begin{subequations}
\renewcommand{\theequation}{\theparentequation.\arabic{equation}}
\begin{align}
  \label{eq:associativity for times} x \odot (y \odot z) &= (x \odot y) \odot z\\ 
    x\odot y  &= y\odot x\\
   x \odot 1  &=x\\     
  \label{eq:0 for times}   x \odot  0  &=  0.
\end{align}
\end{subequations}

We will frequently use the notions of MV term and term function formalized as follows.

\begin{defn}[MV term]\label{defn:MV terms}
Let $n\in \mathbb{N}$ and $S_n = \{(, ), 0,\lnot, \oplus, x_1,\ldots \hspace{-0.02cm},$ $x_n\}$. An MV term in the variables $x_1, \ldots \hspace{-0.02cm}, x_n$ is a string over $S_n$ arising from a finite number of applications of the operations $\lnot$ and $\oplus$ as follows. The elements $0$ and $x_i$, for $i=1,\ldots \hspace{-0.02cm},n$, considered as one-element strings, are MV terms.
\begin{enumerate}
  \item If the string $\tau$ is an MV term, then $\lnot \tau $ is also an MV term. 
  \item If the strings $\tau$ and $\gamma $ are MV terms, then $(\tau \oplus \gamma) $ is also an MV term.
\end{enumerate}
In the remainder of the paper, we write $\tau (x_1, \ldots \hspace{-0.02cm}, x_n)$ to emphasize that~$\tau$ is an MV term in the variables $x_1, \ldots \hspace{-0.02cm},x_n$.
\end{defn}
For instance, the following strings over $S_2 = \{(, ), 0,\lnot, \oplus, x_1,x_2\}$ are MV terms in the variables $x_1$ and $x_2$:
\[
0, x_1, x_2,\lnot 0, \lnot x_2, x_1\oplus \lnot x_2.
\]

MV terms are syntactic expressions without semantic meaning. To endow them with semantics, an underlying MV algebra must be specified.
The resulting functions will be referred to as term functions.

\begin{defn}[Term function]\label{defn:TermFunction}
Let $\tau(x_1, \ldots \hspace{-0.02cm}, x_n)$  be an MV term and
$\mathcal{A} = \langle A, \oplus, \lnot,0\rangle$ an MV algebra. The term function $\tau^\mathcal{A}:A^n\rightarrow A $ associated with $\tau$ 
under $\mathcal{A}$ is obtained by substituting, for $i=1,\ldots \hspace{-0.02cm}, n$, $a_i \in A$ for all occurrences of $x_i$ in $\tau$ and
interpreting the symbols $\oplus$ and $\lnot$ according to how they are specified in $\mathcal{A}$.
\end{defn}

We next demonstrate that Boolean algebra is an MV algebra.
\begin{defn}[Boolean algebra]\label{defn:boolean}
Consider the set $B = \{0,1\}.$ Define the binary operation $\oplus$ and the unary operation $\lnot$ on $B$ according to
\begin{equation*}
\begin{aligned}[c]
0\oplus 0 &=0\\
0\oplus 1 &=1\\
1\oplus 0 &=1 \\
1\oplus 1 &=1
\end{aligned}
\qquad \text{and} \qquad
\begin{aligned}[c]
\lnot 0 &=1\\
\lnot 1 &=0.\\
\end{aligned}
\end{equation*} It is readily verified that the resulting structure $ \mathcal{B} = \langle B, \oplus, \lnot  ,0\rangle$ satisfies the axioms in~\defref{defn:MV algebra} and is hence an MV algebra. In fact, by further defining the binary operation $\odot$ on $B$ through
\begin{equation*}
  x  \odot y := \lnot(\lnot x \oplus \lnot y)\text{,}
\end{equation*} which yields \begin{equation*}
    \begin{aligned}
        0 \odot 0 &= 0 \\
    0 \odot 1 &= 0 \\
    1 \odot 0 &= 0 \\
    1 \odot 1 &= 1, 
    \end{aligned} 
\end{equation*} we immediately see that the operations $\odot, \oplus$, and $ \lnot$ correspond to the \texttt{AND}, \texttt{OR}, and \texttt{NOT} operations, respectively, in Boolean logic with $0$ designating \texttt{False} and $1$ standing for \texttt{True}.
\end{defn}

Recall the informal discussion in Section~\ref{sec:1.2Onedim} explaining how the transition functions of binary one-dimensional CA can be expressed in terms of Boolean logic. We now formalize this observation (for arbitrary cellular space dimension) by casting it into MV algebras.

\begin{lem}\label{lemma2.6} 
Consider a CA with cellular space dimension $d \in \mathbb{N}$, neighborhood size $n \in \mathbb{N}$, state set $K = \{0,1\}$, and transition function $f:K^n \rightarrow K$ specified in the form of a (Boolean) truth table. There exists an MV term $\tau(x_1,\ldots \hspace{-0.02cm}, x_n)$ with associated term under the Boolean algebra $\mathcal{B} =  \langle  \{0,1\}, \oplus, \lnot  ,0\rangle$ satisfying \begin{equation*}
    \tau^{\mathcal{B}}(x_1,\ldots \hspace{-0.02cm}, x_n) = f(x_1,\ldots \hspace{-0.02cm}, x_n), \quad \text{ for all } (x_1,\ldots \hspace{-0.02cm}, x_n) \in \{0,1\}^n.
\end{equation*}  
\end{lem}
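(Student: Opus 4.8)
The plan is to mimic the disjunctive-normal-form construction sketched for elementary CA in Section~\ref{sec:1.2Onedim}, but now carried out entirely inside the Boolean MV algebra $\mathcal{B}$ so that the output is an honest MV term in the sense of \defref{defn:MV terms}. First I would observe that, by \defref{defn:boolean}, the operations $\oplus$, $\odot$, and $\lnot$ on $\{0,1\}$ coincide with Boolean \texttt{OR}, \texttt{AND}, and \texttt{NOT}; moreover, by~\eqref{eq:odot}, $\odot$ is definable from $\oplus$ and $\lnot$, so any expression built from $\oplus$, $\odot$, $\lnot$ can be rewritten as a genuine MV term over the alphabet $S_n$. Hence it suffices to exhibit a Boolean formula in $\oplus,\odot,\lnot$ that agrees with $f$ on $\{0,1\}^n$.

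Next I would construct that formula explicitly. Let $T := \{a \in \{0,1\}^n : f(a) = 1\}$ be the set of inputs on which $f$ is true. If $T = \varnothing$, take $\tau := 0$, whose term function under $\mathcal{B}$ is the constant $0$, matching $f$. Otherwise, for each $a = (a_1,\ldots,a_n) \in T$ define the conjunctive ``minterm'' $m_a := \ell_1^{a_1} \odot \cdots \odot \ell_n^{a_n}$, where $\ell_i^{1} := x_i$ and $\ell_i^{0} := \lnot x_i$. A direct check using the tables in \defref{defn:boolean} shows that the term function of $m_a$ evaluates to $1$ precisely at the input $a$ and to $0$ at every other point of $\{0,1\}^n$. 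Then set $\tau := \bigoplus_{a \in T} m_a$ (fixing any order on $T$; associativity~\eqref{eq:associativity} and commutativity~\eqref{commutivity} make the choice immaterial for the term function). Evaluating $\tau^{\mathcal{B}}$ at an arbitrary $b \in \{0,1\}^n$: if $f(b) = 1$ then $b \in T$, the summand $m_b$ contributes $1$, and since $0 \oplus 1 = 1 \oplus 1 = 1$ the whole disjunction is $1$; if $f(b) = 0$ then every summand is $0$ and, by $0 \oplus 0 = 0$, the disjunction is $0$. Thus $\tau^{\mathcal{B}}(b) = f(b)$ for all $b \in \{0,1\}^n$.

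Finally I would note that the cellular space dimension $d$ and the neighborhood structure play no role whatsoever in this argument: the transition function $f\colon K^n \to K$ depends only on the $n$-tuple of neighbor states, so the claim is purely a statement about Boolean functions of $n$ variables, and the DNF representation above handles all $2^{2^n}$ of them uniformly. To turn $\tau$ into a formal MV term one expands each $\odot$ via~\eqref{eq:odot} and each $\lnot x_i$ as a one-symbol-prefixed string, inserting parentheses as required by \defref{defn:MV terms}; this is a routine syntactic rewriting that I would mention but not belabor.

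I do not expect a genuine obstacle here — the result is essentially the classical ``every Boolean function has a DNF'' fact transported into MV-algebra language. The only point requiring a modicum of care is the bookkeeping that distinguishes the \emph{MV term} (a syntactic object over $S_n$, in which $\odot$ is not a primitive symbol) from its \emph{term function} under $\mathcal{B}$; making sure the final $\tau$ is literally a string built only from $(\,,\,)\,,0,\lnot,\oplus,x_1,\ldots,x_n$ is the one place where the proof must be written, rather than hand-waved. The empty case $T = \varnothing$ (equivalently, the function $f\equiv 0$) should be flagged separately since the empty disjunction is not itself an MV term, but $0$ serves in its place.
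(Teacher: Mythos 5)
Your proposal is correct and follows essentially the same route as the paper: the paper's proof simply cites the standard truth-table-to-DNF construction (Rosen, Section 12.2) and then identifies \texttt{AND}, \texttt{OR}, \texttt{NOT} with $\odot$, $\oplus$, $\lnot$, which is exactly the minterm argument you spell out in detail. Your explicit treatment of the empty case $T=\varnothing$ and of the syntactic rewriting of $\odot$ via~\eqref{eq:odot} is a welcome tightening of details the paper leaves implicit, but it is not a different method.
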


\begin{proof}
We first transform the truth table specifying the transition function $f$
into a Boolean formula \cite[Section~12.2]{rosen2007discrete}.
Identifying, in the resulting Boolean algebraic expression,
the Boolean operations 
$\texttt{AND}$, $\texttt{OR}$, and $\texttt{NOT}$ with the operations 
$\odot$, $\oplus$, and $\lnot$ in a general MV algebra yields a valid MV term
$\tau$. Converting $\tau$ into its associated term function~$\tau^{\mathcal{B}}$ according to~\defref{defn:TermFunction} establishes the result. 
\end{proof}

In order to extend~\lemref{lemma2.6} to CA with state sets of arbitrary cardinality, we need a logic that can cope with more than two truth values. We shall see that the theory of MV logic \cite{cignoli2013algebraic} provides a suitable framework and start by developing the underlying algebraic structure, namely that of MV algebras.

\subsection{MV algebras and CA}\label{2.2}
We begin with a simple example of an MV algebra. 
\begin{exmp}\label{exmp:MVAlgebraAk}
For $k\in\mathbb{N}$, $k \geq 2$, consider the CA state set \begin{equation}\label{eq:KStateSet}
    K = \Bigl\{  0, \frac{1}{k-1}, \ldots \hspace{-0.02cm}, \frac{k-2}{k-1},1 \Bigr\}
\end{equation} of cardinality $k$. Define the binary operation $\oplus$ and the unary operation~$\lnot$ on $K$ according to
\begin{align*}
    x \oplus y &= \min \{1, x+y\} \\
    \lnot x &= 1-x.
\end{align*} It is readily seen that the structure $\mathcal{A}_k:=\langle K, \oplus, \lnot,0 \rangle$ satisfies the axioms in~\defref{defn:MV algebra} and hence constitutes an MV algebra. For $k=2$, $\mathcal{A}_k$ reduces to the Boolean algebra in~\defref{defn:boolean}.
\end{exmp}
For the MV algebra $\mathcal{A}_k$ in \exmpref{exmp:MVAlgebraAk}, given an MV term $\tau$ in $n$ variables, the associated term function $\tau^{\mathcal{A}_k}: K^n \rightarrow K $ can be interpreted as a CA transition function. 
We illustrate this observation through a simple example, but first note that $\tau^{\mathcal{A}_k}$ is guaranteed to map
$K^n$ to $K$ owing to the specific structure of the state set $K$ and the operations $\oplus$ and $\lnot$.

\begin{exmp}\label{exmp:2.1}
Consider the MV algebra $\mathcal{A}_3$ according to Example~\ref{exmp:MVAlgebraAk} with the associated set $K = \{0, 1/2,1\}$.
The term function $\tau^{ \mathcal{A}_3}: K^3 \rightarrow K$ corresponding to the 
MV term $\tau = x_{-1}\oplus x_0 \oplus x_1$ equals the transition function of the $3$-color totalistic CA \cite{wolfram1983statistical} with cellular space $\mathbb{Z}$, neighborhood set $\mathcal{E}=\{-1, 0, 1\}$, state set $K$, and transition function $f$ as specified in~\tblref{tbl:totalistic} (note that $f$ depends only on the sum of the neighborhood state values).

\begin{center}
\vspace{0.5cm}
\begin{tabular}{|c |c |c |c |c |c |c |c |c |} 
 \hline
$x_{-1}+x_0+x_1$& 3 & 5/2 & 2 & 3/2 & 1 & 1/2 & 0  \\ \hline
$f(x_{-1}, x_0, x_1)$ & 1 & 1& 1 & 1 & 1 & 1/2 & 0   \\ \hline
\end{tabular}
\vspace{-0.2cm}
\captionof{table}{Transition function $f$ of the totalistic CA.} \label{tbl:totalistic} 
\end{center}
\end{exmp}

Inspired by this example, we now ask whether every CA transition function has an underlying MV term with associated term function under a suitable MV algebra equal to the transition function. Formally speaking, we seek a generalization of Lemma~\ref{lemma2.6} to the case of CA with arbitrary state set cardinality $k$. If answered in the affirmative, this would show that CA are essentially machines that perform operations in a suitable logic. It turns out that this is, indeed, the case, but we need to consider an MV algebra that accommodates state sets $K$ of arbitrary cardinality~$k$.

\begin{defn}\label{defn:MV01}
Consider the unit interval $[0, 1]$ on $\mathbb{R}$, and define $x \oplus y = \min \{1, x+y\}$ and $\lnot x = 1-x$, for  $x, y \in [0,1]$. It can be verified that the structure $\mathcal{I} = \langle [0,1], \oplus, \lnot,0 \rangle $ is an MV algebra. In particular, $\mathcal{I}$ constitutes the algebraic counterpart of \Luka propositional logic~\cite{chang1958algebraic}. We further define the operation $x  \odot y := \lnot(\lnot x \oplus \lnot y) = \max\{0,x+y-1\}$.
\end{defn}

The MV algebra $\mathcal{I}$ in \defref{defn:MV01} is the so-called standard MV algebra. 
As shown in \cite[Section~5]{chang1958algebraic}, \cite{chang1959new}, an equation holds in every MV algebra iff it holds in the standard MV algebra $\mathcal{I}$, endowing $\mathcal{I}$ with universality.
Next, recall that for the binary state set $\{0,1\}$, in the proof of Lemma~\ref{lemma2.6} we started by converting the binary truth table specifying
the transition function $f$ into a formula in Boolean logic. In MV logic truth tables are given by mappings from $[0,1]^n$ to $[0,1]$.
While in the Boolean case,
every truth table can be cast into a formula in Boolean logic \cite[Section 12.2]{rosen2007discrete}, this does not
hold for the MV algebra $\mathcal{I}$.
The McNaughton theorem, stated next, explicitly characterizes the class of functions from $[0,1]^{n}$ to $[0,1]$ that have underlying MV terms $\tau$. Here, ``underlying'' means that the term function associated with $\tau$ under $\mathcal{I}$ equals the function under consideration.

\begin{thm}\label{them:McNaughtonTheorem}[McNaughton theorem \cite{mcnaughton1951theorem}]
Consider the MV algebra $\mathcal{I} = \langle [0,1], \oplus, \lnot,0 \rangle $ in~\defref{defn:MV01}. Let $n\in \mathbb{N}$. For a function $f_{\text{c}}: [0,1]^n \rightarrow [0,1]$ to have an associated MV term $\tau$ such that $\tau ^{\mathcal{I}} = f_{\text{c}}$ on $[0,1]^n$, it is necessary and sufficient that
\begin{enumerate}
\item $f_{\text{c}}$ is continuous with respect to the natural topology on $[0,1]^n$,
    \item there exist linear functions $p_1, \ldots \hspace{-0.02cm}, p_\ell$ with integer coefficients, i.e., 
    \begin{equation*}
        p_j(x_1, \ldots \hspace{-0.02cm}, x_{n}) = m_{j1}x_1+\cdots+m_{jn}x_{n}+b_j, \quad  j = 1,\ldots \hspace{-0.02cm},\ell,
    \end{equation*} where $ m_{j1},\ldots \hspace{-0.02cm}, m_{jn},b_j \in \mathbb{Z}$, for $j=1,\ldots \hspace{-0.02cm}, \ell$, such that for every $x \in [0,1]^n$, there is a $j \in \{1,\ldots \hspace{-0.02cm}, \ell\}$ with $ f_{\text{c}}(x) = p_j(x)$.
\end{enumerate} Functions satisfying these conditions are called McNaughton functions. 
\end{thm}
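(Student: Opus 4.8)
The plan is to prove the two directions separately. The ``only if'' part is a routine structural induction on the MV term, using that the operations of $\mathcal{I}$ preserve continuity and the property ``piecewise linear with integer-coefficient pieces''. Concretely, I would induct along \defref{defn:MV terms}: the base terms $0$ and $x_i$ yield, under $\mathcal I$, the constant $0$ and the projection $x\mapsto x_i$, which are McNaughton functions; if $g=\sigma^{\mathcal I}$ is a McNaughton function then so is $(\lnot\sigma)^{\mathcal I}=1-g$ (each linear piece $p_j$ is replaced by $1-p_j$, still with integer coefficients); and if $g=\sigma^{\mathcal I}$ and $h=\gamma^{\mathcal I}$ are McNaughton functions then $(\sigma\oplus\gamma)^{\mathcal I}=\min\{1,g+h\}$ is continuous and, on a common refinement of the two polyhedral subdivisions cut along the hyperplane $\{g+h=1\}$, equals on each cell either $g+h$ (integer-coefficient linear) or the constant $1$; hence it is again a McNaughton function. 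The range requirement $f_{\text{c}}([0,1]^n)\subseteq[0,1]$ is part of the hypothesis and holds automatically here since $\oplus$ and $\lnot$ preserve $[0,1]$.

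For the ``if'' direction, let $f_{\text{c}}$ be a McNaughton function with integer-coefficient linear pieces $p_1,\dots,p_\ell$. First I would record the closure properties of the class $\mathcal M_n$ of functions $[0,1]^n\to[0,1]$ realizable as term functions under $\mathcal I$: it contains $0$, $1=\lnot 0$, and every projection $x_i$; it is closed under $\oplus$, $\lnot$, $\odot$; and hence, since one checks directly that $x\vee y:=(x\odot\lnot y)\oplus y=\max\{x,y\}$ and $x\wedge y:=\lnot(\lnot x\vee\lnot y)=\min\{x,y\}$, it is closed under the lattice operations $\max$ and $\min$. Next I would invoke the classical fact that a continuous piecewise linear function is a lattice polynomial of its linear pieces: there exist $S_1,\dots,S_m\subseteq\{1,\dots,\ell\}$ with $f_{\text{c}}=\bigvee_i\bigwedge_{j\in S_i}p_j$ (for each top-dimensional cell $\sigma$ on which $f_{\text{c}}=p_{j(\sigma)}$, take $S_\sigma=\{j:p_j\ge p_{j(\sigma)}\text{ on }\sigma\}$). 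Because $f_{\text{c}}$ maps into $[0,1]$, a short pointwise argument replaces the $p_j$ by their clampings $\hat p_j:=(0\vee p_j)\wedge 1$: at a point $x$, the indices $i^*$ and $j^*\in S_{i^*}$ attaining the outer and inner optima give $f_{\text{c}}(x)=p_{j^*}(x)\in[0,1]$, so $\hat p_{j^*}(x)=f_{\text{c}}(x)$ and $\hat p_j(x)\ge f_{\text{c}}(x)$ for the remaining $j\in S_{i^*}$, while for every $i$ the minimizer $j_0$ of $p_j(x)$ over $S_i$ satisfies $\hat p_{j_0}(x)\le f_{\text{c}}(x)$; hence $f_{\text{c}}=\bigvee_i\bigwedge_{j\in S_i}\hat p_j$. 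By the closure properties, it then suffices to show that each $\hat p_j\in\mathcal M_n$.

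This last claim is the crux, and it is where the integer-coefficient hypothesis is genuinely used: for a linear $p(x)=\sum_i m_ix_i+b$ with $m_i,b\in\mathbb Z$, show $\hat p=(0\vee p)\wedge 1\in\mathcal M_n$. The naive idea of realizing $\min\{1,\max\{0,p\}\}$ by iterating $\oplus$ and $\ominus$ over the monomials of $p$ fails, because the truncations at $0$ and $1$ are then applied prematurely. The clean route is via unimodular triangulations and Schauder hats: fix a unimodular triangulation $\Delta$ of $[0,1]^n$ refining the polyhedral subdivision induced by $\hat p$ (one exists by a standard desingularization argument), and consider, for each vertex $v$ of $\Delta$, the Schauder hat $h_v$, i.e.\ the function that is affine on each simplex of $\Delta$, equals $1/d_v$ at $v$ (with $d_v$ the denominator of $v$), and equals $0$ at every other vertex. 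Unimodularity pins down the denominators so that one can verify directly that $h_v\in\mathcal M_n$, and every McNaughton function --- in particular $\hat p$ --- is a lattice (equivalently, truncated-sum) combination of finitely many such hats. Converting the resulting MV term into its term function under $\mathcal I$ (\defref{defn:TermFunction}) then completes the argument. I expect this clamped-piece lemma, together with setting up the triangulation/Schauder-basis machinery, to be the main obstacle; the remaining steps are elementary manipulations with the operations of $\mathcal I$ and with the lattice identities.
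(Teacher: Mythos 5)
The paper itself offers no proof of this statement: Theorem~\ref{them:McNaughtonTheorem} is quoted as a classical result of McNaughton with a citation, so there is no in-paper argument to compare yours against line by line. That said, your outline is sound, and its skeleton coincides with machinery the paper develops elsewhere for other purposes: the ``only if'' induction is routine and correct; the max-min lattice decomposition $f_{\text{c}}=\bigvee_i\bigwedge_{j\in S_i}p_j$ is exactly \lemref{lem:piecewise}; your pointwise clamping argument replacing $p_j$ by $\hat p_j=(0\vee p_j)\wedge 1$ checks out (the outer-max witness gives the lower bound, the inner-min witness the upper bound); and the identities $x\vee y=(x\odot\lnot y)\oplus y$, $x\wedge y=\lnot(\lnot x\vee\lnot y)$ reduce everything to showing each $\hat p_j$ is a term function. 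The one place I would push back is the crux step. Reaching for unimodular triangulations and Schauder hats is heavier than necessary here, and as written the appeal to ``every McNaughton function is a lattice combination of hats'' is dangerously close to the theorem you are trying to prove, so you would have to develop the starring/desingularization argument in full to avoid circularity. The elementary route for precisely this sub-claim is the Rose--Mundici induction on $\sum_i|m_i|$ via the identity $\sigma(f)=(\sigma(f_\circ)\oplus x_1)\odot\sigma(f_\circ+1)$, where $f_\circ$ lowers the largest coefficient by one; this is exactly what the paper records as \lemref{lem:extractmv} (proved in Appendix~\ref{app:proofLem4.4}) and illustrates in \exmpref{exmp:Extraction}, and it strips variables out of $\sigma(m_1x_1+\cdots+m_nx_n+b)$ one coefficient at a time until only constants remain, yielding an explicit MV term with no triangulation machinery. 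Substituting that lemma for your Schauder-hat step turns your sketch into a complete and essentially self-contained proof of the ``if'' direction.
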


We now demonstrate how the McNaughton theorem can be employed to decide whether a given CA transition function has an underlying MV term. Since term functions under $\mathcal{I}$ are continuous piecewise linear functions according to~\thmref{them:McNaughtonTheorem}, we start by linearly interpolating CA transition functions $f:K^n\rightarrow K$ to continuous piecewise linear functions $f_{\text{c}}:[0,1]^n\rightarrow [0,1]$. The following simple example illustrates the approach we pursue. 

\begin{exmp}\label{example McNaughton is not enough}
Consider a one-dimensional CA with cellular space $\mathbb{Z}$, neighborhood set $\mathcal{E} = \{ 0\}$, state set $K = \{0, 1/3, 2/3,1\}$, and transition function $f$ according to 
    \vspace{0.3cm}
    \begin{center}
\begin{tabular}{|c |c |c |c |c |} 
 \hline
$x$& $0$ & $1/3$ & $2/3$ & $1$  \\ \hline
$f(x)$ & $0$ & $1$ & $1$ & $0$  \\ \hline
\end{tabular}
\end{center}
 \vspace{0.5cm}
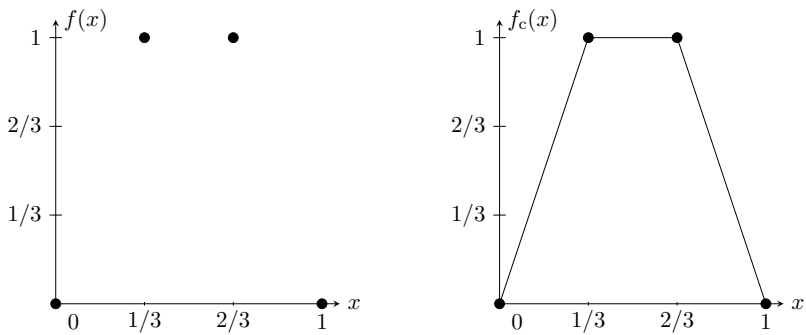
\begin{figure}[h]
\centering
\resizebox{11cm}{!}{

\begin{tikzpicture}[x=1.5cm, y=1.5cm, >=stealth, scale=1, transform shape]
    \draw[->] (0,0) -- (0,3.2) node[anchor=west] {$f(x)$};
    \draw[->] (0,0) -- (3.2,0) node[anchor= west] {$x$};
    \draw (0.07,1) -- (-0.07,1) node[anchor= east] {$1/3$};
     \draw (0.07,2) -- (-0.07,2) node[anchor= east] {$2/3$};
      \draw (0.07,3) -- (-0.07,3) node[anchor= east] {$1$};

    \draw (2,-0.02) -- (2,0.02) node[anchor= north] {$2/3$};
      \draw (1,-0.02) -- (1,0.02) node[anchor= north] {$1/3$};
    \draw (3,-0.02) -- (3,0.02) node[anchor= north] {}; 
  \node at (3,-0.2) {$1$};
    \node at (0.2,-0.3 cm) {$0$};
  \draw[black,fill=black] (0,0) circle (.5ex);
     \draw[black,fill=black] (1,3) circle (.5ex);
     \draw[black,fill=black] (2,3) circle (.5ex);
     \draw[black,fill=black] (3,0) circle (.5ex);

     \draw[->] (5,0) -- (5,3.2) node[anchor=west] {$f_{\text{c}}(x)$};
    \draw[->] (5,0) -- (8.2,0) node[anchor= west] {$x$};
    \draw (5.07,1) -- (4.93,1) node[anchor= east] {$1/3$};
     \draw (5.07,2) -- (4.93,2) node[anchor= east] {$2/3$};
      \draw (5.07,3) -- (4.93,3) node[anchor= east] {$1$};
    \draw (7,-0.02) -- (7,0.02) node[anchor= north] {$2/3$};
      \draw (6,-0.02) -- (6,0.02) node[anchor= north] {$1/3$};
    \draw (8,-0.02) -- (8,0.02) node[anchor= north] {}; 
    \node at (8,-0.2) {$1$};
    \node at (5.2,-0.3 cm) {$0$};
  \draw[black,fill=black] (5,0) circle (.5ex);
     \draw[black,fill=black] (6,3) circle (.5ex);
     \draw[black,fill=black] (7,3) circle (.5ex);
     \draw[black,fill=black] (8,0) circle (.5ex); 
      \draw (5,0) -- (6,3) node[anchor= north] {};
    \draw (7,3) -- (6,3) node[anchor= north] {};
    \draw (7,3) -- (8,0) node[anchor= north] {};

\end{tikzpicture}
}
\caption{The transition function $f$ and the associated function $ f_{\text{c}}$ in \exmpref{example McNaughton is not enough}.}\label{fig:examplefx}
\end{figure}

The continuous piecewise linear function $ f_{\text{c}}:[0,1] \rightarrow [0,1]$ obtained by interpolating $f$ is given by
\begin{equation}\label{eq:egf}
 f_{\text{c}}(x)=
    \begin{cases}
    3x, &0\leq x< 1/3 \\
    1,&1/3\leq x< 2/3 \\
     -3x+3,&2/3\leq x\leq 1 \\
    \end{cases}.
\end{equation}
As $f_{\text{c}}$ in (\ref{eq:egf}) has exclusively integer coefficients and is piecewise linear and continuous by construction, it
satisfies Conditions 1 and 2 in~\thmref{them:McNaughtonTheorem}. Hence, there is an underlying MV term,
shown in Appendix~\ref{app:A} to be given by \begin{equation*}
   \tau=  (x\oplus x\oplus x) \wedge  \lnot 0 \wedge (x\odot x\odot x),
\end{equation*} 
where, for brevity, we write $x \wedge y = (x\oplus \lnot y)\odot y$. 

However, for the same cellular space and state set, a slightly different transition function $g$, given by
\vspace{0.3cm}\begin{center}
\begin{tabular}{|c |c |c |c |c |} 
 \hline
$x$& $0$ & $1/3$ & $2/3$ & $1$  \\ \hline
$g(x)$ & $0$ & $1/3$ & $1$ & $0$  \\ \hline
\end{tabular}
\end{center}
\vspace{0.5cm}
yields the associated piecewise linear function 
\begin{equation*}
 g_{\text{c}}(x)=
    \begin{cases}
    x, &0\leq x< 1/3 \\
    2x-1/3,&1/3\leq x< 2/3 \\
     -3x+3,&2/3\leq x\leq 1 \\
    \end{cases},
\end{equation*}
which fails to satisfy Condition 2 in~\thmref{them:McNaughtonTheorem}, as one of its linear pieces exhibits a noninteger coefficient. As the McNaughton theorem is iff, we can conclude that $g_{\text{c}}$ does not have an underlying MV term $\tau$ such that~$\tau^{\mathcal{I}} =  g_{\text{c}}$.
\begin{figure}[h]
\centering
\resizebox{11cm}{!}{

\begin{tikzpicture}[x=1.5cm, y=1.5cm, >=stealth, scale=1, transform shape]

  
     \draw[->] (0,-5) -- (0,-1.8) node[anchor=west] {$g(x)$};
    \draw[->] (0,-5) -- (3.2,-5) node[anchor= west] {$x$};
   \draw (0.07,-4) -- (-0.07,-4) node[anchor= east] {$1/3$};
     \draw (0.07,-3) -- (-0.07,-3) node[anchor= east] {$2/3$};
      \draw (0.07,-2) -- (-0.07,-2) node[anchor= east] {$1$};

    \draw (2,-5.02) -- (2,-4.98) node[anchor= north] {$2/3$};
      \draw (1,-5.02) -- (1,-4.98) node[anchor= north] {$1/3$};
    \draw (3,-5.02) -- (3,-4.98) node[anchor= north] {}; 
 \node at (3,-5.2) {$1$};  
    \node at (0.2,-5.2) {$0$};
  \draw[black,fill=black] (0,-5) circle (.5ex);
     \draw[black,fill=black] (1,-4) circle (.5ex);
     \draw[black,fill=black] (2,-2) circle (.5ex);
     \draw[black,fill=black] (3,-5) circle (.5ex); 
   
     \draw[->] (5,-5) -- (5,-1.8) node[anchor=west] {$g_{\text{c}}(x)$};
    \draw[->] (5,-5) -- (8.2,-5) node[anchor= west] {$x$};
    \draw (5.07,-4) -- (4.93,-4) node[anchor= east] {$1/3$};
     \draw (5.07,-3) -- (4.93,-3) node[anchor= east] {$2/3$};
      \draw (5.07,-2) -- (4.93,-2) node[anchor= east] {$1$};
    \draw (7,-5.02) -- (7,-4.98) node[anchor= north] {$2/3$};
      \draw (6,-5.02) -- (6,-4.98) node[anchor= north] {$1/3$};
    \draw (8,-5.02) -- (8,-4.98) node[anchor= north] {}; 
 \node at (8,-5.2) {$1$};   
    \node at (5.2,-5.2) {$0$};
  \draw[black,fill=black] (5,-5) circle (.5ex);
     \draw[black,fill=black] (6,-4) circle (.5ex);
     \draw[black,fill=black] (7,-2) circle (.5ex);
     \draw[black,fill=black] (8,-5) circle (.5ex); 
      \draw (5,-5) -- (6,-4) node[anchor= north] {};
    \draw (7,-2) -- (6,-4) node[anchor= north] {};
    \draw (7,-2) -- (8,-5) node[anchor= north] {};
\end{tikzpicture}
}
\caption{The transition function $g$ and the continuous function $ g_{\text{c}}$ in \exmpref{example McNaughton is not enough}.}\label{fig:examplegx}
\end{figure}
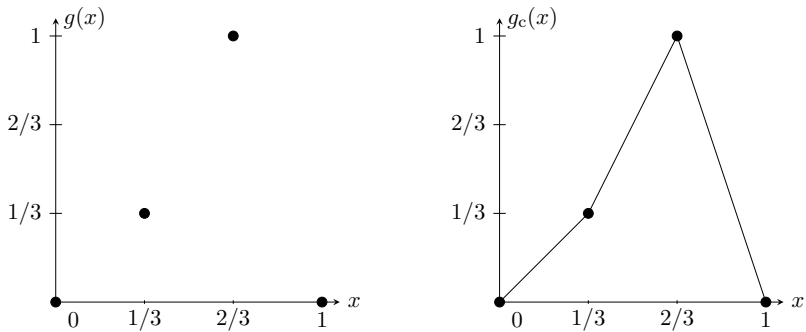
\end{exmp}

Example~\ref{example McNaughton is not enough} manifests missing structure for the MV algebra $\mathcal{I}$ to allow universal conversion from CA transition functions to MV terms. Concretely, inspection of the state set $K$ shows that we would need a McNaughton theorem that allows rational coefficients in Condition 2 of~\thmref{them:McNaughtonTheorem}. It turns out that so-called divisible many-valued (DMV) algebras resolve this issue.

\begin{defn}\label{defn:DivisibleMV01}
Consider the MV algebra $\mathcal{I} = \langle [0,1], \oplus, \lnot,0 \rangle $ in~\defref{defn:MV01}. Define the family of unary operations $\{\delta_i:[0,1]\rightarrow [0,1] \}_{i \in \mathbb{N}}$ according to \begin{equation*}
    \delta_i x = \frac{1}{i}x,\quad x\in[0,1], \quad \text{ for all } i\in \mathbb{N}.
\end{equation*} 
It is easily verified that the structure $\mathcal{I}_{\text{d}}=\langle [0,1], \oplus, \lnot,0,  \{\delta_i \}_{i \in \mathbb{N}}\rangle$ is a DMV algebra \cite{gerla2001divisible}. In particular, it constitutes the algebraic counterpart of Rational \Luka logic \cite{gerla2001divisible}. 
\end{defn}

The following result, often referred to as rational McNaughton theorem, explicitly characterizes the class of term functions under the DMV algebra~$\mathcal{I}_{\dt}$.

\begin{thm}\label{rationalMcNaughtonTheorem}[Rational McNaughton theorem \cite{amato2005neural}]
Consider the DMV algebra $\Ical_{\dt}=\langle [0,1], \oplus, \lnot,0,  \{\delta_i \}_{i \in \mathbb{N}}\rangle$ in~\defref{defn:DivisibleMV01}. Let $n \in \mathbb{N}$. For a function~$f_{\text{c}}: [0,1]^n \rightarrow [0,1]$ to have an associated DMV term $\tau$ such that $\tau^{\Ical_{\dt}} = f_{\text{c}}$ on $[0,1]^n$, it is necessary and sufficient that 
\begin{enumerate}
\item $f_{\text{c}}$ is continuous with respect to the natural topology on $[0,1]^n$,
    \item there exist linear functions $p_1, \ldots \hspace{-0.02cm}, p_\ell$ with rational coefficients, i.e.,
    \begin{equation*}
        p_j(x_1, \ldots \hspace{-0.02cm}, x_{n}) = m_{j1}x_1+\cdots+m_{jn}x_{n}+b_j, \quad j=1,\ldots \hspace{-0.02cm},\ell,
    \end{equation*} where $b_j, m_{j1},\ldots \hspace{-0.02cm}, m_{jn} \in \mathbb{Q}$, for $j=1,\ldots \hspace{-0.02cm}, \ell$, such that for every $x \in [0,1]^n$, there is a $j \in \{1,\ldots \hspace{-0.02cm}, \ell\}$ with $f_{\text{c}}(x) = p_j(x)$.
\end{enumerate}
\end{thm}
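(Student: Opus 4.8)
The plan is to derive both implications from the ordinary McNaughton theorem (\thmref{them:McNaughtonTheorem}), the only extra ingredient being the division operators $\delta_i$ that distinguish $\Ical_{\dt}$ from $\Ical$.

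\emph{Necessity} I would establish by structural induction on the DMV term $\tau$, showing that $\tau^{\Ical_{\dt}}$ is continuous and has finitely many affine pieces with rational coefficients. The base cases $\tau = 0$ and $\tau = x_i$ are immediate (a single affine piece with integer, hence rational, coefficients). For the inductive step, note that on $[0,1]$ the connectives act as $\lnot : h \mapsto 1-h$, $\oplus : (h_1, h_2) \mapsto \min\{1, h_1 + h_2\}$, and $\delta_i : h \mapsto h/i$; each preserves continuity, and each turns a finite list of rational affine pieces into another one --- for $\lnot$ replace every piece $q$ by $1-q$, for $\delta_i$ replace $q$ by $q/i$, and for $\oplus$ adjoin the constant piece $1$ together with all pairwise sums of pieces of $h_1$ and $h_2$ (at any point $h_1(x) + h_2(x)$ equals one such sum, and $\min\{1,\cdot\}$ returns either that value or $1$). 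Hence the property propagates up the term.

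For \emph{sufficiency}, let $f_{\text{c}} : [0,1]^n \rightarrow [0,1]$ be continuous with rational affine pieces $p_1, \ldots, p_\ell$, and let $d \in \mathbb{N}$ be a common denominator of all their coefficients. Then $G := d\, f_{\text{c}}$ is continuous, has the \emph{integer}-coefficient pieces $d\,p_1, \ldots, d\,p_\ell$, and maps $[0,1]^n$ into $[0,d]$. The key step is to cut $G$ into $d$ genuine (i.e.\ $[0,1]$-valued) McNaughton functions,
\begin{equation*}
G_k(x) := \max\bigl\{\, 0,\ \min\{1,\, G(x) - (k-1)\} \,\bigr\}, \qquad k = 1, \ldots, d,
\end{equation*}
each of which is continuous, $[0,1]$-valued, and has pieces drawn from $\{0,1\} \cup \{\, d\,p_j - (k-1) : j = 1, \ldots, \ell \,\}$, hence with integer coefficients. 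By \thmref{them:McNaughtonTheorem} there is an MV term $\tau_k$ with $\tau_k^{\Ical} = G_k$ on $[0,1]^n$, and since the $\{\oplus, \lnot, 0\}$-reduct of $\Ical_{\dt}$ is $\Ical$, also $\tau_k^{\Ical_{\dt}} = G_k$. A short telescoping argument (using $0 \leq G(x) \leq d$) gives $\sum_{k=1}^d G_k(x) = \min\{d, G(x)\} = G(x) = d\, f_{\text{c}}(x)$ for every $x$. Therefore, setting
\begin{equation*}
\tau := \delta_d \tau_1 \oplus \cdots \oplus \delta_d \tau_d,
\end{equation*}
each summand evaluates under $\Ical_{\dt}$ to $G_k/d \in [0, 1/d]$, so the $d$ summands sum to $\frac{1}{d} \sum_{k=1}^d G_k(x) = f_{\text{c}}(x) \leq 1$; the truncation in $\oplus$ thus never triggers and $\tau^{\Ical_{\dt}} = f_{\text{c}}$ on $[0,1]^n$, as required.

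The main obstacle is this second direction, and within it the delicate point is the interaction between the truncation built into $\oplus$ and the rescaling by $\delta_d$: the slicing of $G$ must be arranged so that, after division by $d$, all partial sums stay in $[0,1]$, which is precisely what makes $\oplus$ behave like ordinary addition and lets the telescoping identity recover $f_{\text{c}}$ exactly. This rests on $f_{\text{c}}$ being $[0,1]$-valued (so that the common denominator $d$ also bounds $G$, namely $G \leq d$); were that bound dropped, the identity $\sum_k G_k = \min\{d, G\}$ would no longer collapse to $G$. That \thmref{them:McNaughtonTheorem} is itself nontrivial --- its proof goes through Schauder hats and unimodular triangulations --- is not an obstacle here, since we are free to invoke it.
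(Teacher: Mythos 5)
Your proposal is correct. Note, however, that the paper itself does not prove this statement at all: \thmref{rationalMcNaughtonTheorem} is quoted as a known result with a citation to the literature, so there is no in-paper argument to compare against. What you have written is a genuine, self-contained derivation of the rational case from the integer case, and it is sound: the structural induction for necessity correctly uses only the pointwise form of Condition~2 (at each point the value equals one of finitely many rational affine functions), and the sufficiency direction --- clearing denominators to get $G = d f_{\text{c}}$ with integer-coefficient pieces, slicing it into the unit-range McNaughton functions $G_k = \max\{0,\min\{1, G-(k-1)\}\}$, invoking \thmref{them:McNaughtonTheorem} on each slice, and reassembling via $\delta_d\tau_1 \oplus \cdots \oplus \delta_d\tau_d$ --- is exactly the standard reduction used in the cited literature. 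You correctly identify and handle the one delicate point, namely that the rescaled slices sum to at most $1$ so the truncation in $\oplus$ never fires and the telescoping identity $\sum_k G_k = G$ survives the division by $d$. The only thing worth adding for completeness is the (routine) observation that $\delta_d\tau_k$ is a legitimate DMV term and that the $\{\oplus,\lnot,0\}$-reduct argument lets you carry $\tau_k^{\Ical} = G_k$ over to $\Ical_{\dt}$, both of which you already state. In short: where the paper outsources this theorem, your argument supplies a correct proof of it.
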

The central result we have been working towards in this section now follows readily.

\begin{prop}\label{them:rationalMcnaughton} 
\hspace*{-1mm} Consider a CA with cellular space dimension $d$, neighborhood size $n $, state set $K = \{0, 1/(k-1),\ldots \hspace{-0.02cm}, (k-2)/(k-1), 1\}$ of cardinality $k\in \mathbb{N}, k\geq 2$, and transition function $f: K^{n} \rightarrow K$. There exists a DMV term $\tau(x_1,\ldots \hspace{-0.02cm}, x_n)$ with associated term function under the DMV algebra $\mathcal{I}_{\text{d}}$ satisfying \begin{equation*}
    \tau^{\Ical_{\dt}}(x_1,\ldots \hspace{-0.02cm}, x_n) = f(x_1,\ldots \hspace{-0.02cm}, x_n), \quad \text{ for all } (x_1,\ldots \hspace{-0.02cm}, x_n) \in K^n.
\end{equation*} 
\end{prop}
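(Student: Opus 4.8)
The plan is to reduce the claim to the rational McNaughton theorem (\thmref{rationalMcNaughtonTheorem}) by making precise the ``linear interpolation'' of $f$ announced just before \exmpref{example McNaughton is not enough}, now for arbitrary neighborhood size $n$. The first step is to fix a triangulation of the cube $[0,1]^n$ that is compatible with the grid $K^n$: subdivide $[0,1]^n$ into the $(k-1)^n$ axis-aligned subcubes of side $1/(k-1)$, whose vertices are exactly the points of $K^n$, and triangulate each subcube by the Freudenthal--Kuhn pattern into $n!$ non-degenerate $n$-simplices. The resulting collection $\mathcal{T}$ of simplices triangulates $[0,1]^n$, has all its vertices in $K^n$, and is face-to-face (any two of its simplices meet in a common, possibly empty, face) --- the last property being the point of using the Freudenthal--Kuhn subdivision, as it is what will guarantee global continuity of the interpolant.

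Next I define the interpolant $f_{\text{c}}$. For each $\sigma \in \mathcal{T}$ with vertices $v_0,\ldots,v_n \in K^n$, let $f_\sigma:\mathbb{R}^n \to \mathbb{R}$ be the unique affine function with $f_\sigma(v_i)=f(v_i)$, $i=0,\ldots,n$, which exists and is unique by non-degeneracy of $\sigma$, and set $f_{\text{c}}(x):=f_\sigma(x)$ for $x \in \sigma$. Since the restriction of an affine function to a face of $\sigma$ is the unique affine interpolant of its values at the vertices of that face, two simplices sharing a face induce the same function on it; hence $f_{\text{c}}$ is well defined and continuous on $[0,1]^n$ (Condition 1 of \thmref{rationalMcNaughtonTheorem}), and $f_{\text{c}}=f$ on $K^n$ by construction. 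Moreover, for $x \in \sigma$ with barycentric coordinates $\lambda_0,\ldots,\lambda_n \ge 0$, $\sum_i\lambda_i=1$, one has $f_{\text{c}}(x)=\sum_i \lambda_i f(v_i)$, a convex combination of elements of $K\subseteq[0,1]$, so $f_{\text{c}}$ indeed maps $[0,1]^n$ into $[0,1]$. For Condition 2, write $f_\sigma(x)=m_1x_1+\cdots+m_nx_n+b$; the interpolation conditions form a linear system in $(m_1,\ldots,m_n,b)$ whose coefficient matrix (the coordinates of $v_0,\ldots,v_n$ together with a column of ones) is invertible and has entries in $\frac{1}{k-1}\mathbb{Z}$, with right-hand sides $f(v_i)\in K\subseteq\mathbb{Q}$; hence $m_1,\ldots,m_n,b\in\mathbb{Q}$. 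As $\mathcal{T}$ is finite, $f_{\text{c}}$ is a continuous function pieced together from finitely many affine maps with rational coefficients, so Condition 2 holds as well.

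By \thmref{rationalMcNaughtonTheorem} there is then a DMV term $\tau(x_1,\ldots,x_n)$ with $\tau^{\mathcal{I}_{\text{d}}}=f_{\text{c}}$ on $[0,1]^n$; restricting to $K^n\subseteq[0,1]^n$ and using $f_{\text{c}}|_{K^n}=f$ yields $\tau^{\mathcal{I}_{\text{d}}}(x_1,\ldots,x_n)=f(x_1,\ldots,x_n)$ for all $(x_1,\ldots,x_n)\in K^n$, which is the assertion. The only genuinely non-routine ingredient is the construction in the first two steps --- committing to a triangulation whose vertex set lies in $K^n$ and checking that the piecewise affine interpolant is globally continuous; the rationality of the linear pieces and the final invocation of the rational McNaughton theorem are then immediate. (One could instead appeal to a generic existence result for piecewise linear interpolants over simplicial grids, but the explicit linear-system argument above is the cleanest way to see that the pieces carry the rational coefficients that \thmref{rationalMcNaughtonTheorem} requires.)
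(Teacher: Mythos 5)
Your proposal is correct and follows essentially the same route as the paper: linearly interpolate $f$ to a continuous piecewise linear function $f_{\text{c}}$ on $[0,1]^n$ with rational coefficients and then invoke \thmref{rationalMcNaughtonTheorem}. The paper's own proof of this proposition is terser and simply asserts that the interpolation has rational linear pieces; the explicit simplicial subdivision and linear-system argument you supply is precisely the construction the paper carries out later, in the proof of \lemref{lma:refinefc}, where it is even sharpened to integer weights and biases in $\tfrac{1}{k-1}\mathbb{Z}$.
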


\begin{proof}
We start by linearly interpolating the CA transition funtion $f: K^n \rightarrow K$ to a continuous function $f_{\text{c}}: [0,1]^n\rightarrow [0,1]$. As all the points \begin{equation*}
    \left(x_1,\ldots \hspace{-0.02cm}, x_n, f(x_1,\ldots \hspace{-0.02cm}, x_n)\right)\in K^{n+1}
\end{equation*} have exclusively rational coordinates, the linear pieces of the interpolated function~$f_{\text{c}}$ have rational coefficients as well. Application of~\thmref{rationalMcNaughtonTheorem} to $f_{\text{c}}$ then yields a DMV term $\tau$ satisfying $\tau^{\Ical_{\dt}} = f_{\text{c}}$ on~$[0,1]^n$. This implies $\tau^{\Ical_{\dt}} = f_{\text{c}} = f$ on $K^n$, as desired.
\end{proof}

\section{Recurrent neural networks realize CA dynamics}\label{sec:NetworkConstruction}
In this section, we show that recurrent neural networks can realize the overall dynamical behavior of CA. This will be accomplished in two steps. First, we demonstrate that ReLU  networks naturally realize operations in DMV algebras, which, in turn, leads to a universal realization theorem for CA transition functions. The ReLU network realizing the CA transition function is then embedded into an RNN that emulates the dynamics of the CA.

\subsection{DMV algebras and ReLU neural networks}\label{sec:ReLULearnMV}
We start by formally defining ReLU neural networks \cite{elbrachter2021deep}.
\begin{defn}[ReLU neural network]\label{defn:ReLU}
Let $L \in \mathbb{N}$ and $N_0,$ $N_1, \ldots \hspace{-0.02cm},$ $N_L \in \mathbb{N}$. A ReLU neural network is a map $\Phi: \mathbb{R}^{N_0} \rightarrow \mathbb{R}^{N_L} $ given by 
\[ \Phi= \begin{cases} 
      W_1, & L=1 \\
      W_2 \circ \rho \circ W_1, & L=2 \\
      W_L \circ \rho \circ W_{L-1} \circ \rho \circ \cdots \circ \rho  \circ W_1, & L \geq 3
   \end{cases} \text{,}
\] where, for $\ell \in \{ 1,2,\ldots \hspace{-0.02cm}, L\}$, $W_\ell: \mathbb{R}^{N_{\ell-1}}\rightarrow \mathbb{R}^{N_\ell}, W_\ell(x):= A_\ell x+b_\ell $ are affine transformations with weight matrices $A_\ell = \mathbb{R}^{N_\ell\times N_{\ell-1}}$ and bias vectors $b_\ell \in \mathbb{R}^{N_\ell}$, and the ReLU activation function $\rho: \mathbb{R}\rightarrow \mathbb{R},\rho\hspace{0.02cm}(x):=\max\{0,x\}$ acts component-wise. We denote by $\mathcal{N}_{d, d'}$ the set of ReLU neural networks of input dimension $N_0=d$ and output dimension $N_L = d'$. The number of layers of the network $\Phi$, denoted by $\mathcal{L}(\Phi)$, is defined to equal $L$.
\end{defn}

It follows immediately from \defref{defn:ReLU} that ReLU networks realize continuous piecewise linear functions. We shall frequently make use of basic ReLU network constructions, namely compositions \cite[Lemma II.3]{elbrachter2021deep}, augmentations \cite[Lemma II.4]{elbrachter2021deep}, and parallelizations \cite[Lemma A.7]{elbrachter2021deep}, collected here for completeness. The proofs of the corresponding results~\lemref{lem:ReLUConcatenating}-\ref{lem:parallelizing} provided in \cite{elbrachter2021deep} present explicit network constructions that we will occasionally refer to.

\begin{lem}[Composition of ReLU networks \cite{elbrachter2021deep}]\label{lem:ReLUConcatenating}
Let $d_1, d_2, d_3 \in \mathbb{N}$, $\Phi_1 \in \mathcal{N}_{d_1,d_2}$, and $\Phi_2 \in \mathcal{N}_{d_2,d_3}.$ There exists a network $\Psi \in \mathcal{N}_{d_1,d_3}$ with $\mathcal{L}(\Psi) = \mathcal{L}(\Phi_1)+\mathcal{L}(\Phi_2)$, satisfying \begin{equation*}
    \Psi(x) = (\Phi_2\circ \Phi_1)(x), \quad \text{ for all } x \in \mathbb{R}^{d_1}.
\end{equation*}
\end{lem}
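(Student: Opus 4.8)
The plan is to construct $\Psi$ explicitly by concatenating the affine maps of $\Phi_1$ and $\Phi_2$, with a single ReLU layer inserted at the junction. Write $\Phi_1 = W^{(1)}_{L_1} \circ \rho \circ \cdots \circ \rho \circ W^{(1)}_1$ and $\Phi_2 = W^{(2)}_{L_2} \circ \rho \circ \cdots \circ \rho \circ W^{(2)}_1$, where $L_1 = \mathcal{L}(\Phi_1)$ and $L_2 = \mathcal{L}(\Phi_2)$. The naive composition $\Phi_2 \circ \Phi_1$ is not literally of the form in \defref{defn:ReLU}, because at the seam we encounter $W^{(2)}_1 \circ W^{(1)}_{L_1}$, i.e., two affine maps applied in succession with no intervening nonlinearity. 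The fix is to merge these two affine maps into one: define $\widetilde{W} := W^{(2)}_1 \circ W^{(1)}_{L_1}$, which is again an affine map $\mathbb{R}^{N^{(1)}_{L_1-1}} \to \mathbb{R}^{N^{(2)}_1}$ (its weight matrix is the product of the two weight matrices, its bias is $A^{(2)}_1 b^{(1)}_{L_1} + b^{(2)}_1$). Then set
\begin{equation*}
\Psi := W^{(2)}_{L_2} \circ \rho \circ \cdots \circ \rho \circ W^{(2)}_2 \circ \rho \circ \widetilde{W} \circ \rho \circ W^{(1)}_{L_1-1} \circ \rho \circ \cdots \circ \rho \circ W^{(1)}_1.
\end{equation*}

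By construction $\Psi \in \mathcal{N}_{d_1,d_3}$ and it realizes the same input-output map as $\Phi_2 \circ \Phi_1$, since composing the two affine maps at the seam into $\widetilde W$ does not change the function computed. Counting layers: $\Phi_1$ contributes its $L_1$ affine maps $W^{(1)}_1, \ldots, W^{(1)}_{L_1}$, but $W^{(1)}_{L_1}$ has been absorbed into $\widetilde W$; $\Phi_2$ contributes $W^{(2)}_1, \ldots, W^{(2)}_{L_2}$, but $W^{(2)}_1$ has also been absorbed into $\widetilde W$; and $\widetilde W$ itself is one affine map. Hence the total number of affine maps in $\Psi$ is $(L_1 - 1) + 1 + (L_2 - 1) = L_1 + L_2 - 1$, so $\mathcal{L}(\Psi) = L_1 + L_2 - 1$.

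Here I realize the stated claim $\mathcal{L}(\Psi) = \mathcal{L}(\Phi_1) + \mathcal{L}(\Phi_2)$ is off by one from what the natural construction gives; to hit the stated count exactly, I would instead insert an explicit identity layer at the seam. Using $\Phi^{\mathrm{Id}}(x) = \rho(x) - \rho(-x) = x$ (a two-affine-map realization of the identity on $\mathbb{R}^{d_2}$, already noted in \secref{sec:1.2Onedim}), one takes $\Psi := \Phi_2 \circ \Phi^{\mathrm{Id}} \circ \Phi_1$ and merges each of the two seams as above. A quick recount then gives exactly $L_1 + L_2$ affine maps — or one simply appeals to the construction in \cite[Lemma II.3]{elbrachter2021deep} verbatim. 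The only genuine content, and the one place to be careful, is the bookkeeping: verifying that the seam-merging step preserves the realized function (immediate, since function composition is associative and affine-after-affine is affine) and that the layer count comes out exactly as asserted in the cited lemma. There is no real obstacle; the proof amounts to writing down the right composite explicitly and counting affine maps.
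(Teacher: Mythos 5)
Your final construction is exactly the paper's: the paper's proof inserts the identity $x=\rho(x)-\rho(-x)$ at the seam, i.e., it builds $\widetilde{W}_{L_1}^1=\bigl(\begin{smallmatrix}\mathbb{I}_{d_2}\\-\mathbb{I}_{d_2}\end{smallmatrix}\bigr)\circ W_{L_1}^1$ and $\widetilde{W}_1^2=W_1^2\circ\bigl(\mathbb{I}_{d_2}\;\;-\mathbb{I}_{d_2}\bigr)$, which is precisely your $\Phi_2\circ\Phi^{\mathrm{Id}}\circ\Phi_1$ with both seams merged, yielding $\mathcal{L}(\Psi)=L_1+L_2$. Your initial merge-only variant and the subsequent self-correction to hit the stated layer count are both sound, so the proposal is correct and takes essentially the same route as the paper.
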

\begin{lem}[Augmentation of ReLU networks \cite{elbrachter2021deep}]\label{lma:augmenting}
 Let $d_1, d_2, L \in \mathbb{N}$, and $\Phi \in \mathcal{N}_{d_1,d_2}$ with $\mathcal{L}(\Phi)<L$. There exists a network $\Psi \in \mathcal{N}_{d_1,d_2}$ with $\mathcal{L}(\Psi)=L$, satisfying $\Psi(x) = \Phi(x)$, for all $x\in \mathbb{R}^{d_1}$.
\end{lem}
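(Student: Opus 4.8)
The plan is to construct $\Psi$ by \emph{appending ``do-nothing'' layers} to $\Phi$, exploiting the fact that the identity map on a Euclidean space is realizable by a ReLU network of arbitrary prescribed depth. The basic gadget, already used informally in \secref{sec:1.2Onedim}, is the scalar identity $\rho(t)-\rho(-t)=t$, valid for every $t\in\mathbb{R}$. Applied componentwise it yields a two-layer network $\mathbb{R}^{d_2}\to\mathbb{R}^{d_2}$ realizing $y\mapsto y$, whose first affine map stacks $y$ and $-y$ into a vector of length $2d_2$ and whose second affine map subtracts the two halves.

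First I would establish that for every $m\in\mathbb{N}$ there is an identity network $\Phi^{\mathrm{Id},m}\in\mathcal{N}_{d_2,d_2}$ with $\mathcal{L}(\Phi^{\mathrm{Id},m})=m$ and $\Phi^{\mathrm{Id},m}(y)=y$ for all $y\in\mathbb{R}^{d_2}$. For $m=1$ this is the single affine layer $y\mapsto y$. For $m\ge 2$ I would take the two-layer construction above and pad it with $m-2$ intermediate affine layers, each the identity on $\mathbb{R}^{2d_2}$; the key point is that after the first ReLU the activation vector (with blocks $\rho(y)$ and $\rho(-y)$) is nonnegative, hence it is left untouched by every subsequent ReLU and by every intermediate affine identity, so the padded network still realizes $y\mapsto\rho(y)-\rho(-y)=y$.

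Second, I would set $m:=L-\mathcal{L}(\Phi)$, a positive integer by hypothesis, and apply the composition lemma (\lemref{lem:ReLUConcatenating}) to $\Phi_1=\Phi$ and $\Phi_2=\Phi^{\mathrm{Id},m}$. This produces $\Psi\in\mathcal{N}_{d_1,d_2}$ with $\mathcal{L}(\Psi)=\mathcal{L}(\Phi)+m=L$ and $\Psi(x)=\Phi^{\mathrm{Id},m}(\Phi(x))=\Phi(x)$ for all $x\in\mathbb{R}^{d_1}$, which is the claim. An equivalent route, avoiding \lemref{lem:ReLUConcatenating}, is induction on $L-\mathcal{L}(\Phi)$, where the single step of adding one layer is handled, when $\mathcal{L}(\Phi)\ge 2$, by inserting between the last ReLU and the final affine map $W_L$ of $\Phi$ an affine identity layer followed by one more ReLU (both transparent because the value fed into this block is a post-activation vector, hence nonnegative), and, when $\mathcal{L}(\Phi)=1$, by writing the single affine map $W_1(x)=A_1x+b_1$ as the composition, in this order, of $x\mapsto(x,-x)$, a ReLU, and $(u,v)\mapsto A_1(u-v)+b_1$.

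I do not expect a genuine obstacle; the statement is essentially a bookkeeping lemma. The only two points requiring a little care are: (i) since $\rho$ is \emph{not} literally the identity, the depth-$m$ identity network must be written out explicitly and its correctness argued from the fact that $\rho$ fixes nonnegative vectors; and (ii) the depth arithmetic must be matched to the convention of \lemref{lem:ReLUConcatenating}, namely that composition \emph{adds} layer counts — it is precisely this convention that lets a single composition with a depth-$m$ identity raise the depth by exactly $m$, so that every target depth $L>\mathcal{L}(\Phi)$ is attainable rather than only those in a fixed residue class.
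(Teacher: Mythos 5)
Your proposal is correct and matches the paper's proof in essence: both rest on $\rho(t)-\rho(-t)=t$ together with the fact that $\rho$ fixes nonnegative vectors, the paper simply writing out explicitly the network you obtain by composing $\Phi$ with a depth-$(L-\mathcal{L}(\Phi))$ identity network (the last affine map of $\Phi$ gets replaced by its stacked version $x\mapsto(W_{\mathcal{L}(\Phi)}(x),-W_{\mathcal{L}(\Phi)}(x))$, followed by padding layers $\mathrm{diag}(\mathbb{I}_{d_2},\mathbb{I}_{d_2})$ and a final subtraction $(\mathbb{I}_{d_2}\ {-\mathbb{I}_{d_2}})$). Your two points of care, the non-identity of $\rho$ and the additivity of depth under \lemref{lem:ReLUConcatenating}, are exactly the right ones.
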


\begin{lem}[Parallelization of ReLU networks \cite{elbrachter2021deep}]\label{lem:parallelizing}
Let $n, d, L \in \mathbb{N}$ \linebreak
and, for $i \in \{1,\ldots \hspace{-0.02cm}, n\}$, let $d_i'\in \mathbb{N}$ and $\Phi_i \in \Ncal_{d,d_i'}$ with $\mathcal{L}(\Phi_i) = L$. There exists a network $\Psi \in \Ncal_{d, \sum_{i=1}^nd_i'}$ with $\mathcal{L}(\Psi) = L$, satisfying \begin{equation*}
        \Psi(x) = (\Phi_1(x),\ldots \hspace{-0.02cm}, \Phi_n(x)) \in \mathbb{R}^{\sum_{i=1}^nd_i'}, \quad \text{ for all } x\in \mathbb{R}^{d}.
    \end{equation*}
\end{lem}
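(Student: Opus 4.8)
The plan is to build $\Psi$ by an explicit weight-stacking construction: the first layer of $\Psi$ reads the common input $x$ and produces, as one long vector, the stacked first-layer pre-activations of all $n$ subnetworks; every subsequent layer is block-diagonal, so the $n$ computations run in parallel without ever interacting, and the last layer outputs their concatenation. Concretely, write $\Phi_i = W^{(i)}_L \circ \rho \circ \cdots \circ \rho \circ W^{(i)}_1$ with $W^{(i)}_\ell(y) = A^{(i)}_\ell y + b^{(i)}_\ell$, where $A^{(i)}_\ell \in \mathbb{R}^{N^{(i)}_\ell \times N^{(i)}_{\ell-1}}$, $N^{(i)}_0 = d$, and $N^{(i)}_L = d_i'$; note that $\mathcal{L}(\Phi_i) = L$ for every $i$ by hypothesis, so all the subnetworks share the same number of layers and no preliminary augmentation is needed.

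First I would define the first affine map of $\Psi$ by vertically stacking $A^{(1)}_1, \ldots, A^{(n)}_1$ and $b^{(1)}_1, \ldots, b^{(n)}_1$, obtaining $\widetilde{A}_1 \in \mathbb{R}^{(\sum_i N^{(i)}_1) \times d}$ and $\widetilde{b}_1 \in \mathbb{R}^{\sum_i N^{(i)}_1}$. For $\ell \in \{2, \ldots, L\}$ I would take $\widetilde{A}_\ell = \mathrm{diag}(A^{(1)}_\ell, \ldots, A^{(n)}_\ell)$ block-diagonal and $\widetilde{b}_\ell$ the vertical stack of $b^{(1)}_\ell, \ldots, b^{(n)}_\ell$. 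Setting $\Psi = \widetilde{W}_L \circ \rho \circ \cdots \circ \rho \circ \widetilde{W}_1$ with $\widetilde{W}_\ell(y) = \widetilde{A}_\ell y + \widetilde{b}_\ell$ yields a ReLU network in $\Ncal_{d,\sum_i d_i'}$ with $\mathcal{L}(\Psi) = L$ directly from \defref{defn:ReLU}; the same prescription also covers the cases $L=1$ (then $\Psi = \widetilde{W}_1$) and $L=2$ (then $\Psi = \widetilde{W}_2 \circ \rho \circ \widetilde{W}_1$).

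Correctness then follows by an easy induction on the layer index. The only elementary fact needed is that $\rho$ acts component-wise and hence commutes with vertical stacking: $\rho$ applied to a stacked vector equals the stack of $\rho$ applied to each block. Combined with the block-diagonal form of $\widetilde{A}_\ell$ for $\ell \geq 2$, this shows, for every $x \in \mathbb{R}^d$, that the output of the first $\ell$ affine-ReLU stages of $\Psi$ is the vertical stack of the outputs of the first $\ell$ stages of $\Phi_1, \ldots, \Phi_n$ on $x$; at $\ell = L$ this is precisely $\Psi(x) = (\Phi_1(x), \ldots, \Phi_n(x))$.

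I do not expect a genuine obstacle here; the statement is a bookkeeping exercise and is exactly \cite[Lemma A.7]{elbrachter2021deep}. The one point worth emphasizing is that the shared input must sit only in the first layer (vertical stacking) while all later layers are block-diagonal — this is what keeps the input dimension at $d$ rather than inflating it to $nd$, and what decouples the $n$ streams. Tracking the hidden-layer widths (the $\ell$-th hidden layer of $\Psi$ has width $\sum_{i=1}^n N^{(i)}_\ell$) and checking the edge cases $L \in \{1,2\}$ then completes the argument.
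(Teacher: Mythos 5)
Your proposal is correct and follows exactly the paper's construction: the first affine map of $\Psi$ is the vertical stack of the $W_1^i$, all subsequent layers use block-diagonal weight matrices $\mathrm{diag}(A_\ell^1,\ldots,A_\ell^n)$ with stacked biases, and the component-wise action of $\rho$ gives the layer-by-layer decoupling. Your explicit induction on the layer index only spells out what the paper leaves as "noting that $\Psi$ satisfies the claimed properties," so there is nothing to add.
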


We proceed with the  ReLU network constructions realizing operations in the DMV algebra $\mathcal{I}_{\text{d}}=\langle [0,1], \oplus, \lnot,0, \{\delta_i \}_{i \in \mathbb{N}}\rangle$. To this end, we start by noting that the operation $\lnot x = 1-x$ is trivially implemented by a ReLU network with one layer according to
\begin{equation*}
    \lnot x = W_1(x) = 1-x, \quad \text{ for } x \in [0,1].
\end{equation*}
By the same token, as the operations $\delta_i x = \frac{1}{i}x$, for $i\in \mathbb{N}$, are affine mappings, they can be realized by single-layer ReLU networks. The following lemma details the ReLU network constructions realizing the operations $x\oplus y =\min\{1,x+y\}$ and $x\odot y = \max\{0,x+y-1\}$ in $\mathcal{I}_{\text{d}}$.

\begin{lem}\label{lma:MinMax}
There exist ReLU networks $\Phi^{\oplus}\in \mathcal{N}_{2,1}$ and $\Phi^{\odot } \in \mathcal{N}_{2,1}$ satisfying \begin{align*}
\label{eq:networkAND}    \Phi^{\oplus }(x,y) &= \min\{1,x+y\} \\
    \Phi^{\odot }(x,y) &= \max\{0,x+y-1\},
\end{align*} for all $x,y\in [0,1].$
\end{lem}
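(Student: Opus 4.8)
The plan is to give explicit ReLU network constructions for $\min$ and $\max$ of two affine functions and then compose with a single affine preprocessing layer. The key elementary identities are $\min\{a,b\} = a - \rho(a-b)$ and $\max\{a,b\} = b + \rho(a-b)$, valid for all $a,b \in \mathbb{R}$, where $\rho(x) = \max\{0,x\}$ is the ReLU nonlinearity. I would apply these with $a = x+y$ (or $a = x+y-1$) and $b = 1$ (or $b = 0$), so that each of $\min\{1,x+y\}$ and $\max\{0,x+y-1\}$ is expressed as an affine function of $x,y$ composed with one ReLU layer composed with another affine map.

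Concretely, first I would define the affine map $W_1^{\oplus}\colon \mathbb{R}^2 \to \mathbb{R}$ by $W_1^{\oplus}(x,y) = x + y - 1$, apply $\rho$, and then define $W_2^{\oplus}\colon \mathbb{R} \to \mathbb{R}$ by $W_2^{\oplus}(t) = 1 - t$ so that $\Phi^{\oplus} := W_2^{\oplus} \circ \rho \circ W_1^{\oplus}$ satisfies
\begin{equation*}
  \Phi^{\oplus}(x,y) = 1 - \rho(x+y-1) = \min\{1, x+y\},
\end{equation*}
using $1 - \max\{0, x+y-1\} = \min\{1 - (x+y-1), 1\} = \min\{2 - x - y, 1\}$... which is not quite $\min\{1,x+y\}$, so instead I would use $\min\{1,x+y\} = (x+y) - \rho(x+y-1)$ directly: take $W_1^{\oplus}\colon \mathbb{R}^2 \to \mathbb{R}^2$, $W_1^{\oplus}(x,y) = (x+y,\, x+y-1)$, then $\rho$ acting componentwise, then $W_2^{\oplus}\colon \mathbb{R}^2 \to \mathbb{R}$, $W_2^{\oplus}(s,t) = s - t$, valid because for $x,y\in[0,1]$ we have $x+y \geq 0$ so $\rho(x+y) = x+y$. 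Then $\Phi^{\oplus} := W_2^{\oplus} \circ \rho \circ W_1^{\oplus} \in \mathcal{N}_{2,1}$ gives $\Phi^{\oplus}(x,y) = (x+y) - \rho(x+y-1) = \min\{1,x+y\}$. For $\odot$, I would use $\max\{0, x+y-1\} = \rho(x+y-1)$ directly, so that $\Phi^{\odot} := \rho \circ W_1^{\odot}$ with $W_1^{\odot}(x,y) = x+y-1$ works; to match the layer-count conventions of Definition~\ref{defn:ReLU} (where a pure affine map has $L=1$, but $\rho\circ W_1$ alone is not of the listed forms) I would write $\Phi^{\odot} := W_2^{\odot} \circ \rho \circ W_1^{\odot}$ with $W_2^{\odot} = \mathrm{Id}$, giving $\Phi^{\odot}(x,y) = \rho(x+y-1) = \max\{0,x+y-1\}$ for all $x,y \in \mathbb{R}$, in particular for $x,y \in [0,1]$.

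There is essentially no obstacle here: the whole content is the two identities $\min\{1,a\} = a - \rho(a-1)$ and $\max\{0,a\} = \rho(a)$ together with checking that the resulting maps fit the ReLU-network template of Definition~\ref{defn:ReLU}. The only mild care needed is (i) to split $x+y$ and $x+y-1$ into two coordinates so that the second affine layer can form their difference, relying on $\rho(x+y) = x+y$ on the relevant domain $[0,1]^2$; and (ii) to pad with an identity second layer for $\Phi^{\odot}$ so that both networks have two layers and literally satisfy the definition. Both verifications are immediate by substituting and using nonnegativity of $x+y$ on $[0,1]^2$, which completes the proof.
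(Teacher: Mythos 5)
Your proposal is correct and takes essentially the same approach as the paper: explicit two-layer ReLU constructions for both operations, with the $\odot$ construction ($\rho(x+y-1)$ followed by an identity output layer) matching the paper's exactly. The only difference is in the $\oplus$ case, where the paper uses the single-hidden-neuron identity $\min\{1,a\}=1-\rho(1-a)$ (valid for all $a$, so no domain restriction is needed) while you use $\min\{1,a\}=a-\rho(a-1)$ with two hidden neurons and the observation $\rho(x+y)=x+y$ on $[0,1]^2$; both are fine, and you correctly caught and discarded the initial incorrect identity $1-\rho(x+y-1)$.
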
  
\begin{proof}
First, to realize the operation $x\oplus y =\min\{1,x+y\} $, we note that addition can be implemented by a single-layer ReLU network according to
\begin{equation*}
    x+y =  \begin{pmatrix} 1 & 1 \end{pmatrix}  \begin{pmatrix} x \\y \end{pmatrix}.
\end{equation*}
For the ``$\min$'' operation, we observe that
\begin{equation*}
    \min\{1,x\} = 1-\rho(1-x) = (W_2 \circ \rho \circ W_1)(x), \quad x \in [0,1],
\end{equation*} 
where
\begin{equation*}
    W_1(x) = -x+1, \quad \quad  W_2(x) = -x+1.
\end{equation*} 
Now, applying \lemref{lem:ReLUConcatenating}
to concatenate the networks  \linebreak
$\Phi_1(x,y) = \begin{pmatrix} 1 & 1 \end{pmatrix}  \begin{pmatrix} x \\y \end{pmatrix} $ and $\Phi_2(x) = (W_2 \circ \rho \circ W_1) (x)$ yields the desired ReLU network realization of $x\oplus y$ according to
\begin{equation*}
    x \oplus y = (W_2^{\oplus} \circ \rho \circ W_1^{\oplus})(x,y), \quad x,y\in [0,1],
\end{equation*} 
where
\begin{equation*}
   W_1^{\oplus}(x,y)= \begin{pmatrix}
       -1 & -1
   \end{pmatrix} \begin{pmatrix}
       x \\y
   \end{pmatrix}+1, \quad
   W_2^{\oplus}(x) = -x+1.
\end{equation*}

To realize the operation $x\odot y = \max\{0,x+y-1\}$, we directly note that \begin{equation*}
    \max\{0,x+y-1\} =  \rho \left( \begin{pmatrix}
        1 & 1
    \end{pmatrix}   \begin{pmatrix}
      x \\ y
    \end{pmatrix} -1 \right) = (W_2^{\odot} \circ \rho \circ W_1^{\odot})(x,y),
\end{equation*} for $x,y\in[0,1]$, where \begin{equation*}
    W_1^{\odot} (x,y) = \begin{pmatrix}
        1 & 1
    \end{pmatrix}   \begin{pmatrix}
      x \\ y
    \end{pmatrix} -1,\quad \quad  W_2^{\odot}(x) = x.
\end{equation*}
\end{proof}

Equipped with the ReLU network realizations of the logical operations underlying $\mathcal{I}_{\text{d}}$,
we are now ready to state the following universal representation result.

\begin{prop}\label{prop:ReLU realize divisible}
Consider the DMV algebra $\mathcal{I}_{\text{d}}$ in~\defref{defn:DivisibleMV01}. Let $n \in \mathbb{N}$. For each DMV term $\tau(x_1,\ldots \hspace{-0.02cm}, x_n)$ and its associated term function $\tau^{\mathcal{I}_{\text{d}}}:[0,1]^n \rightarrow [0,1]$, there exists a ReLU network $\Phi \in \mathcal{N}_{n,1}$ such that \begin{equation*}
    \Phi(x_1,\ldots \hspace{-0.02cm}, x_n) = \tau^{\Ical_{\dt}}(x_1,\ldots \hspace{-0.02cm}, x_n), \text{ for all } (x_1,\ldots \hspace{-0.02cm}, x_n)\in [0,1]^n.
\end{equation*}
\end{prop}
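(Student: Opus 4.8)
The plan is to proceed by structural induction on the DMV term $\tau$, following the recursive definition of DMV terms and mirroring the closure properties of ReLU networks established in Lemmas~\ref{lem:ReLUConcatenating}--\ref{lem:parallelizing}. The base cases are the atomic terms: for the constant $0$, the single-layer network $\Phi(x_1,\ldots,x_n) = 0_1$ (the zero affine map) realizes $\tau^{\Ical_{\dt}} \equiv 0$; for the projection term $x_i$, the single-layer network implementing the coordinate projection $(x_1,\ldots,x_n)\mapsto x_i$ does the job. Both are trivially in $\mathcal{N}_{n,1}$.

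For the inductive step there are three cases according to \defref{defn:DivisibleMV01} and \defref{defn:MV terms}. Suppose the claim holds for DMV terms $\sigma$ and $\gamma$, with realizing networks $\Phi_\sigma,\Phi_\gamma \in \mathcal{N}_{n,1}$. First, for $\tau = \lnot \sigma$: since $\lnot x = 1-x$ is affine, it is realized by a single-layer network $\Phi^{\lnot} \in \mathcal{N}_{1,1}$, and by \lemref{lem:ReLUConcatenating} the composition $\Phi^{\lnot}\circ\Phi_\sigma$ lies in $\mathcal{N}_{n,1}$ and realizes $\lnot\sigma^{\Ical_{\dt}}$ on $[0,1]^n$ (note $\sigma^{\Ical_{\dt}}$ takes values in $[0,1]$, so the domain constraint of $\Phi^{\lnot}$ is respected). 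Second, for $\tau = \delta_i\sigma$: since $\delta_i x = x/i$ is affine, the same composition argument with a single-layer network realizing $\delta_i$ applies. Third, for $\tau = (\sigma \oplus \gamma)$: first augment $\Phi_\sigma$ and $\Phi_\gamma$ to a common depth using \lemref{lma:augmenting}, then parallelize them via \lemref{lem:parallelizing} to obtain a network in $\mathcal{N}_{n,2}$ computing $(x_1,\ldots,x_n)\mapsto(\sigma^{\Ical_{\dt}}(x),\gamma^{\Ical_{\dt}}(x))$, and finally compose (\lemref{lem:ReLUConcatenating}) with the network $\Phi^{\oplus}\in\mathcal{N}_{2,1}$ from \lemref{lma:MinMax}; the result lies in $\mathcal{N}_{n,1}$ and realizes $(\sigma^{\Ical_{\dt}}\oplus\gamma^{\Ical_{\dt}}) = \min\{1,\sigma^{\Ical_{\dt}}+\gamma^{\Ical_{\dt}}\}$, which by \defref{defn:TermFunction} equals $\tau^{\Ical_{\dt}}$ on $[0,1]^n$.

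The argument is essentially bookkeeping, and I do not anticipate a genuine obstacle; the only point requiring slight care is the interface between the combinators and the domain restrictions. Specifically, the network $\Phi^{\oplus}$ is guaranteed to output $\min\{1,x+y\}$ only for $x,y\in[0,1]$, so one must invoke the fact — implicit in \defref{defn:MV01} and the closure of $[0,1]$ under $\oplus$, $\lnot$, and $\delta_i$ — that every subterm function $\sigma^{\Ical_{\dt}}$ maps $[0,1]^n$ into $[0,1]$, which validates feeding its output into the next combinator. A second minor point is that \lemref{lem:parallelizing} requires the two networks to have equal depth, which is why the augmentation step via \lemref{lma:augmenting} is needed first. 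Since the induction is on a finite term, the process terminates and yields a single ReLU network $\Phi\in\mathcal{N}_{n,1}$ with $\Phi = \tau^{\Ical_{\dt}}$ on $[0,1]^n$, as claimed.
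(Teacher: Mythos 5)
Your proof is correct and follows essentially the same route as the paper's, which simply notes that $\lnot$ and $\delta_i$ are single-layer affine networks and that the operations $\oplus$, $\odot$ from \lemref{lma:MinMax} can be stitched together via the composition lemma. Your write-up is in fact more careful than the paper's one-sentence sketch, since you make the structural induction, the augmentation-then-parallelization step for binary connectives, and the $[0,1]$-range condition at each interface explicit.
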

\begin{proof}
The proof follows by realizing the logical operations appearing in the term function $\tau^{\mathcal{I}_{\text{d}}}$ through corresponding concatenations, according to \lemref{lem:ReLUConcatenating}, of ReLU networks implementing the operations $\oplus$ and $\odot$ as by \lemref{lma:MinMax} and noting that $\lnot x= 1-x, \delta_ix = \frac{1}{i}x$ are trivally ReLU networks with one layer.
\end{proof}

We note that, in general, the ReLU network $\Phi$ in \propref{prop:ReLU realize divisible} will be a properly deep network as it is obtained by concatenating the networks realizing the basic logical operations $\lnot$, $\oplus, \odot$, and $\{\delta_i \}_{i \in \mathbb{N}}$. 

Now the ground has been prepared for the central result in this section, namely a universal ReLU network realization theorem for CA transition functions. Specifically, this will be effected by combining the connection between CA and DMV algebras established in~\secref{sec:MVLogic} with the ReLU network realizations of the logical operations in DMV algebras presented above. 

\begin{thm}\label{thm:CA-ReLU}
Consider a CA with cellular space dimension $d \in \mathbb{N}$, neighborhood size $n \in \mathbb{N}$, state set $K = \{0,\frac{1}{k-1},\ldots \hspace{-0.02cm}, \frac{k-2}{k-1}, 1\}$ of cardinality $k \in \mathbb{N}, k\geq 2$, and transition function $f: K^{n} \rightarrow K$. There exists a ReLU network $\Phi \in \mathcal{N}_{n,1}$ satisfying 
\begin{equation*}
    \Phi(x_1,\ldots \hspace{-0.02cm}, x_n) = f(x_1,\ldots \hspace{-0.02cm}, x_n),\quad \text{ for all } (x_1,\ldots \hspace{-0.02cm},x_n)\in K^n.
\end{equation*}
\end{thm}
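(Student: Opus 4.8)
The plan is to obtain \thmref{thm:CA-ReLU} as an essentially immediate consequence of the two representation results already established in this section, namely \propref{them:rationalMcnaughton} and \propref{prop:ReLU realize divisible}, chained together. First, I would apply \propref{them:rationalMcnaughton} to the given transition function $f\colon K^n \to K$. This yields a DMV term $\tau(x_1,\ldots,x_n)$ whose associated term function under the DMV algebra $\mathcal{I}_{\text{d}}$ satisfies $\tau^{\Ical_{\dt}}(x_1,\ldots,x_n) = f(x_1,\ldots,x_n)$ for all $(x_1,\ldots,x_n) \in K^n$. Recall that this step itself rests on linearly interpolating $f$ to a continuous piecewise linear function $f_{\text{c}}$ on $[0,1]^n$ whose linear pieces have rational coefficients (because all the interpolation nodes lie in $K^{n+1}$) and then invoking the rational McNaughton theorem, \thmref{rationalMcNaughtonTheorem}.

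Second, I would feed this term $\tau$ into \propref{prop:ReLU realize divisible}, which produces a ReLU network $\Phi \in \mathcal{N}_{n,1}$ with $\Phi(x_1,\ldots,x_n) = \tau^{\Ical_{\dt}}(x_1,\ldots,x_n)$ for all $(x_1,\ldots,x_n) \in [0,1]^n$. Since $K^n \subset [0,1]^n$, restricting this identity to $K^n$ and combining it with the previous display gives $\Phi(x_1,\ldots,x_n) = f(x_1,\ldots,x_n)$ for all $(x_1,\ldots,x_n) \in K^n$, which is the assertion. Concretely, $\Phi$ is assembled from the elementary blocks of \lemref{lma:MinMax} (realizing $\oplus$ and $\odot$) together with the single-layer networks realizing $\lnot x = 1-x$ and $\delta_i x = x/i$, composed and parallelized per \lemref{lem:ReLUConcatenating}--\lemref{lem:parallelizing} along the syntax tree of $\tau$.

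Given the machinery already in place, there is no genuine obstacle; the theorem is in effect a corollary. The only points I would take care to state explicitly are: first, that the equality furnished by \propref{them:rationalMcnaughton} is guaranteed only on the finite grid $K^n$ (the interpolant $f_{\text{c}}$ is generally not itself a CA transition function off the grid), whereas \propref{prop:ReLU realize divisible} gives equality on all of $[0,1]^n$ — so it is the former, weaker conclusion that propagates through the composition and appears in the statement; and second, that the resulting $\Phi$ is in general a properly deep network, its depth being controlled by the nesting depth of the DMV term $\tau$ delivered by the rational McNaughton construction. If one wished to record explicit depth and width bounds, these could be extracted from \lemref{lem:ReLUConcatenating}--\lemref{lem:parallelizing} and \lemref{lma:MinMax}, but the statement as phrased requires no such bookkeeping.
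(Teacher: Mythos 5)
Your proposal is correct and follows exactly the paper's own argument: invoke \propref{them:rationalMcnaughton} to obtain a DMV term $\tau$ with $\tau^{\Ical_{\dt}}=f$ on $K^n$, then invoke \propref{prop:ReLU realize divisible} to realize $\tau^{\Ical_{\dt}}$ by a ReLU network on $[0,1]^n$, and restrict to $K^n$. Your additional remarks on where the equality holds and on network depth are accurate but not needed for the statement as given.
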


\begin{proof}
By \propref{them:rationalMcnaughton}, there exists a  DMV term $\tau(x_1,\ldots \hspace{-0.02cm}, x_n)$ whose corresponding term function 
under the DMV algebra $\Ical_{\dt}$ satisfies 
\begin{equation}\label{eq:15}
    \tau^{\Ical_{\dt}}(x_1,\ldots \hspace{-0.02cm},x_n) = f(x_1,\ldots \hspace{-0.02cm},x_n),\quad \text{ for all } (x_1,\ldots \hspace{-0.02cm}, x_n) \in K^n.
\end{equation} 
Application of \propref{prop:ReLU realize divisible} then yields a ReLU network $\Phi \in \mathcal{N}_{n,1}$ such that \begin{equation}\label{eq:16}
    \Phi(x_1,\ldots \hspace{-0.02cm},x_n) = \tau^{\Ical_{\dt}}(x_1,\ldots \hspace{-0.02cm},x_n), \quad \text{ for all } (x_1,\ldots \hspace{-0.02cm},x_n)\in [0,1]^n.
\end{equation} The proof is finalized by combining~\eqref{eq:15} and~\eqref{eq:16} to get
\begin{equation*}
    \Phi(x_1,\ldots \hspace{-0.02cm},x_n) = \tau^{\Ical_{\dt}}(x_1,\ldots \hspace{-0.02cm},x_n) = f(x_1,\ldots \hspace{-0.02cm},x_n),
\end{equation*}
for all $(x_1,\ldots \hspace{-0.02cm},x_n) \in K^n$.
\end{proof}

\subsection{Realizing the dynamical behavior of CA}\label{sec:3.2}
We now turn our attention to the dynamical systems aspects of CA. Concretely, we show how CA evolution can be realized through RNNs. The basic idea underlying the construction we present is to suitably impose a recurrent structure on top of the ReLU network realizing the transition function of the CA under consideration. Concretely, we build on the RNN construction techniques developed in \cite{hutter2022metric} and, for simplicity of exposition, consider the case of one-dimensional CA. An extension to the multi-dimensional case is quite readily obtainable using tools from multi-dimensional signal processing \cite{vaidyanathan2006multirate} and multi-dimensional RNNs \cite{graves2008offline}, \cite{leifert2014cells}. For the sake of simplicity of exposition, however, we do not provide these extensions here.

An RNN is a discrete dynamical system mapping an input sequence to an output sequence, both possibly of infinite length, through---at each discrete time step---application of a feedforward neural network that updates a hidden-state vector and computes the next output signal sample {\cite{elman1990finding}. The formal definition of an RNN is as follows. 

\begin{defn}[Recurrent neural network]\label{RNN}
For hidden-state vector\linebreak
dimension $m \in \mathbb{N}$, let $\Phi \in \mathcal{N}_{m+1,m+1}$ be a ReLU neural network. The recurrent neural network associated with $\Phi$ is the operator $\mathcal{R}_{\Phi}$ mapping input sequences $(x[z])_{z \in \mathbb{N}_0}$ in $\mathbb{R}$ to output sequences $(y[z])_{z \in \mathbb{N}_0}$ in $\mathbb{R}$ according to \begin{equation}\label{eq:RNNdefinition}
   \begin{pmatrix} y[z] \\ h[z] \end{pmatrix} = \Phi  \left( \begin{pmatrix}
   x[z] \\ h[z-1]
   \end{pmatrix}  \right), \quad z \in \mathbb{N}_0,
\end{equation} where $h[z] \in \mathbb{R}^{m}$ is the hidden-state vector with initial value $h[-1] = 0_{m}$.
\end{defn}

The key to emulating the evolution of a CA with an RNN is  the construction of an appropriate hidden-state vector in combination with a suitable ReLU network that encodes the CA transition function. Before stating the corresponding result, by way of preparation, we introduce a decomposition of
$\Phi$ in~\eqref{eq:RNNdefinition} according to
\begin{equation}\label{eq:C}
    \Phi = \begin{pmatrix}
        \Phi^f \\
        \Phi^h
    \end{pmatrix},
\end{equation} 
where $\Phi^f \in \Ncal_{m+1,1}$ is responsible for the computation of the output sample according to 
\begin{equation}\label{eq:ypart}
    y[z] = \Phi^f  \left( \begin{pmatrix}
   x[z] \\ h[z-1]
   \end{pmatrix}  \right), \quad z \in \mathbb{N}_0,
\end{equation}
and $\Phi^h \in  \Ncal_{m+1,m}$ effects the evolution of the hidden-state vector such that
\begin{equation}\label{eq:hpart}
     h[z] = \Phi^h  \left( \begin{pmatrix}
   x[z] \\ h[z-1]
   \end{pmatrix}  \right), \quad z \in \mathbb{N}_0.
\end{equation}
The following theorem states the announced universal realization theorem for one-dimensional CA by RNNs. 

\begin{thm}\label{thm:RNNRealize}
Consider a CA with cellular space dimension $d =1$, neighborhood set $\mathcal{E}=\{0,-1,\ldots \hspace{-0.02cm}, -n+1\}$ of size $n \in \mathbb{N}, n\geq 2$, state set $K = \{0,\frac{1}{k-1},\ldots \hspace{-0.02cm}, \frac{k-2}{k-1}, 1\}$ of cardinality $k \in \mathbb{N}, k\geq 2$, and transition function $f: K^{n} \rightarrow K$ with associated CA map $F$. There exists an RNN that maps every configuration $(c[z])_{z\in \mathbb{Z}}$ over the cellular space $\mathbb{Z}$ to the next configuration $\left(F(c)[z]\right)_{z\in \mathbb{Z}}$ according to 
\begin{equation}\label{eq:GlobalMap2}
    F(c)[z] = f(c[z], c[z-1], \ldots \hspace{-0.02cm}, c[z-n+1]), \quad  \forall z\in \mathbb{Z}.
\end{equation}
\end{thm}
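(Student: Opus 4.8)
The plan is to realize the one-step CA map $F$ by an RNN whose hidden state is a length-$(n-1)$ delay line holding the $n-1$ most recently read cell states, so that when the RNN processes cell $z$ (with input $x[z]=c[z]$) it has available to it precisely the neighborhood values $c[z],c[z-1],\ldots,c[z-n+1]$ needed to evaluate $f$. Concretely, I would fix the hidden-state dimension to $m=n-1$ (permissible since $n\geq 2$, so that \defref{RNN} applies) and aim for the invariant $h[z]=(c[z],c[z-1],\ldots,c[z-n+2])^{\top}\in K^{n-1}$.

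Using the decomposition $\Phi=(\Phi^f;\Phi^h)$ from \eqref{eq:C}, I would take $\Phi^h\in\mathcal{N}_{n,n-1}$ to be the single-layer ReLU network implementing the linear map that sends $(x,h_1,\ldots,h_{n-1})$ to $(x,h_1,\ldots,h_{n-2})$, i.e.\ the $(n-1)\times n$ matrix that retains the first $n-1$ input coordinates. This map is exact on all of $\mathbb{R}^n$, and it sends the combined vector $(c[z],c[z-1],\ldots,c[z-n+1])$ to $(c[z],c[z-1],\ldots,c[z-n+2])$, which is exactly the required update of the delay line. For the output branch I would take $\Phi^f\in\mathcal{N}_{n,1}$ to be the ReLU network furnished by \thmref{thm:CA-ReLU} that realizes $f$ on $K^n$, reading its $n$ input coordinates as $(x[z],(h[z-1])_1,\ldots,(h[z-1])_{n-1})$.

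It then remains to assemble $\Phi^f$ and $\Phi^h$ into a single $\Phi\in\mathcal{N}_{n,n}$ with $\Phi(v)=(\Phi^f(v),\Phi^h(v))$. Since parallelization (\lemref{lem:parallelizing}) requires the constituents to have equal depth, I would first raise the depth of the shallower network to $\mathcal{L}(\Phi^f)$ via \lemref{lma:augmenting} (which leaves its input-output map unchanged) and then parallelize. Feeding $x[z]=c[z]$ into the RNN $\mathcal{R}_{\Phi}$ of \defref{RNN}, a one-line induction on $z$ establishes the invariant $h[z]=(c[z],\ldots,c[z-n+2])^{\top}$, whence \eqref{eq:ypart} together with the construction of $\Phi^f$ yields
\[
y[z]=\Phi^f(c[z],c[z-1],\ldots,c[z-n+1])=f(c[z],c[z-1],\ldots,c[z-n+1])=F(c)[z],
\]
which is precisely \eqref{eq:GlobalMap2}. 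Here it is important that the argument vector always lies in $K^{n}$, because all cell states belong to $K$; hence the guarantee of \thmref{thm:CA-ReLU}, valid only on $K^{n}$, suffices, while the (linear) map $\Phi^h$ is exact everywhere.

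I expect the only genuinely delicate point to be the index bookkeeping: \defref{RNN} is phrased for one-sided sequences over $\mathbb{N}_0$, whereas a configuration $(c[z])_{z\in\mathbb{Z}}$ is two-sided. To obtain $F(c)[z]$ for every $z\in\mathbb{Z}$ one reads the recursion \eqref{eq:RNNdefinition} over $z\in\mathbb{Z}$ and observes that $h[z]:=(c[z],\ldots,c[z-n+2])^{\top}$ is the unique sequence satisfying $h[z]=\Phi^h(x[z],h[z-1])$ for all $z$; equivalently, one may start the RNN at an arbitrary index $z_0$ and note that the delay line is filled, and hence the outputs exact, from $z_0+n-1$ onward. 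Everything else---specifying the shift matrix, matching depths, and carrying out the induction---is routine given the cited lemmas and \thmref{thm:CA-ReLU}.
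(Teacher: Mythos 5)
Your proposal is correct and follows essentially the same route as the paper's proof: a delay-line hidden state $h[z]=(c[z],\ldots,c[z-n+2])^{\top}$ of dimension $n-1$, a shift/projection network $\Phi^h$, the network $\Phi^f$ from \thmref{thm:CA-ReLU} for the output branch, and assembly via \lemref{lma:augmenting} and \lemref{lem:parallelizing}. The only (immaterial) difference is that you implement $\Phi^h$ as a single-layer affine map, whereas the paper realizes the same projection as a two-layer ReLU network via $x=\rho(x)-\rho(-x)$; your handling of the one-sided versus two-sided indexing is, if anything, slightly more explicit than the paper's ``mutatis mutandis.''
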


\begin{rem}
Note that here we do not work with the general neighborhood set $\mathcal{E}=\{0, z_1,\ldots \hspace{-0.02cm}, z_{n-1}\} \subset \mathbb{Z}$, but rather consider the concrete neighborhood set $\mathcal{E} = \{0, -1, \ldots \hspace{-0.02cm}, -n+1\}$. This does not result in a loss of generality as the following argument shows. Take a general neighborhood set $\{0,z_1,\ldots \hspace{-0.02cm},z_{n-1}\} \subset \mathbb{Z}$ and expand it into a set of the form 
\begin{equation}\label{eq:neighborhood}
   \{\ell_1, \ell_1-1,\ldots \hspace{-0.02cm}, \ell_2+1,\ell_2\}, 
\end{equation}
where $\ell_1:=\max\{0,z_1,\ldots \hspace{-0.02cm},z_{n-1}\}$ and $\ell_2 := \min\{0,z_1,\ldots \hspace{-0.02cm},z_{n-1}\}$ by, when needed, adding void neighbors that do not affect the CA transition function. For example, consider the CA with neighborhood set $\mathcal{E}_*=\{1,0,-2\}$, transition function~$f_*$, and corresponding global map 
\begin{equation*}\label{eq:ExampleF}
    F_*(c_*)[z] = f_*(c_*[z+1], c_*[z], c_*[z-2]), \quad  \text{ for } (c_*[z])_{z\in \mathbb{Z}}.
\end{equation*} 
Then, expand $\mathcal{E}_*$ to $\mathcal{E}=\{1,0,-1,-2\}$ and make $f_*$ formally depend on the void neighbor $z=-1$ according to 
\begin{multline*}
     f(c_*[z+1], c_*[z],  c_*[z-1], c_*[z-2]) =\\f_*(c_*[z+1], c_*[z], c_*[z-2]),  \text{ for } (c_*[z])_{z\in \mathbb{Z}}.
\end{multline*} 
These modifications lead to the global map 
\begin{equation}\label{eq:examplec}
    F_*(c_*)[z] =  f( c_*[z+1], c_*[z], c_*[z-1], c_*[z-2]), \quad \text{ for } (c_*[z])_{z\in \mathbb{Z}}.
\end{equation}
The general form~\eqref{eq:GlobalMap2} is finally obtained upon making the substitution
\begin{equation*}
     c[z]=c_*[z+1], \text{ for all }z\in \mathbb{Z},
\end{equation*} to yield
\begin{equation*}
    F(c)[z] =  f(c[z], c[z-1], c[z-2], c[z-3]), \quad \text{ for } (c[z])_{z\in \mathbb{Z}}.
\end{equation*} 
\end{rem}

\begin{proof}
For ease of exposition, we provide the proof for one-sided infinite sequences $(c[z])_{z\in \mathbb{N}_0}$ only. The general case follows mutatis mutandis, but is based on exactly the same ideas. The proof is constructive and will be effected by explicitly specifying the ReLU networks $\Phi^f$ and $\Phi^h$ in~\eqref{eq:C} underlying the desired RNN. By~\eqref{eq:GlobalMap2}, the output sample $F(c)[z]$, which we identify with $y[z]$, is a function of the states of the cells $\{z, z-1, \ldots \hspace{-0.02cm}, z-n+1\}$. Therefore, equating the current input sample $x[z]$ in~\eqref{eq:ypart} with $c[z]$, we choose the hidden-state vector $h[z-1]$ such that it stores the states of the neighbors $\{z-1,\ldots \hspace{-0.02cm}, z-n+1\}$ of the current input cell, i.e.,
\begin{equation}\label{eq:hiddenvec}
    h[z-1] = \begin{pmatrix}
        c[z-1] \\ c[z-2]\\ \vdots \\ c[z-n+1]
    \end{pmatrix}.
\end{equation} 
This leads to 
\[
h[z] = \begin{pmatrix}
        c[z] \\ c[z-1]\\ \vdots \\ c[z-n+2]
    \end{pmatrix}
\]
and informs the choice of $\Phi^h$, which must satisfy
\begin{equation*}
    \Phi^h \left(\begin{pmatrix}
    c[z] \\ h[z-1]
    \end{pmatrix} \right) = \Phi^h \left( \begin{pmatrix}
    c[z] \\ c[z-1] \\\vdots \\c[z-n+2] \\ c[z-n+1]
    \end{pmatrix}\right) =  \begin{pmatrix}
    c[z] \\ c[z-1] \\\vdots \\ c[z-n+2] 
    \end{pmatrix} = h[z].
\end{equation*}
The evolution of the hidden-state vector hence proceeds by dropping the oldest value $c[z-n+1]$ and inserting the new value $c[z]$ at the top. Following the methodology developed in \cite{hutter2022metric}, $\Phi^h$ can be realized by a two-layer ReLU network according to 
\begin{equation*}
\Phi^h: \mathbb{R}^{n} \rightarrow \mathbb{R}^{n-1}, \quad
    \Phi^h(x)= (W^h_2 \circ \rho \circ W^h_1) (x),
\end{equation*} with affine maps $W^h_1: \mathbb{R}^{n} \rightarrow \mathbb{R}^{2n}$, $W^h_2: \mathbb{R}^{2n} \rightarrow \mathbb{R}^{n-1} $ given by 
\begin{equation}\label{eq:HiddenNetwork}
\begin{aligned}
     W^h_1(x) &= \begin{pmatrix}
    \mathbb{I}_{n} \\ - \mathbb{I}_{n}
    \end{pmatrix} x, \quad \text{for } x \in \mathbb{R}^{n},\\
     W^h_2(x) &= \begin{pmatrix}
    \mathbb{I}_{n-1} \quad 0_{n-1} \quad -\mathbb{I}_{n-1} \quad 0_{n-1}
    \end{pmatrix}x,\quad \text{for } x \in  \mathbb{R}^{2n}.
\end{aligned}
\end{equation} 
We are left with the design of the network $\Phi^f$ which has to satisfy 
\begin{equation*}
    F(c)[z]   = \Phi^f  \left( \begin{pmatrix}
   c[z] \\ c[z-1] \\ \vdots \\ c[z-n+1]
   \end{pmatrix}  \right).
\end{equation*} 
It follows by inspection of~\eqref{eq:GlobalMap2} that this amounts to realizing the CA transition function $f$ through the ReLU network $\Phi^f$. Based on this insight, we can apply \thmref{thm:CA-ReLU} to conclude the existence of $\Phi^f$. In fact, the proof of~\thmref{thm:CA-ReLU} spells out how $\Phi^f$ can be obtained explicitly by composing the logical operations in the DMV term associated with~$f$. The proof is now completed by noting that the overall network $\Phi$ is obtained by applying \lemref{lma:augmenting} and \lemref{lem:parallelizing} to combine $\Phi^f$ and $\Phi^h$ according to~\eqref{eq:C}.
\end{proof}

\section{Identification of CA logic from trained neural networks}\label{sec:Extract}
With suitably chosen transition functions and initial configurations, CA can simulate a plethora of dynamical behavior characteristics \cite{wolfram2002new}. The inverse problem of deducing CA transition functions from observations of their evolution is, however, extremely difficult \cite{adamatzky2010game}, \cite{adamatzky2018identification}.
Formally, this is known as the CA identification problem \cite[Section~1.4]{adamatzky2018identification}: Given a finite sequence of consecutive configurations $\{c^1,c^2,\ldots \hspace{-0.02cm}, c^T\}$ collected during the evolution, construct a CA $(\mathbb{Z}^d, K, \mathcal{E}, f)$ whose associated CA map $F$ satisfies
\[
F(c^t) = c^{t+1}, \quad \text{ for } t=1, \ldots \hspace{-0.02cm}, T-1.
\]
Note that the initial configuration need not be recoverable from observed evolution traces \cite[Section 3]{wolfram1983statistical}, \cite[Section~4]{kari2005theory}. 

Whilst the cellular space $\mathbb{Z}^d$ and the state set $K$ can be read off directly from 
evolution traces, the neighborhood set $\mathcal{E}$ can either be selected manually \cite{RICHARDS1990189}, \cite{adamatzky1997automatic} or determined 
through the application of specific criteria such as, e.g., mutual information \cite{zhao2006neighborhood}. The most challenging aspect of the CA identification problem resides in determining the transition function~$f$. 
A detailed discussion of known CA identification methods can be found in~\cite{Adamatzky2009}.

We next propose a novel approach to CA identification, namely reading out the DMV formula underlying the CA under consideration from an RNN 
trained on its evolution traces. The focus will be on elucidating the fundamental principles of this idea.
Accordingly, we will not be concerned with the performance of specific RNN training algorithms, but will rather
assume that the feedforward network part $\Phi^{f}$ inside the RNN has been trained to achieve what is called in machine learning parlance ``interpolation'', i.e.,
\begin{equation}\label{eq:interpolation}
    \Phi^{f}(x_1,\ldots \hspace{-0.02cm},x_n) = f(x_1,\ldots \hspace{-0.02cm},x_n),
\end{equation}
for all $(x_1,\ldots \hspace{-0.02cm},x_n) \in K^n$.
This, of course, requires that the RNN being trained 
have ``seen'' all possible combinations of neighborhood states and thereby the transition function $f$ on its entire domain $K^{n}$, a condition
met when the training configuration sequences are sufficiently long and their initial configurations exhibit sufficient richness \cite[Thesis 3.1, Thesis 3.2]{adamatzky2018identification}.
We will make this a standing assumption. In addition, the cellular space, the state set, and the neighborhood set are taken to be known a priori.

\subsection{Interpolation}

The procedure for extracting DMV terms underlying CA evolution data presented in Section~\ref{sec:reading-out} below works off $\Phi^f$ as a (continuous) function mapping $[0,1]^{n}$ to $[0,1]$.
It turns out, however, that condition~\eqref{eq:interpolation} does not uniquely determine $\Phi^f$ on $[0,1]^n$ as there are---in general infinitely 
many---different ways of interpolating $f(x_1,\ldots \hspace{-0.02cm},x_n)$ to a continuous piecewise linear function; recall that ReLU networks always realize continuous piecewise linear functions.
Since \thmref{rationalMcNaughtonTheorem} states that the truth functions associated with DMV terms under $\mathcal{I}_{\text{d}}$ are continuous piecewise linear functions with rational coefficients, we can constrain the weights of $\Phi^{f}$ to be rational. In fact, as the next two lemmata show, when interpolating CA transition functions, we can tighten this constraint even further, 
namely to integer weights and rational biases. We first establish that CA transition functions can be interpolated to continuous piecewise linear functions with integer weights and rational biases\footnote{To be consistent with terminology used in the context of neural networks, for a linear function of the form $p(x_1, \ldots \hspace{-0.02cm}, x_{n}) = m_{1}x_1+\cdots+m_{n}x_{n}+b$, the coefficients $m_i$ will often be referred to as weights and $b$ as bias.}.
Then, it is shown that such functions can be realized by ReLU networks with integer weights and rational biases.

\begin{lem}\label{lma:refinefc}
Consider a CA with cellular space dimension $d \in \mathbb{N}$, neighborhood size $n \in \mathbb{N}$, state set $K = \{0,\frac{1}{k-1},\ldots , \frac{k-2}{k-1}, 1\}$ of cardinality $k \in \mathbb{N}, k\geq 2$, and transition function $f: K^{n} \rightarrow K$. There exists a function $f_{\text{c}}:[0,1]^n \rightarrow [0,1]$ with the following properties:
\begin{enumerate}
\item $f_{\text{c}}(x_1, \ldots ,x_n) = f(x_1,\ldots \hspace{-0.02cm}, x_n)$, \quad \text{for} $(x_1, \ldots \hspace{-0.02cm},x_n)\in K^n$,
    \item $f_{\text{c}}$ is continuous with respect to the natural topology on $[0,1]^n$,
    \item there exist $\ell \in \mathbb{N}$ and linear functions 
\begin{equation}
        p_j(x_1,\ldots \hspace{-0.02cm},x_n) = m_{j1}x_1+\cdots+m_{jn}x_n+\frac{b_j}{k-1}, \quad j=1,\ldots \hspace{-0.02cm},\ell,
\end{equation}
with $b_j,m_{j1},\ldots \hspace{-0.02cm},m_{jn} \in \mathbb{Z}$, for $j=1,\ldots \hspace{-0.02cm}, \ell$ such that, for every $x\in[0,1]^n$ there is an index $j\in \{1,\ldots \hspace{-0.02cm}, \ell\}$ with $f_{\text{c}}(x) = p_j(x)$.
\end{enumerate}
\end{lem}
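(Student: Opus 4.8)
The plan is to obtain $f_{\text{c}}$ as the piecewise-linear interpolant of $f$ with respect to a triangulation of $[0,1]^n$ that is adapted to the grid $K^n$, and then to verify the integrality constraint in Property 3 by rescaling the coordinates by the factor $k-1$. First I would pass to integer coordinates: the substitution $u=(k-1)x$ identifies $[0,1]^n$ with $[0,k-1]^n$ and the grid $K^n$ with the integer points $\{0,1,\ldots,k-1\}^n$, and $g\colon\{0,\ldots,k-1\}^n\to\{0,\ldots,k-1\}$ defined by $g(u):=(k-1)\,f(u/(k-1))$ is integer-valued. It then suffices to construct a continuous interpolant $g_{\text{c}}$ of $g$ on $[0,k-1]^n$ that is, on each cell of a suitable triangulation, affine with integer gradient and integer offset, since then $f_{\text{c}}(x):=\frac{1}{k-1}\,g_{\text{c}}((k-1)x)$ has exactly the form required in Properties 1--3.

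The triangulation I would use is the Freudenthal--Kuhn triangulation: for each $w\in\{0,\ldots,k-2\}^n$ split the unit cube $w+[0,1]^n$ into the $n!$ simplices $\Delta_{w,\pi}=\mathrm{conv}\{\,w,\ w+e_{\pi(1)},\ w+e_{\pi(1)}+e_{\pi(2)},\ \ldots,\ w+e_{\pi(1)}+\cdots+e_{\pi(n)}\,\}$, one for each permutation $\pi$ of $\{1,\ldots,n\}$. The collection over all $w$ and $\pi$ is a genuine simplicial complex whose vertex set is exactly $\{0,\ldots,k-1\}^n$, so the map $g_{\text{c}}$ that is affine on each $\Delta_{w,\pi}$ and agrees with $g$ at the lattice vertices is well defined and continuous (on a face shared by two simplices both affine pieces restrict to the unique affine interpolant of $g$ at that face's vertices). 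Writing $g_{\text{c}}(u)=a^\top u+c$ on $\Delta_{w,\pi}$: consecutive vertices $v_{j-1}=w+\sum_{i<j}e_{\pi(i)}$ and $v_j=v_{j-1}+e_{\pi(j)}$ give $a_{\pi(j)}=a^\top e_{\pi(j)}=g_{\text{c}}(v_j)-g_{\text{c}}(v_{j-1})=g(v_j)-g(v_{j-1})\in\mathbb{Z}$, so, $\pi$ being a permutation, $a\in\mathbb{Z}^n$, and then $c=g(w)-a^\top w\in\mathbb{Z}$. Finally every point of $[0,k-1]^n$ lies in some $\Delta_{w,\pi}$ and its $g_{\text{c}}$-value is a convex combination, via barycentric coordinates, of values of $g$ in $\{0,\ldots,k-1\}$, hence lies in $[0,k-1]$; unrescaling yields $f_{\text{c}}(x)=a^\top x+\frac{c}{k-1}\in[0,1]$, which is the asserted form with $m_{ji}=a_i\in\mathbb{Z}$ and $b_j=c\in\mathbb{Z}$, while $f_{\text{c}}=f$ on $K^n$ and continuity hold by construction. (One may take $\ell=(k-1)^n\,n!$, although only existence is claimed.)

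The main — and essentially the only substantive — obstacle is the integrality of the linear pieces: an arbitrary continuous piecewise-linear interpolant of $f$ through the grid points need not have integer weights (compare $g_{\text{c}}$ in Example~\ref{example McNaughton is not enough}), so the triangulation must be chosen so that the vertex-difference vectors spanning each simplex form a unimodular system. The Kuhn triangulation does this by design, and the telescoping argument along the chain $w,\,w+e_{\pi(1)},\,\ldots$ is exactly what converts "$g$ integer-valued at vertices" into "$a$ integer". Everything else — that the construction is a simplicial complex covering the cube, continuity of the interpolant, and the range bound $f_{\text{c}}([0,1]^n)\subseteq[0,1]$ — is routine once the triangulation is in place.
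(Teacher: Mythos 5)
Your proposal is correct and follows essentially the same route as the paper: the paper also triangulates each subcube $\left[\tfrac{i_1}{k-1},\tfrac{i_1+1}{k-1}\right]\times\cdots\times\left[\tfrac{i_n}{k-1},\tfrac{i_n+1}{k-1}\right]$ into $n$-simplices whose vertices can be ordered as a monotone lattice chain with consecutive vertices differing by $\pm\tfrac{1}{k-1}$ in a single coordinate (its iterated prism decomposition is precisely the Freudenthal--Kuhn triangulation you invoke), and then derives integrality of the weights by the same telescoping/Gaussian-elimination of the interpolation equations. Your preliminary rescaling to integer coordinates is only a cosmetic simplification of that last step.
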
 
\begin{proof}
We explicitly construct a function $f_{\text{c}}$ satisfying Properties 1-3. For $n=1$, one simply performs linear interpolation between
each pair of points $(\frac{i}{k-1}, f(\frac{i}{k-1}))$ and $(\frac{i+1}{k-1}, f(\frac{i+1}{k-1}))$, $i=0,\ldots \hspace{-0.02cm},k-2$, to obtain
an $f_{\text{c}}$ with the desired properties. While it is immediate that the resulting function $f_{\text{c}}$ satisfies Properties $1$ and $2$ and has the structure as demanded by Property $3$, 
the condition $b_j,m_{j1},\ldots \hspace{-0.02cm},m_{jn} \in \mathbb{Z}$ will be verified summarily, for all $n$, at the end of the proof.

Before proceeding to the cases $n \geq 2$, we note that throughout the proof an $n$-simplex, $n\in \mathbb{N}$, with vertices $\{v_0,v_1,\ldots \hspace{-0.02cm},v_n\} \subset \mathbb{R}^n$ will be denoted by $\Delta^n = [v_0, v_1, \ldots \hspace{-0.02cm}, v_n]$.
We turn to the case $n=2$ and refer to \figref{fig:K2} for an illustration of the domain $K^2$ of $f$ for $k=3$. First, we divide $[0,1]^2$ into the squares $[\frac{i_1}{k-1}, \frac{i_1+1}{k-1}] \times [\frac{i_2}{k-1},\frac{i_2+1}{k-1}]$, $i_1,i_2 \in \{0,\ldots \hspace{-0.02cm},k-2\}$, followed by a subdivision of each of the resulting squares into $2$-simplices. There are precisely two possibilities for each of these subdivisions. To illustrate this, we consider the square $[0,\frac{1}{k-1}]\times [0,\frac{1}{k-1}]$ and note that
it can be split into two $2$-simplices according to either
\begin{align*}
    \Delta^2_1 &= \left[(0,0), \left(0,\frac{1}{k-1}\right), \left(\frac{1}{k-1}, \frac{1}{k-1}\right)\right] \\
    \Delta^2_2 &= \left[(0,0), \left(\frac{1}{k-1},0\right), \left(\frac{1}{k-1}, \frac{1}{k-1}\right)\right]
\end{align*}
or 
\begin{align*}
    \Delta^2_3 &= \left[(0,0), \left(0,\frac{1}{k-1}\right), \left(\frac{1}{k-1}, 0\right)\right] \\
    \Delta^2_4 &= \left[\left(\frac{1}{k-1},\frac{1}{k-1}\right), \left(0,\frac{1}{k-1}\right), \left(\frac{1}{k-1}, 0\right)\right],
\end{align*}
as depicted in \figref{fig:subdide2}.
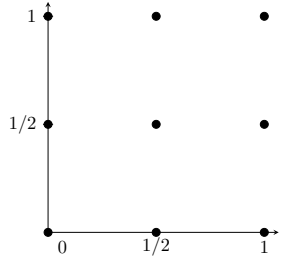
\begin{figure}[t]
\centering
\resizebox{4cm}{!}{

\begin{tikzpicture}[x=1.5cm, y=1.5cm, >=stealth, scale=1, transform shape]

  
     \draw[->] (0,-5) -- (0,-1.8) node[anchor=west] {};
    \draw[->] (0,-5) -- (3.2,-5) node[anchor= west] {};
   \draw (0.07,-3.5) -- (-0.07,-3.5) node[anchor= east] {$1/2$};
      \draw (0.07,-2) -- (-0.07,-2) node[anchor= east] {$1$};

      \draw (1.5,-5.02) -- (1.5,-4.98) node[anchor= north] {$1/2$};
    \draw (3,-5.02) -- (3,-4.98) node[anchor= north] {}; 
 \node at (3,-5.2) {$1$};  
    \node at (0.2,-5.2) {$0$};
  \draw[black,fill=black] (0,-5) circle (.5ex);
     \draw[black,fill=black] (1.5,-5) circle (.5ex);
     \draw[black,fill=black] (3,-5) circle (.5ex); 
       \draw[black,fill=black] (0,-3.5) circle (.5ex);
     \draw[black,fill=black] (1.5,-3.5) circle (.5ex);
     \draw[black,fill=black] (3,-3.5) circle (.5ex); 
  \draw[black,fill=black] (0,-2) circle (.5ex);
     \draw[black,fill=black] (1.5,-2) circle (.5ex);
     \draw[black,fill=black] (3,-2) circle (.5ex);    
\end{tikzpicture}
}
\caption{The set $K^2$ for $k=3$.}\label{fig:K2}
\end{figure}

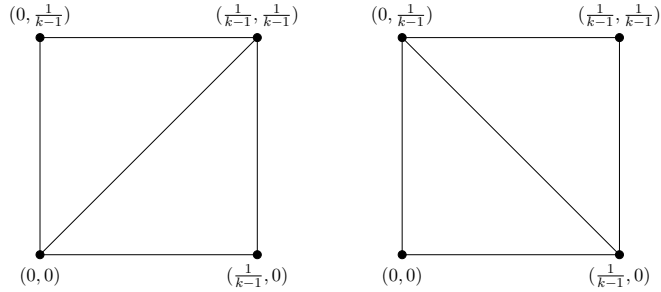
\begin{figure}[t]
\centering
\resizebox{9cm}{!}{

\begin{tikzpicture}[x=1.5cm, y=1.5cm, >=stealth, scale=1, transform shape]

     \draw (0,-5) -- (0,-2) node[anchor=west] {};
      \draw (0,-2) -- (3,-2) node[anchor= west] {};
    \draw (0,-5) -- (3,-5) node[anchor= west] {};
       \draw (3,-5) -- (3,-2) node[anchor= west] {};
        \draw (0,-5) -- (3,-2) node[anchor= west] {};
  
 \node at (3,-5.3) {$(\frac{1}{k-1},0)$};  
    \node at (0.0,-5.3) {$(0,0)$};
    \node at (3.0,-1.7) {$(\frac{1}{k-1},\frac{1}{k-1})$};
   \node at (0.0,-1.7) {$(0, \frac{1}{k-1})$}; 
  \draw[black,fill=black] (0,-5) circle (.5ex);
     \draw[black,fill=black] (3,-5) circle (.5ex); 
  \draw[black,fill=black] (0,-2) circle (.5ex);
     \draw[black,fill=black] (3,-2) circle (.5ex);    

      \draw (5,-5) -- (5,-2) node[anchor=west] {};
\draw (5,-2) -- (8,-2) node[anchor= west] {};     
 \draw (5,-5) -- (8,-5) node[anchor= west] {};
       \draw (8,-5) -- (8,-2) node[anchor= west] {};
        \draw (5,-2) -- (8,-5) node[anchor= west] {};

        \node at (8,-5.3) {$(\frac{1}{k-1},0)$};  
    \node at (5.0,-5.3) {$(0,0)$};
    \node at (8.0,-1.7) {$(\frac{1}{k-1},\frac{1}{k-1})$};
   \node at (5.0,-1.7) {$(0, \frac{1}{k-1})$}; 
  \draw[black,fill=black] (5,-5) circle (.5ex);

   \draw[black,fill=black] (8,-5) circle (.5ex); 
  \draw[black,fill=black] (5,-2) circle (.5ex);
     \draw[black,fill=black] (8,-2) circle (.5ex);    
     
\end{tikzpicture}
}
\caption{Two different subdivisions.}\label{fig:subdide2}
\end{figure}
Fixing the subdivisions of all squares, on each $2$-simplex $\Delta^2 =[v_0,v_1,v_2]$, we define a linear function $p_{\Delta^2}$ that interpolates $f(x)$, i.e., $p_{\Delta^2}(x) = f(x)$, for $x\in \{v_0,v_1,v_2\}$. As $\{v_0,v_1,v_2\}$ is affinely independent, $p_{\Delta^2}$ is uniquely determined \cite[Chapter~13]{smith2012linear}. We then stitch the resulting linear pieces together as follows.
For each point $x\in [0,1]^2$, if it falls into the interior of some $2$-simplex $\Delta^2$, we set $f_{\text{c}}(x) =p_{\Delta^2}(x)$; if $x$ resides in the intersection of two or more $2$-simplices, then $f_{\text{c}}(x)$ is taken to have the (shared) value of the interpolating functions associated with the intersecting simplices evaluated at~$x$. The resulting function $f_{\text{c}}$ satisfies Properties $1$ and $2$ and exhibits the structure as demanded by Property $3$.

For $n\geq 3$, we first divide the unit cube $[0,1]^n$ into the $n$-dimensional cubes $[\frac{i_1}{k-1}, \frac{i_1+1}{k-1}] \times \cdots \times [\frac{i_n}{k-1}, \frac{i_n+1}{k-1}], i_1,\ldots \hspace{-0.02cm},i_n \in \{0,\ldots \hspace{-0.02cm},k-2\}$.
Each of the resulting smaller cubes is then subdivided following the procedure described in \cite[Proof of 2.10]{MR1867354}, which
divides, e.g., 
$\Delta^{n-1} \times [0,\frac{1}{k-1}] $ into $n$-simplices as follows. Let $\Delta^{n-1} \times \{0\} = [v_0,\ldots \hspace{-0.02cm},v_{n-1}]$ and $\Delta^{n-1} \times \{\frac{1}{k-1}\} = [w_0,\ldots \hspace{-0.02cm},w_{n-1}]$ and note that the coordinates of $v_j \in \mathbb{R}^{n}$ and $w_j \in \mathbb{R}^{n}$ coincide in the first $n-1$ indices. 
Then, $\Delta^{n-1} \times [0,\frac{1}{k-1}]$ is given by the union of the $n$-simplices $[v_0,\ldots \hspace{-0.02cm},v_j,w_j,\ldots \hspace{-0.02cm},w_{n-1}]$, $j=0,\ldots \hspace{-0.02cm},n-1$, each intersecting the next one in an $(n-1)$-simplex face. The result of this procedure yields a
subdivision of $[0,1]^n$ into $n$-simplices with vertices $K^n$. 
As in the case $n=2$, an interpolating linear function is uniquely determined on each of these $n$-simplices \cite[Chapter~13]{smith2012linear}; we stitch the resulting linear pieces together to obtain a function $f_{\text{c}}$ satisfying Property $1$ and exhibiting the structure as demanded by Property $3$.
As $f_{\text{c}}$ constitutes a simplex interpolation of the discrete function $f$ on the cube $[0,1]^n$, it is continuous \cite{rovatti1998geometric}, \cite{weiser1988note}, thereby satisfying Property $2$.



We hasten to add that, for $n\geq 2$, the set of $n$-simplices resulting from the subdivision procedure employed here is not unique, see \figref{fig:subdide2} for an illustration in the case $n=2$. But, each fixed subdivision into $n$-simplices uniquely determines a continuous piecewise linear function $f_{\text{c}}$ that interpolates $f$.

It remains to prove that the coefficients of the constituent linear polynomials 
\begin{equation}
        p_j(x_1,\ldots \hspace{-0.02cm},x_n) = m_{j1}x_1+\cdots+m_{jn}x_n+\frac{b_j}{k-1}, \quad j=1,\ldots \hspace{-0.02cm},\ell,
\end{equation}
of $f_{\text{c}}$, indeed, satisfy $b_j,m_{j1},\ldots \hspace{-0.02cm},m_{jn} \in \mathbb{Z}$, for $j=1,\ldots \hspace{-0.02cm}, \ell$.
As already mentioned, the coefficients of each $p_j$ are uniquely determined and obtained by solving a system of linear equations induced
by the vertex coordinates of the $n$-simplex $p_j$ resides on. We now show that the vertices $v_0,v_1,\ldots \hspace{-0.02cm},v_n\subset K^n$ of
every $n$-simplex in $[0,1]^n$ resulting from the procedure described above satisfy, after reordering if necessary,
\begin{equation}\label{eq:differ}
    ||v_{i}-v_{i+1}||_1 = \frac{1}{k-1}, \quad \text{for } i=0,\ldots \hspace{-0.02cm},n-1.
\end{equation}
In other words, each of the (reordered) vertices differs from the next one in only one position, specifically by $\pm \frac{1}{k-1}$. In the case
$n=1$ the condition~\eqref{eq:differ} holds without reordering. For $n=2$, no reordering is needed in $\Delta^2_1$ and $\Delta^2_2$, while in each of
$\Delta^2_3$ and $\Delta^2_4$ we only need to swap the second vertex with the first one. For general $n \geq 3$, 
we make use of the fact that,
for each $n$-simplex $\Delta^{n}$ of $[0,1]^{n}$ constructed according to the procedure described above, there exists an $(n-1)$-simplex $\Delta^{n-1} = [v_0,v_1,\ldots \hspace{-0.02cm},v_{n-1}]$ such that 
\[
\Delta^{n} =  \begin{bmatrix}
    \begin{pmatrix}
    v_0 \\ \frac{j}{k-1}
\end{pmatrix}, \ldots \hspace{-0.02cm}, \begin{pmatrix}
    v_i \\ \frac{j}{k-1}
\end{pmatrix}, \begin{pmatrix}
    v_i \\ \frac{j+1}{k-1}
\end{pmatrix},\ldots \hspace{-0.02cm},\begin{pmatrix}
    v_{n-1} \\ \frac{j+1}{k-1}
\end{pmatrix}
\end{bmatrix},
\]
for some $j\in \{0,\ldots \hspace{-0.02cm},k-2\}$ and with $i \in \{0,\ldots \hspace{-0.02cm},n-1\}$. If $\Delta^{n-1}$ satisfies~\eqref{eq:differ}, it follows directly that
$\Delta^{n}$ satisfies \eqref{eq:differ}. Now, noting that for $n=3$, $\Delta^{2}$ satisfies \eqref{eq:differ} as already established, and
proceeding by induction across $n$ establishes the desired statement.

Finally, assume that the linear piece 
\begin{equation*}
    m_1x_1+m_2x_2+\cdots +m_nx_n +m_0
\end{equation*} 
of $f_{\text{c}}$ is obtained, w.l.o.g. thanks to \eqref{eq:differ}, by interpolating $n+1$ points at 
\begin{equation*}
\begin{aligned}
    \biggl( &\frac{a_1}{k-1}, \frac{a_2}{k-1}, \ldots \hspace{-0.02cm}, \frac{a_n}{k-1}, \frac{b_0}{k-1} \biggl)  \\
   \biggl( &\frac{a_1+1}{k-1}, \frac{a_2}{k-1}, \ldots \hspace{-0.02cm}, \frac{a_n}{k-1}, \frac{b_1}{k-1} \biggl)\\
   \biggl( &\frac{a_1+1}{k-1}, \frac{a_2+1}{k-1}, \ldots \hspace{-0.02cm}, \frac{a_n}{k-1}, \frac{b_2}{k-1} \biggl)\\
  &\quad \quad \quad \quad \quad \quad \vdots \\
 \biggl( &\frac{a_1+1}{k-1}, \frac{a_2+1}{k-1}, \ldots \hspace{-0.02cm}, \frac{a_n+1}{k-1}, \frac{b_n}{k-1} \biggl),      
\end{aligned}
\end{equation*}
with $a_1,\ldots \hspace{-0.02cm}, a_n,b_0,\ldots  \hspace{-0.02cm},b_n\in \{0,1,\ldots \hspace{-0.02cm}, k-1\}$. 
The coefficients $m_0, m_1,\ldots \hspace{-0.02cm}, m_n$ hence have to satisfy the following system of linear equations
\begin{equation}\label{eq:Gauss}
    \begin{aligned}
         \frac{a_1}{k-1}m_1+ \frac{a_2}{k-1}m_2\, + &\cdots+ \frac{a_n}{k-1}m_n + m_0 = \frac{b_0}{k-1} \\
\frac{a_1+1}{k-1}m_1+ \frac{a_2}{k-1}m_2\, + &\cdots+ \frac{a_n}{k-1}m_n + m_0 = \frac{b_1}{k-1} \\
\frac{a_1+1}{k-1}m_1+ \frac{a_2+1}{k-1}m_2\, + &\cdots+ \frac{a_n}{k-1}m_n + m_0 = \frac{b_2}{k-1} \\
&\;\; \vdots \\
\frac{a_1+1}{k-1}m_1+ \frac{a_2+1}{k-1}m_2\, + &\cdots+ \frac{a_n+1}{k-1}m_n + m_0 = \frac{b_n}{k-1}.
    \end{aligned}
\end{equation}
Subtracting the first row in~\eqref{eq:Gauss} from the second and multiplying the result by $k-1$, we obtain 
\begin{equation*}
      m_1 = b_1-b_0 \in \mathbb{Z}.
  \end{equation*} 
Continuing likewise by subtracting the first row from the third yields
\begin{equation*}
      m_2  = b_2-b_0 - m_1 \in  \mathbb{Z}
\end{equation*}
and so forth, establishing that $m_i \in \mathbb{Z}$, for all $i=1,2,\ldots \hspace{-0.02cm},n$.
Substituting back into any of the equations in \eqref{eq:Gauss} establishes that $m_0$ is of the desired form as well and thereby finalizes the proof.
\end{proof}



We next show that functions $f_{\text{c}}$ satisfying Properties $1$-$3$ in \lemref{lma:refinefc} can, indeed, be realized by ReLU networks with integer
weights and biases in $\mathbb{Q}_{k}:=\{\frac{b}{k-1}:b\in \mathbb{Z}\}$.

\begin{lem}\label{lma:intReLU}
Let $n\in \mathbb{N}$ and let $f_{\text{c}}:[0,1]^n\rightarrow [0,1]$ be a continuous piecewise linear function with linear pieces $p_1, \ldots \hspace{-0.02cm}, p_\ell, \ell\in \mathbb{N}$, of the form
\begin{equation}\label{eq:linearpj1}
        p_j(x_1, \ldots \hspace{-0.02cm}, x_{n}) =m_{j1}x_1+\cdots+m_{jn}x_{n}+ \frac{b_j}{k-1}, \quad  j = 1,\ldots \hspace{-0.02cm},\ell,
\end{equation} where $k \geq 2$ and $m_{j1},\ldots \hspace{-0.02cm}, m_{jn}, b_j \in \mathbb{Z}$, for $j=1,\ldots \hspace{-0.02cm}, \ell$. There exists a ReLU network $\Phi \in \mathcal{N}_{n,1}$ with integer weights and biases in $\mathbb{Q}_k$, satisfying $\Phi(x) = f_{\text{c}}(x)$, for all $x\in [0,1]^n$.
\end{lem}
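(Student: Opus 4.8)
The plan is to write $f_{\text{c}}$ as a max--min combination of its own linear pieces and then turn every $\min$, $\max$, and affine building block into a ReLU network whose weights are integers and whose biases lie in $\mathbb{Q}_k$. The starting point is the classical lattice (max--min) representation of continuous piecewise linear functions on a convex domain: since $[0,1]^n$ is convex and $f_{\text{c}}$ is continuous and piecewise linear with linear pieces $p_1,\ldots,p_\ell$ of the stated form, there exist index sets $S_1,\ldots,S_r \subseteq \{1,\ldots,\ell\}$ such that
\[
 f_{\text{c}}(x) \;=\; \max_{1\leq i\leq r}\ \min_{j\in S_i} p_j(x), \qquad x\in[0,1]^n .
\]
The crucial feature, for our purposes, is that this representation involves \emph{only} the prescribed pieces $p_j$ — it introduces no new affine function, hence no linear coefficient outside $\mathbb{Z}$ and no constant term outside $\mathbb{Q}_k$.

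Next I would record the elementary gadgets. For $a,b\in\mathbb{R}$ we have $\min\{a,b\} = a-\rho(a-b)$ and $\max\{a,b\} = \rho(a-b)+b$, and a value can be carried unchanged through a ReLU layer via $a=\rho(a)-\rho(-a)$. Each of these is realized by a two-layer ReLU network all of whose weights lie in $\{-1,0,1\}$ and all of whose biases vanish. Composing such blocks in balanced binary trees (\lemref{lem:ReLUConcatenating}) yields, for any finite list of inputs, ReLU networks computing the $\min$ and the $\max$ over that list, again with integer weights and zero biases; note $\mathbb{Z}\subseteq\mathbb{Q}_k$, so in particular $0\in\mathbb{Q}_k$.

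The assembly then proceeds as follows. A single affine layer maps $x$ to the vector collecting $p_j(x)$ for all $j$ appearing in $\bigcup_i S_i$ (with repeats, so that the $i$-th block receives exactly the coordinates indexed by $S_i$); this layer has weights $m_{j1},\ldots,m_{jn}\in\mathbb{Z}$ and biases $b_j/(k-1)\in\mathbb{Q}_k$. Applying in parallel (\lemref{lem:parallelizing}, after equalizing depths by \lemref{lma:augmenting}) the $\min$-trees for $i=1,\ldots,r$, and composing the result (\lemref{lem:ReLUConcatenating}) with the $\max$-tree over the $r$ outputs, produces a network $\Phi\in\mathcal{N}_{n,1}$ realizing $\max_i\min_{j\in S_i}p_j$, hence $\Phi(x)=f_{\text{c}}(x)$ for all $x\in[0,1]^n$. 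Only the first layer carries nonzero biases, namely the $b_j/(k-1)\in\mathbb{Q}_k$, and every weight occurring is either one of the $m_{ji}\in\mathbb{Z}$ or an entry of a $\min$/$\max$ block lying in $\{-1,0,1\}$; one checks that the auxiliary constructions of \lemref{lem:ReLUConcatenating}--\ref{lem:parallelizing} preserve this, so $\Phi$ has integer weights and biases in $\mathbb{Q}_k$, as required.

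The only nonroutine step — and the one I would expect to require the most care in writing up — is the max--min representation: one must ensure it is valid on the bounded domain $[0,1]^n$ (which is where convexity of $[0,1]^n$ is used) and, more importantly, that it can be expressed using exclusively the given pieces $p_1,\ldots,p_\ell$, since it is precisely this that propagates the integer-weight / $\mathbb{Q}_k$-bias structure from the hypothesis to the network. Everything downstream is standard ReLU-network bookkeeping assembled from \lemref{lma:MinMax} and \lemref{lem:ReLUConcatenating}--\ref{lem:parallelizing}.
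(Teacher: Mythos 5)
Your proposal is correct and follows essentially the same route as the paper: the max--min representation of $f_{\text{c}}$ in terms of only the prescribed pieces $p_j$ is exactly what the paper establishes separately (as \lemref{lem:piecewise} in the appendix, adapted from Aguzzoli and Cignoli et al.), and your ReLU gadgets for $\min$ and $\max$ coincide with the paper's formulas $\min\{x_1,x_2\}=\rho(x_1)-\rho(-x_1)-\rho(x_1-x_2)$ and $\max\{x_1,x_2\}=\rho(x_1)-\rho(-x_1)+\rho(x_2-x_1)$, nested and composed so that all weights stay in $\mathbb{Z}$ and all biases in $\mathbb{Q}_k$. You correctly identify the lattice representation as the one nonroutine ingredient; the paper proves it rather than citing it, but otherwise the two arguments match.
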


\begin{proof}
By~\lemref{lem:piecewise}, the function $f_{\text{c}}$ can be written in terms of $p_1,\dots,p_\ell$ through ``$\min$'' and ``$\max$'' operations in the form
\begin{equation}\label{eq:gamma}
    f_{\text{c}} = \max_I \min_{J} p_{j},
\end{equation}
where $I, J \subset \{1,\ldots \hspace{-0.02cm},\ell\}$ are index sets. The linear pieces $p_j$, for $j=1,\ldots \hspace{-0.02cm},\ell$, in~\eqref{eq:linearpj1} can straightforwardly be realized by single-layer ReLU networks with integer weights $m_{ji}$ and biases $b_{j}/(k-1)$. Next, we note that the ``$\min$'' operation can be implemented by a ReLU network with integer weights and biases equal to $0$ according to 
\begin{equation*}
    \min\{x_1,x_2\} = \rho(x_1) - \rho(-x_1) - \rho(x_1-x_2), \text{ for } x_1,x_2\in \mathbb{R}.
\end{equation*}
As 
\begin{equation*}
    \min\{x_1,x_2,x_3\} = \min\{x_1, \min\{x_2,x_3\}\}, \text{ for } x_1,x_2,x_3 \in \mathbb{R},
\end{equation*}
it follows that the minimum over any number of variables can be realized by nesting minimum operations. Hence, inspection of the proof of 
\cite[Lemma II.3]{elbrachter2021deep} reveals that there exists a ReLU network with integer weights and biases equal to $0$ computing the minimum over any number of variables. Likewise, we have
\begin{equation*}
    \max\{x_1,x_2\} = \rho(x_1) - \rho(-x_1) +\rho(x_2-x_1), \text{ for } x_1,x_2\in \mathbb{R},
\end{equation*}
and 
\begin{equation*}
     \max\{x_1,x_2,x_3\} = \max\{x_1, \max\{x_2,x_3\}\}, \text{ for } x_1,x_2,x_3 \in \mathbb{R},
\end{equation*}
which analogously establishes that the operation taking the maximum over any number of variables can be realized by a ReLU network with integer weights and biases equal to $0$. Based on~\eqref{eq:gamma}, $f_{\text{c}}$ can be written as the composition of ReLU networks with integer weights and biases either equal to $0$ (for the min and max operations) or equal to $b_{j}/(k-1)$ (for the $p_{j}$). Inspection of the proof of \cite[Lemma II.3]{elbrachter2021deep} then shows that the resulting overall ReLU network will have integer weights and biases in $\mathbb{Q}_k$, as desired. 
\end{proof}

The proof of \lemref{lma:refinefc}, combined with \lemref{lma:intReLU}, reveals that even if we constrain the network $\Phi^{f}$ interpolating the transition function $f$ to have integer weights and biases in $\mathbb{Q}_k$, the network will not be unique. This is a consequence of the
$n$-simplices underlying the linear pieces $p_{j}$ constructed in the proof of \lemref{lma:refinefc} not being unique, see e.g. \figref{fig:subdide2} for an illustration of this phenomenon in the case $n=2$.
Correspondingly the DMV formulae extracted from the networks realizing the functions~$f_{\text{c}}$ based on different choices of $n$-simplices, 
following the procedure described in Section~\ref{sec:reading-out} below, will be functionally different. Beyond this nonuniqueness, we can even get interpolating ReLU networks with integer weights and biases in $\mathbb{Q}_{k}$ that do not effect simplex interpolation. 
This aspect is illustrated in the next example.

\begin{exmp}
Consider a CA with cellular space dimension $d=1$, neighborhood size $n=2$, state set $\{0,1/2,1\}$ with $k=3$, and transition function $f :\{0,1/2,1\}^2 \rightarrow \{0,1/2,1\}$ specified as follows:

\begin{center}
\vspace{0.5cm}
\begin{tabular}{|c |c |c |c | c| c| c|  } 
 \hline
$x_{-1}\,x_0$& (0,\,0) & (0,\,1/2) & (0,\,1) & (1/2,\,0) & (1/2,\,1/2) \\ \hline
$f(x_{-1},\,x_0)$ & 0 & 0 & 0 & 0 & 1    \\ \hline
\end{tabular}
\end{center}

\begin{center}
\vspace{0.5cm}
\begin{tabular}{| c| c| c|c| c| c| } 
 \hline
$x_{-1}\,x_0$&  (1/2,\,1) & (1,\,0) & (1,\,1/2) & (1,\,1) \\ \hline
$f(x_{-1},\,x_0)$  & 0 & 0& 0 & 0   \\ \hline
\end{tabular}
\end{center}
\vspace{0.5cm}
\figref{fig:eg2d} shows two different ReLU networks $\Phi^f_1,\Phi^f_2:[0,1]^2 \rightarrow [0,1]$, both with integer weights and biases in $\mathbb{Q}_{k}$, interpolating the transition function. We remark that $\Phi^f_1$ effects simplex interpolation, while $\Phi^f_2$ does not. 
Now, applying the DMV formula extraction algorithm introduced in \secref{sec:reading-out} below yields the DMV term associated with $\Phi^f_1$ as
\[
\tau_1 = (x_{-1}\oplus x_{-1}) \wedge (\lnot x_{-1} \oplus \lnot x_{-1}) \wedge (x_0 \oplus x_0) \wedge ( \lnot x_0 \oplus \lnot x_0),
\]
and that associated with $\Phi^f_2$ according to
\[
\tau_2=(x_{-1} \oplus x_{-1}) \wedge (\lnot x_{-1} \oplus \lnot x_{-1}) \wedge (x_0 \oplus x_0 \oplus x_0) \wedge ( \lnot x_0 \oplus \lnot x_0 \oplus \lnot x_0 ).
\]
These two DMV terms are algebraically and functionally different, but the associated term functions under $\mathcal{I}_{\text{d}}$ coincide on $K^2$, i.e.,
\[
\tau_1^{\mathcal{I}_{\text{d}}}(x_{-1},x_0)=\tau_2^{\mathcal{I}_{\text{d}}}(x_{-1},x_0) =f(x_{-1},x_0),
\]
for $(x_{-1},x_0) \in \{0,1/2,1\}^2$.

We shall consider all term functions that coincide with a given CA transition function on $K^n$ to constitute
an equivalence class in the sense of faithfully describing the logical behavior of the underlying CA.

\begin{figure}[tb]
\centering
 \includegraphics[width=10cm]{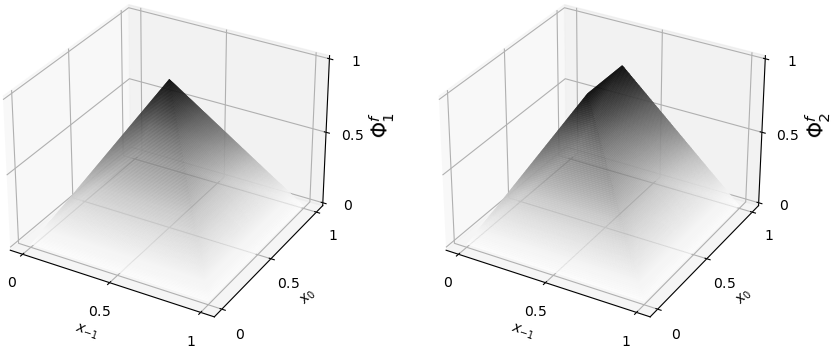}
\caption{Different ReLU networks interpolating $f$.}\label{fig:eg2d}
\end{figure}
\end{exmp}

There are two further sources of potential nonuniqueness discussed next.

\subsection{Uniqueness properties of DMV formula extraction} 



We first comment on a structural property pertaining to the RNN emulating the CA evolution. Recall the decomposition of $\Phi$ in~\eqref{eq:C}. Our construction built on the idea of having the hidden-state vector store the neighbors of the current input cell, leading to the specific form of $\Phi^{h}$, and making the 
subnetwork $\Phi^f$ be responsible for the computation of the output samples. But this separation need not be the only way the RNN can realize the CA evolution. The resulting ambiguity is, however, easily eliminated by enforcing the split of $\Phi$ according to~\eqref{eq:C} on the RNN to be trained. In practice, this means that 
the weights of $\Phi^h$ are fixed according to \eqref{eq:HiddenNetwork} and only the weights of $\Phi^f$ are trained. In the light of \lemref{lma:refinefc} and \lemref{lma:intReLU}, the training algorithm has to enforce integer weights and biases in $\mathbb{Q}_k$ on $\Phi^f$. 
This can be done using techniques such as those described in e.g. \cite{wu2018training}, \cite{105555}.
At the end of the training process, the DMV formula underlying the CA can be read out from $\Phi^f$ using the algorithm presented in Section~\ref{sec:reading-out} below. Recall that given what was said after \lemref{lma:intReLU}, this formula will not be unique in general.

The second aspect we wish to dwell on here revolves around the fact that algebraically different DMV formulae can be functionally equivalent, just like what one has in classical algebra. 
To demonstrate the matter, we analyze a simple example in Boolean algebra, namely the elementary CA of rule 30 with transition function specified in DNF in~\eqref{eq:rule30logic}. A functionally equivalent, but algebraically different, expression for this transition rule is given by
\begin{multline}\label{eq:rule30logic2}
    f_{30} = (x_{-1}\odot  \lnot x_0 \odot \lnot x_1) \oplus  (\lnot x_{-1}\odot   x_0 \odot  x_1) \oplus \\(\lnot x_{-1}\odot   x_0 \odot  \lnot x_1) \oplus (\lnot x_{-1}\odot   \lnot x_0 \odot  x_1).
\end{multline}
This shows that even if we restrict ourselves to DNF, the algebraic expression will not be unique in general.
Moreover, we can also express rule 30 in conjunctive normal form (CNF), i.e., as a concatenation of a finite number of clauses linked by the Boolean $\odot$ operation, where each clause consists of a finite number of variables or negated variables connected by the Boolean~$\oplus$ operation, e.g., 
\begin{equation}\label{eq:rule30logic3}
    f_{30} = ( x_{-1}\oplus   x_0 \oplus  x_1) \odot ( \lnot x_{-1}\oplus  \lnot x_1 ) \odot (\lnot x_{-1}\oplus  \lnot x_0),
\end{equation} or 
\begin{equation}\label{eq:rule30logic4}
     f_{30} = ( x_{-1}\oplus   x_0 \oplus  x_1) \odot ( \lnot x_{-1}\oplus \lnot   x_1 ) \odot ( \lnot x_{-1}\oplus \lnot  x_0 \oplus  x_1).
\end{equation}
If we do not limit ourselves to normal forms, starting from an expression $f_{30}$ (with $x_1$ as one of its variables) for the transition function of CA~30, equivalent expressions for $f_{30}$ are given by 
\begin{equation}\label{eq:rule30logic5}
  f_{30} \oplus (x_1\odot \lnot x_1),
\end{equation}
and
\begin{equation}\label{eq:rule30logic6}
   f_{30} \odot (x_1\oplus \lnot x_1).
\end{equation}
Now, noting that the ReLU network realization of $f_{30}$ as expressed in~\eqref{eq:rule30logic2}, is given by
\begin{equation*}
   \hat{ \Phi}^{f_{30}}:= \hat{W}_3^{f_{30}} \circ \rho \circ \hat{W}_2^{f_{30}} \circ \rho \circ  \hat{W}_1^{f_{30}},
\end{equation*} with 
\begin{equation*}
 \hat{W_1}^{f_{30}}(x_{-1},x_0,x_1) = \begin{pmatrix}
    -1 & -1 & 1 \\
    1 & 1 & -1 \\
    -1 & 1 & -1 \\
    1 & -1 & -1
    \end{pmatrix} \begin{pmatrix}
        x_{1} \\x_0 \\x_{-1}
    \end{pmatrix}  + \begin{pmatrix}
        0 \\ -1 \\ 0 \\ 0
    \end{pmatrix}\text{,}
\end{equation*} 
\begin{equation*}
    \hat{ W_2}^{f_{30}}(x) = \begin{pmatrix}
    1 & 1 & 1 & 1 \\
    1 & 1 & 1 & 1
    \end{pmatrix} x + \begin{pmatrix}
   0 \\-1
    \end{pmatrix}\text{, } \quad x\in \mathbb{R}^4
\end{equation*}
\begin{equation*}
  \hat{W_3}^{f_{30}}(x) = \begin{pmatrix}1 & -1 \end{pmatrix} x, \quad x\in \mathbb{R}^2,
\end{equation*}
we can conclude, by comparing to the network~\eqref{eq:NNf30} built based on~\eqref{eq:rule30logic}, that different algebraic expressions for $f_{30}$ lead to different ReLU network realizations. Yet, these two networks must exhibit identical input-output relations. 
Conversely, the observation just made shows that ReLU networks can be modified without changing their input-output relations.

\subsection{Extracting DMV formulae from trained networks}\label{sec:reading-out}

We now discuss how DMV formulae can be read out from the network $\Phi^f$. Recall that the idea is that $\Phi^{f}$ was trained on CA evolution data and the extracted DMV formula describes the logical behavior underlying this CA. In the Boolean case, with $K=\{0,1\}$, the truth table representing the CA transition function can be obtained by passing all possible neighborhood state combinations through the trained network $\Phi^f$ and recording the corresponding outputs. 
Following, e.g., the procedure in \cite[Section 12.2]{rosen2007discrete}, this truth table can then be turned into a Boolean formula.
For state sets $K$ of cardinality larger than two, we are not aware of systematic procedures that construct DMV terms from truth tables.
The approach we develop here applies to state sets of arbitrary cardinality and does not work off truth tables, but rather starts from
a (trained) ReLU network~$\Phi^{f}$ that, by virtue of satisfying the interpolation property (\ref{eq:interpolation}), realizes a linear interpolation of the truth table. 

We start by noting that $\Phi^{f}$ encodes its underlying DMV formula both through the network topology and the network weights and biases.
Honoring how the network architecture, i.e., layer-by-layer compositions of affine mappings and the ReLU nonlinearity, affects the extracted DMV term, we aim to proceed in a decompositional manner and on a node-by-node basis. It turns out, however, that the individual neurons, in general, will not correspond to DMV terms. To see this, recall that DMV term functions map $[0,1]^n$ to $[0,1]$ and note that the range of the function $\rho$ is not in $[0,1]$ in general, e.g., $\rho(3x): [0,1] \rightarrow [0,3]$. We will deal with this matter by transforming the $\rho$-neurons in $\Phi^{f}$ into $\sigma$-neurons with a suitably chosen $\sigma:\mathbb{R} \rightarrow [0,1]$. This will be done in a manner that preserves the network's input-output relation and is reversible in a sense to be made precise below.
We start by defining $\sigma$-networks.

\begin{defn}[$\sigma$-network]\label{defn:sigmanetwork}
    Let $L \in \mathbb{N}$ and $N_0, N_1, \ldots \hspace{-0.02cm}, N_L \in \mathbb{N}$. A $\sigma$-network $\Psi$ is a map $\Psi: \mathbb{R}^{N_0} \rightarrow \mathbb{R}^{N_L} $ given by
\begin{equation}\label{eq:defSigma}
\Psi= \begin{cases} 
      W_1, & L=1 \\
      W_2 \circ \sigma \circ W_1, & L=2 \\
      W_L \circ \sigma \circ W_{L-1} \circ \sigma \circ \cdots \circ \sigma  \circ W_1, & L \geq 3
   \end{cases} \text{,}
\end{equation}
where, for $\ell \in \{ 1,2,\ldots \hspace{-0.02cm}, L\}$, $W_\ell: \mathbb{R}^{N_{\ell-1}}\rightarrow \mathbb{R}^{N_\ell}, W_\ell(x):= A_\ell x+b_\ell $ are affine transformations with weight matrices $A_\ell = \mathbb{R}^{N_\ell\times N_{\ell-1}}$ and bias vectors $b_\ell \in \mathbb{R}^{N_\ell}$, and the activation function $\sigma: \mathbb{R}\rightarrow [0,1], \sigma(x):= \min\{1,\max\{0,x\}\}$ acts component-wise. Moreover, we define the depth of the network $\Psi$ as $\mathcal{L}(\Psi):=L$.
\end{defn}

We proceed to formally establish the result on $\rho$-network to $\sigma$-network transformation announced above.
 
\begin{lem}\label{lem:ConstructSigmaNetwork}
    For $n \in \mathbb{N}$, let $\Phi:[0,1]^n \rightarrow [0,1]$ be a ReLU network with integer weights and biases in $\mathbb{Q}_k$, with $k \geq 2$. There exists a $\sigma$-network $\Psi: [0,1]^n \rightarrow [0,1]$, with $\mathcal{L}(\Psi) = \mathcal{L}(\Phi)$, integer weights, and biases in $\mathbb{Q}_k$, satisfying \begin{equation*}
        \Psi(x) = \Phi(x), \quad \text{ for all } x\in [0,1]^n.
    \end{equation*} 
\end{lem}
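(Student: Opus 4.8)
The plan is to transform $\Phi$ into $\Psi$ layer by layer, replacing every $\rho$-neuron by a bank of $\sigma$-neurons whose outputs are summed in the following affine map. The algebraic heart of the construction is the pointwise identity $\sigma(s) = \rho(s) - \rho(s-1)$, $s \in \mathbb{R}$, which is immediate from the definitions of $\rho$ and $\sigma$ and telescopes to
\[
\sum_{r=0}^{M-1} \sigma(s-r) = \rho(s) - \rho(s-M), \qquad M \in \mathbb{N},
\]
so that $\sum_{r=0}^{M-1}\sigma(s-r) = \rho(s)$ whenever $s \le M$. Hence, if the pre-activation fed into a given $\rho$-neuron is bounded above by an integer $M$ over the input domain, that neuron can be replaced by the $M$ neurons $s \mapsto \sigma(s-r)$, $r = 0,\dots,M-1$, followed by a summation. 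Note that the naive alternative of absorbing a rescaling into $\sigma$ via $\rho(s) = M\,\sigma(s/M)$ is ruled out here, as dividing the incoming weights by $M$ would violate integrality; the sum-of-shifts replacement, by contrast, leaves the incoming weights untouched and only shifts biases by integers, keeping them in $\mathbb{Q}_k$.

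Next I would propagate uniform integer upper bounds through the layers. Writing $\Phi = W_L \circ \rho \circ W_{L-1} \circ \cdots \circ \rho \circ W_1$ with $W_\ell(x) = A_\ell x + b_\ell$, $A_\ell$ integer and $b_\ell \in \mathbb{Q}_k^{N_\ell}$, set $M_0 := 1$ and recursively
\[
M_\ell := \max\Bigl(1,\ \Bigl\lceil \max_{1\le i\le N_\ell} \Bigl( \sum_{j:\,(A_\ell)_{ij}>0} (A_\ell)_{ij}\,M_{\ell-1} + (b_\ell)_i \Bigr) \Bigr\rceil \Bigr), \qquad \ell = 1,\dots,L-1.
\]
Since the network input $x \in [0,1]^n$ has nonnegative entries bounded by $M_0$, and since the output of $\rho$ is nonnegative, an easy induction on $\ell$ shows that for every $x \in [0,1]^n$ each coordinate of the $\ell$-th pre-activation of $\Phi$ lies in the interval $[\,\cdot\,,M_\ell]$, i.e., is $\le M_\ell$, for $\ell = 1,\dots,L-1$ (the coordinates of the final layer need no bound, since no $\rho$ follows them).

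Finally I would assemble $\Psi$. For each $\ell \in \{1,\dots,L-1\}$, replace the $i$-th $\rho$-neuron of layer $\ell$ by the neurons $u \mapsto \sigma\bigl((A_\ell u + b_\ell)_i - r\bigr)$, $r = 0,\dots,M_\ell-1$: this amounts to replacing $W_\ell$ by the affine map obtained by stacking $M_\ell$ copies of each row of $A_\ell$ (still integer) with biases $(b_\ell)_i - r \in \mathbb{Q}_k$ (as $r \in \mathbb{Z}$), and to replacing $W_{\ell+1}$ (in particular $W_L$ when $\ell = L-1$) by the affine map whose weight matrix replaces each column of $A_{\ell+1}$ by $M_\ell$ identical copies of it (still integer), the bias being unchanged (still in $\mathbb{Q}_k$). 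By the bound just established and the summation identity, the vector produced after the $\sigma$-activation in the new layer $\ell$ sums, group by group, to $\rho$ of the original $\ell$-th pre-activation, so the modified $W_{\ell+1}$ reconstructs exactly the original $(\ell+1)$-th pre-activation. Performing this surgery for all $\ell$ yields a map $\Psi$ of the form in \defref{defn:sigmanetwork} with $\mathcal{L}(\Psi) = L = \mathcal{L}(\Phi)$, integer weights, biases in $\mathbb{Q}_k$, and $\Psi(x) = \Phi(x)$ for all $x \in [0,1]^n$; in particular $\Psi(x) = \Phi(x) \in [0,1]$.

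The only real work is the bookkeeping in this last step: checking that each affine-map modification (integer bias shift, replication of integer columns) preserves "integer weights and biases in $\mathbb{Q}_k$", that the depth is unchanged, and that the summation identity legitimately applies because the numbers $M_\ell$ are genuine uniform integer upper bounds over $[0,1]^n$. I expect the bound propagation—ensuring each $M_\ell$ is finite, integer, and valid uniformly on the input domain—to be the main (minor) obstacle, since it is precisely what makes the finite sum $\sum_{r=0}^{M_\ell-1}\sigma(\,\cdot\,-r)$ coincide with $\rho$ on the relevant range.
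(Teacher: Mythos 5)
Your proposal is correct and follows essentially the same route as the paper: the paper's proof also converts the network layer by layer using the sum-of-shifts identity $\rho(x)=\sigma(x)+\sigma(x-1)+\cdots+\sigma(x-m)$ valid on a bounded range, replacing each $\rho$-neuron by finitely many $\sigma$-neurons with integer-shifted biases and adjusting the next affine map accordingly. The only cosmetic difference is that you propagate explicit recursive integer bounds $M_\ell$ on the original network's pre-activations, whereas the paper simply observes that after each conversion the next layer's inputs lie in the unit cube and invokes boundedness of $A_\ell x + b_\ell$ on that cube.
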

\begin{proof} 
By~\defref{defn:ReLU}, we can write\footnote{To keep the exposition simple, we consider the form of $\Phi$ for $L \geq 3$, the cases $L=1,2$ are trivially contained in the discussion.}
\begin{equation*}
    \Phi =  W_L \circ \rho \circ W_{L-1} \circ \rho \circ \cdots \circ \rho  \circ W_1,
\end{equation*} with $N_L = 1, N_0=n, W_\ell(x) = A_\ell x+b_\ell, A_\ell\in \mathbb{Z}^{N_\ell\times N_{\ell-1}}$, and $b_\ell \in \mathbb{Q}_{k}^{N_\ell}$, 
all for $\ell\in\{1,\ldots \hspace{-0.02cm}, L\}$. We start with the conversion of neurons in the first hidden layer of $\Phi$. These neurons are all of the form 
\begin{equation*}
    \rho(a_1x+b_1), \quad \text{ with } a_1 \in \mathbb{Z}^{1\times n}, b_1\in \mathbb{Q}_{k}.
\end{equation*} As the domain of $\Phi$ is the unit cube $[0,1]^n$, $a_1x+b_1$ resides in a bounded interval in $\mathbb{R}$. Denote this interval by~$[-u, u]\subset \mathbb{R}, u\geq 0$. If $u\leq 1,$ no conversion is needed as $ \rho(x')  = \sigma(x'),$ for $x'\in [-u,u]$. For $u>1$, we note that 
\begin{equation}\label{eq:relu2sigma}
     \rho(x)=  \sigma(x) + \sigma(x-1)+\cdots +\sigma(x-m), \quad x\in [-u,u],
\end{equation}
with $m\in \mathbb{N}, m\geq u-1$. In summary, we can replace each $\rho$-neuron in the first hidden layer by one or several $\sigma$-neurons according to 
\begin{equation*}
    \rho(a_1x+b_1) = \sigma(a_1x+b_1)
\end{equation*} whenever $a_1x+b_1 \leq 1$, for all $x\in [0,1]^n$, or 
\begin{equation*}
     \rho(a_1x+b_1) =\sigma(a_1x+b_1) + \cdots +\sigma(a_1x+b_1-m)
\end{equation*} when $a_1x+b_1 \leq m+1,$ for all $x\in [0,1]^n$, for some $m>1, m\in \mathbb{N}$. The resulting network $\Phi^{(1)}:[0,1]^n\rightarrow[0,1]$ is given by 
\begin{equation*}
    \Phi^{(1)} =  W_L \circ \rho \circ \cdots \circ W_3 \circ \rho \circ W_2^{(1)} \circ \sigma  \circ W_1^{(1)},  
\end{equation*} with $W_1^{(1)}(x) = A_1^{(1)}x+b_1^{(1)}, A_1^{(1)}\in \mathbb{Z}^{N_1^{(1)}\times n}, b_1^{(1)} \in \mathbb{Q}_{k}^{N_1^{(1)}},$  and $W_2^{(1)}(x) = A_2^{(1)}x+b_2, A_2^{(1)}\in \mathbb{Z}^{N_2\times N_1^{(1)}}$, where $ N_1^{(1)}$ is the number of $\sigma$-neurons in the first hidden layer of $ \Phi^{(1)}$, and satisfies 
\begin{equation*}
     \Phi^{(1)}(x) = \Phi(x), \text{ for } x\in [0,1]^n.
\end{equation*}
Noting that the input of the second-hidden layer of $\Phi^{(1)}$ is contained in the unit cube $[0,1]^{N_1^{(1)}}$, 
we can proceed as in the first step and replace each of the $\rho$-neurons in the second hidden layer by $\sigma$-neurons. 
The resulting network is given by 
\begin{equation*}
    \Phi^{(2)} =  W_L \circ  \cdots \circ W_4 \circ \rho \circ W_3^{(2)} \circ  \sigma \circ W_2^{(2)} \circ \sigma  \circ W_1^{(1)},  
\end{equation*} 
with $W_2^{(2)}(x) = A_2^{(2)}x+b_2^{(2)}, A_2^{(2)}\in \mathbb{Z}^{N_2^{(2)}\times N_1^{(1)}}, b_2^{(2)} \in \mathbb{Q}_{k}^{N_2^{(2)}}$, and $W_3^{(2)}(x) = A_3^{(2)}x+b_3, A_3^{(2)}\in \mathbb{Z}^{N_3\times N_2^{(2)}}$, where $N_2^{(2)}$ is the number of $\sigma$-neurons in the second hidden layer of $\Phi^{(2)}$, and satisfies 
\begin{equation*}
     \Phi^{(2)}(x) = \Phi(x), \text{ for } x\in [0,1]^n.
\end{equation*} Continuing in this manner, we eventually get the network $\Phi^{(L-1)}: [0,1]^n\rightarrow[0,1]$ given by 
\begin{equation*}
    \Phi^{(L-1)} =  W_L^{(L-1)} \circ \sigma \circ  W_{L-1}^{(L-1)} \circ \sigma \circ \cdots  \circ \sigma \circ W_2^{(2)} \circ \sigma  \circ W_1^{(1)},  
\end{equation*} with $W_\ell^{(\ell)}(x) = A_\ell^{(\ell)}x+b_\ell^{(\ell)}, A_\ell^{(\ell)}\in \mathbb{Z}^{N_\ell^{(\ell)}\times N_{\ell-1}^{(\ell-1)}}, b_\ell^{(\ell)} \in \mathbb{Q}_{k}^{N_\ell^{(\ell)}}$, where $N_\ell^{(\ell)}$ is the number of ($\sigma$-)neurons in the $\ell$-th hidden layer of $\Phi^{(L-1)}$, for $\ell\in\{1,\ldots \hspace{-0.02cm}, L-1\}$, and $W_L^{(L-1)}(x)= A_L^{(L-1)}x+b_L, A_L^{(L-1)} \in \mathbb{Z}^{N_L \times N_{L-1}^{(L-1)}}$. The proof is concluded by noting that the resulting $\sigma$-network $\Psi:= \Phi^{(L-1)}$ satisfies $\Psi(x) = \Phi(x)$, for $x\in [0,1]^n$, and has integer weights and biases in $\mathbb{Q}_{k}$. 
\end{proof}

The reverse transformation from a $\sigma$-network to a $\rho$-network can be effected through~\lemref{lem:sigmatorelu}. 
Starting from a ReLU nework $\Phi$, transforming it into a $\sigma$-network $\Psi$ according to~\lemref{lem:ConstructSigmaNetwork}, and then going back to a ReLU network $\Phi'$ by using~\lemref{lem:sigmatorelu}, will recover a network $\Phi'$ that is functionally equivalent to $\Phi$, i.e., $\Phi'(x) = \Phi(x)$, for all $x\in [0,1]^n$, but can be structurally different from $\Phi$. For example, choosing different 
$m \in \mathbb{N}$ with $m \geq u-1$ in~\eqref{eq:relu2sigma} would lead to such functionally equivalent, but structurally different recovered networks $\Phi'$.

We are now ready to show how DMV terms can be extracted from $\sigma$-networks. Thanks to~\lemref{lem:ConstructSigmaNetwork}, we can restrict ourselves to
$\sigma$-networks with integer weights and biases in $\mathbb{Q}_{k}$. We begin by considering a single $\sigma$-neuron, which is of the form
\begin{equation}\label{eq:sigmaneuron}
    \sigma(m_1x_1+\cdots+m_nx_n+b), \quad \text{ with } m_1,\ldots \hspace{-0.02cm},m_n\in \mathbb{Z}, b\in \mathbb{Q}_{k}. 
\end{equation} The following lemma provides an inductive way for the extraction of DMV terms from individual $\sigma$-neurons. 

\begin{lem}\label{lem:extractmv} Consider the function $f(x_1,\ldots \hspace{-0.02cm}, x_n) = m_1x_1+\cdots+m_nx_n+b,$ defined on $[0,1]^{n}$, with $ m_1,\ldots \hspace{-0.02cm},m_n\in \mathbb{Z}, b\in \mathbb{Q}_{k}.$ Without loss of generality, assume that $\max_{i=1,...,n} |m_i| = m_1$. With $f_{\circ}(x_1,\ldots \hspace{-0.02cm}, x_n) =  (m_1-1)x_1+m_2x_2+\cdots+m_nx_n+b$, we have 
    \begin{equation}\label{eq:lemma4.4}
        \sigma(f) = (\sigma(f_{\circ}) \oplus x_1) \odot \sigma(f_{\circ}+1).
    \end{equation}
\end{lem}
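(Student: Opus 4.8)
The plan is to reduce the claimed identity to a purely scalar statement and then dispose of the latter by a short case analysis. On the domain $[0,1]^{n}$ one may write $f = f_{\circ}+x_{1}$ with $x_{1}\in[0,1]$, so it suffices to prove that
\begin{equation}\label{eq:scalarclaim}
    \sigma(t+s) = (\sigma(t)\oplus s)\odot\sigma(t+1),\qquad \text{for all } t\in\mathbb{R},\ s\in[0,1],
\end{equation}
and then substitute $t = f_{\circ}(x_{1},\ldots,x_{n})$ and $s = x_{1}$ to obtain~\eqref{eq:lemma4.4}. Unfolding $u\oplus v = \min\{1,u+v\}$ and $u\odot v = \max\{0,u+v-1\}$, the right-hand side of~\eqref{eq:scalarclaim} is $\max\{0,\ \min\{1,\sigma(t)+s\}+\sigma(t+1)-1\}$.

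To verify~\eqref{eq:scalarclaim} I would note that both sides are continuous and piecewise linear in $(t,s)$, with breakpoints in $t$ only at $t\in\{-1,0,1\}$, so it is enough to check the four regions $t\le -1$, $-1<t\le 0$, $0<t\le 1$, $t>1$, throughout using $0\le s\le 1$. For $t\le -1$ one has $\sigma(t)=\sigma(t+1)=0$ and $t+s\le 0$, so the left side is $0$ and the right side is $\max\{0,s-1\}=0$. For $-1<t\le 0$ one has $\sigma(t)=0$, $\sigma(t+1)=t+1$, $\min\{1,0+s\}=s$, and $t+s\in(-1,1]$, so the left side is $\max\{0,t+s\}$, matching $\max\{0,s+(t+1)-1\}$ on the right. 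For $0<t\le 1$ one has $\sigma(t)=t$, $\sigma(t+1)=1$, so the left side is $\min\{1,t+s\}$, and the right side is $\max\{0,\min\{1,t+s\}\}=\min\{1,t+s\}$ since $t+s>0$. Finally, for $t>1$ one has $\sigma(t)=\sigma(t+1)=1$ and $t+s>1$, so both sides equal $1$. This proves~\eqref{eq:scalarclaim}, and the lemma follows by the substitution above.

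Two remarks round things off. The hypothesis $s\in[0,1]$ is genuinely needed (it gives $\min\{1,\sigma(t)+s\}=\sigma(t)+s$ when $\sigma(t)=0$ and keeps $t+s$ within one unit of $t$), which is precisely why the reduction relies on $x_{1}$ being a coordinate of the unit cube rather than an arbitrary affine expression. The normalization $\max_{i}|m_{i}| = m_{1}$, achievable after reindexing the variables (the degenerate case $m_{1}=\cdots=m_{n}=0$, where $\sigma(f)$ is a constant and is treated separately), serves only the inductive use of the lemma: it guarantees that $f_{\circ}$ has the nonnegative integer coefficient $m_{1}-1$ on $x_{1}$ together with strictly smaller coefficient $\ell_{1}$-norm, while $f_{\circ}$ and $f_{\circ}+1$ keep integer coefficients and biases in $\mathbb{Q}_{k}$, so that~\eqref{eq:lemma4.4} can be applied recursively until all coefficients vanish. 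The main obstacle is purely bookkeeping: ensuring the case split on $t$ is exhaustive, that the simplifications of $\sigma(t)$, $\sigma(t+1)$, the inner $\min$, and the outer $\max$ are correct in each region, and that $x_{1}\in[0,1]$ is invoked so that one may legitimately set $s=x_{1}$; there is no conceptual difficulty beyond this.
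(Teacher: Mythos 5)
Your proof is correct and follows essentially the same route as the paper's: writing $f=f_{\circ}+x_1$ and verifying the identity by a four-way case split on the value of $f_{\circ}$ (your regions $t\le-1$, $-1<t\le0$, $0<t\le1$, $t>1$ match the paper's Cases 1--4), using $x_1\in[0,1]$ in exactly the same places. The only cosmetic difference is that you phrase the case analysis as a scalar identity in $(t,s)$ before substituting, which the paper does implicitly.
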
 
\begin{proof}
   See Appendix~\ref{app:proofLem4.4} \cite{rose1958}, \cite{mundici1994constructive}.
\end{proof}

We next demonstrate, by way of a simple example, how~\lemref{lem:extractmv} can be used to extract DMV terms from individual $\sigma$-neurons. 

\begin{exmp}\label{exmp:Extraction}
We extract the DMV term underlying $ \sigma(x-2y+1/2)$ and start by noting that
\begin{align}
\quad \;  & \sigma(x-2y+1/2) \\
& = \; 1- \sigma(-x+2y-1/2+1) \label{eq:neg} \\
  & = \; \lnot\sigma(-x+2y+1/2) \label{eq:lnot} \\
   & = \; \lnot( (\sigma(-x+y+1/2)\oplus y)\odot \sigma(-x+y+3/2)), \label{eq:ApplyLem}
\end{align} 
where in~\eqref{eq:neg} we used $\sigma(x) = 1-\sigma(-x+1)$, for $x\in \mathbb{R}$, and~\eqref{eq:lnot} is by $\lnot x=1-x$. In~\eqref{eq:ApplyLem}  we applied~\lemref{lem:extractmv} with $x_1=y$. We again apply~\lemref{lem:extractmv} with $x_1=y$ to remove the $y$-terms inside $\sigma(\cdot)$ as follows: 
\begin{equation}\label{eq:Removey}
\begin{aligned}
     \sigma(-x+y+1/2) &= (\sigma(-x+1/2)\oplus y)\odot \sigma(-x+3/2) \\
       \sigma(-x+y+3/2) &= (\sigma(-x+3/2)\oplus y)\odot \sigma(-x+5/2).
\end{aligned}   
   \end{equation} Likewise, we can remove the $x$-terms inside $\sigma(\cdot)$ according to
\begin{equation}\label{eq:Removex}
   \begin{aligned}
        \sigma(-x+1/2) &= \lnot \sigma(x+1/2) =\lnot( (\sigma(1/2)\oplus x)\odot \sigma(3/2) ) = \lnot (1/2\oplus x)\\
        \sigma(-x+3/2) &= \lnot \sigma(x-1/2) =\lnot( (\sigma(-1/2)\oplus x)\odot \sigma(1/2) ) = \lnot (x \odot 1/2) \\
         \sigma(-x+5/2) &= \lnot \sigma(x-3/2) =\lnot( (\sigma(-3/2)\oplus x)\odot \sigma(-1/2) ) = 1.
   \end{aligned}     
   \end{equation} 
For the constant term $1/2$, by~\defref{defn:DivisibleMV01}, we can write
\begin{equation*}
1/2 = \delta_2 1.
\end{equation*} 
Substituting~\eqref{eq:Removex} back into~\eqref{eq:Removey} and then substituting the results thereof into~\eqref{eq:ApplyLem}, we obtain the DMV term corresponding to $ \sigma(x-2y+1/2)$ according to
\begin{multline}\label{eq:delta}
\lnot( (((\lnot (\delta_2 1\oplus x)\oplus y)\odot \lnot (x \odot \delta_2 1 ))\oplus y)\odot (\lnot (x \odot \delta_2 1 )\oplus y)).
\end{multline}
We conclude the example by noting that the algebraic expression of the DMV term extracted using this procedure depends on the order in which the variables are eliminated through~\lemref{lem:extractmv}; functionally these expressions will, however, all be equivalent.
\end{exmp}

We are now ready to describe how the DMV term underlying a given ReLU network with integer weights and biases in $\mathbb{Q}_{k}$ mapping $[0,1]^{n}$ to $[0,1]$ can be extracted. The corresponding algorithm starts by
applying~\lemref{lem:ConstructSigmaNetwork} to convert the network into a functionally equivalent $\sigma$-network  
\begin{equation*}
 \Psi^f=  W_L \circ \sigma \circ \cdots \circ W_2 \circ \sigma \circ W_1,
\end{equation*}
where $W_1,\ldots \hspace{-0.02cm}, W_L$ are affine transformations with integer weights and biases in $\mathbb{Q}_{k}$. As the range of $\Psi^f$ is contained in $[0,1]$, we can apply the $\sigma$-activation function to the output layer without changing the mapping, i.e.,
\begin{equation*}
 \Psi^f= \sigma \circ W_L \circ \cdots \circ \sigma \circ W_2 \circ \sigma \circ W_1.
\end{equation*} Next, \lemref{lem:extractmv} is applied to each layer $\sigma \circ W_{\ell}$, $\ell = 1,\ldots \hspace{-0.02cm},L$, as in \exmpref{exmp:Extraction} above to extract the DMV terms corresponding to the individual output neurons in layer $\ell$. The resulting DMV formulae are then algebraically composed honoring the compositional structure of $\Psi^f$ to yield the overall DMV term corresponding to the ReLU network we started from. We note that in order to have the compositional structure of ReLU networks correspond to compositions in DMV algebra as just explained, it is crucial to first transform the ReLU network into a (functionally equivalent) $\sigma$-network according to~\lemref{lem:ConstructSigmaNetwork}. This is, as mentioned above, to ensure that the outputs of neurons inside the network are guaranteed to be in $[0,1]$ so as to
remain in the set ($[0,1]$) underlying the DMV algebra.
We have automated our extraction algorithm in Python software\footnote{Implementation available at \url{https://www.mins.ee.ethz.ch/research/downloads/NN2MV.html} }; the algorithm takes a ReLU network as input and outputs a corresponding DMV formula.
Applying this algorithm, for example, to the ReLU network $f_{\text{c}} = \rho(x_{-1}+x_0+x_1)- \rho(x_{-1}+x_0+x_1-1)$ linearly interpolating the transition function in \tblref{tbl:totalistic}, indeed, recovers the MV term $\tau = x_{-1}\oplus x_0 \oplus x_1$ we started from in \exmpref{exmp:2.1}.

\hspace{5mm}

\textbf{Acknowledgment.}
The authors are deeply grateful to Prof. Olivia Caramello for drawing their attention to the McNaughton theorem and, more generally, to MV logic.

 \appendix
 \section{MV term corresponding to $f_{\text{c}}$ in~\eqref{eq:egf}}\label{app:A}  
 First, expand the domains of the linear pieces of $f_{\text{c}}$ to $[0,1]$ and denote them by $f_1, f_2,f_3$, with
\begin{equation*}
\begin{aligned}
    f_1(x) &= 3x,\\
    f_2(x) &= 1,\\
    f_3(x) &= -3x+3,    
\end{aligned} 
\end{equation*}
for $x\in [0,1]$. Since $f_{\text{c}}:[0,1]\rightarrow [0,1]$, we can compose each linear piece with $\sigma(\cdot)$, which results in the truncated linear functions $\sigma(f_1), \sigma(f_2),\sigma(f_3)$ depicted in \figref{fig:f123}. Next, we apply \lemref{lem:extractmv} to extract the MV terms corresponding to $\sigma(f_1), \sigma(f_2),\sigma(f_3)$.
Trivally, $\sigma(f_2) = \lnot 0$ on $[0,1]$. The function $\sigma(f_1)$ can be expressed as 
\begin{equation*}
    \sigma(f_1)(x) = x\oplus x\oplus x, \text{ for } x\in [0,1].
\end{equation*}
Likewise, we have 
\begin{equation*}
    \sigma(f_3)(x) = x\odot x\odot x, \text{ for } x\in [0,1].
\end{equation*}
Inspection of \figref{fig:f123} and $f_{\text{c}}$ in \figref{fig:examplefx} allows us to conclude that 
\begin{equation*}
    f_{\text{c}}(x) = \min\{ \sigma(f_1)(x), \sigma(f_2)(x), \sigma(f_3)(x) \}, \text{ for } x\in [0,1].
\end{equation*}
Finally, it is readily verified that the ``min'' operation can be realized according to 
\begin{equation*}
    \min\{x, y\}  = x\wedge y,
\end{equation*} and 
\[
 \min\{x, y,z\} = \min\{ \min \{x,y\},z\}.
\]
Putting the pieces together, we obtain the MV term corresponding to $f_{\text{c}}$ as
\begin{equation*}
  (x\oplus x\oplus x) \wedge  \lnot 0 \wedge (x\odot x\odot x).
\end{equation*}

\begin{figure}[h]
\centering
\resizebox{9cm}{!}{

\begin{tikzpicture}[x=1.5cm, y=1.5cm, >=stealth, scale=1, transform shape]
    \draw[->] (0,0) -- (0,2.4) node[anchor=west] {$\sigma(f_1)$};
    \draw[->] (0,0) -- (2.4,0) node[anchor= west] {$x$};
  \draw (-0.02, 2.1) -- (0.02, 2.1) node[anchor= east] {$1$}; 
      \draw (0.7,-0.02) -- (0.7,0.02) node[anchor= north] {$1/3$};
       \draw (2.1,-0.02) -- (2.1,0.02) node[anchor= north] {$1$}; 
    \node at (0.2,-0.3 cm) {$0$};
  \draw[ultra thick] (0,0) -- (0.7,2.1) node[anchor= north] {};
 \draw[ultra thick] (0.7,2.1) -- (2.1, 2.1) node[anchor= north] {};
       \draw[->] (3,0) -- (3,2.4) node[anchor=west] {$\sigma(f_2)$};
     \draw[->] (3,0) -- (5.4,0) node[anchor= west] {$x$};
      \draw (2.98, 2.1) -- (3.02, 2.1) node[anchor= east] {$1$}; 
      \draw (3.7,-0.02) -- (3.7,0.02) node[anchor= north] {$1/3$};
       \draw (5.1,-0.02) -- (5.1,0.02) node[anchor= north] {$1$}; 
    \node at (3.2,-0.3 cm) {$0$};
  \draw[ultra thick] (3,2.1) -- (5.1, 2.1) node[anchor= north] {};

 \draw[->] (6,0) -- (6,2.4) node[anchor=west] {$\sigma(f_3)$};
     \draw[->] (6,0) -- (8.4,0) node[anchor= west] {$x$};
   \draw (5.98, 2.1) -- (6.02, 2.1) node[anchor= east] {$1$}; 
      \draw (7.4,-0.02) -- (7.4,0.02) node[anchor= north] {$2/3$};
       \draw (8.1,-0.02) -- (8.1,0.02) node[anchor= north] {$1$}; 
    \node at (6.2,-0.3 cm) {$0$};
 \draw[ultra thick] (6.0,2.1) -- (7.4, 2.1) node[anchor= north] {};
 \draw[ultra thick] (8.1, 0) -- (7.4, 2.1) node[anchor= north] {};
\end{tikzpicture}

}
\caption{The truncated functions $\sigma(f_1), \sigma(f_2), \sigma(f_3)$.}\label{fig:f123}
\end{figure}
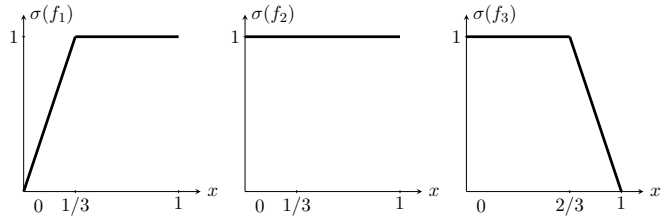

 \section{Transforming $\sigma$-networks into equivalent ReLU networks}\label{app:B}
 \begin{lem}\label{lem:sigmatorelu}
      For $d,d' \in \mathbb{N}$, let $\Psi:\mathbb{R}^d \rightarrow \mathbb{R}^{d'}$ be a $\sigma$-network with integer weights and biases in $\mathbb{Q}_{k}$. There exists a ReLU network $\Phi: \mathbb{R}^d \rightarrow \mathbb{R}^{d'}$, with $\mathcal{L}(\Phi) = \mathcal{L}(\Psi)$, integer weights, and biases in $\mathbb{Q}_{k}$, satisfying 
      \begin{equation*}
        \Phi(x) = \Psi(x), \quad \text{ for all } x\in \mathbb{R}^d.
    \end{equation*} 
 \end{lem}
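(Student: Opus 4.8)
The plan is to construct $\Phi$ explicitly from $\Psi$ by simulating each $\sigma$-neuron with a pair of $\rho$-neurons, exploiting the elementary identity
\begin{equation*}
\sigma(x) = \min\{1,\max\{0,x\}\} = \rho(x) - \rho(x-1), \qquad x \in \mathbb{R},
\end{equation*}
which is dual to the decomposition $\rho(x) = \sigma(x) + \sigma(x-1) + \cdots + \sigma(x-m)$ used in the proof of \lemref{lem:ConstructSigmaNetwork}. Since each $\sigma$-neuron gets replaced by two $\rho$-neurons, this widens each hidden layer by a factor of two while keeping the number of layers equal to $\mathcal{L}(\Psi)$, which is precisely the depth claimed in the statement.

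Concretely, write $\Psi = W_L \circ \sigma \circ W_{L-1} \circ \sigma \circ \cdots \circ \sigma \circ W_1$ with $W_\ell(x) = A_\ell x + b_\ell$, $A_\ell \in \mathbb{Z}^{N_\ell \times N_{\ell-1}}$, $b_\ell \in \mathbb{Q}_{k}^{N_\ell}$, for $\ell \in \{1,\ldots,L\}$ (the case $L = 1$, in which $\Psi = W_1$ is already a ReLU network, and the case $L = 2$ are contained in what follows); let $\mathbf{1}_m \in \mathbb{R}^m$ be the all-ones vector. First I would define $\tilde W_1 : \mathbb{R}^{N_0} \to \mathbb{R}^{2N_1}$ to have weight matrix obtained by stacking $A_1$ on top of $A_1$ and bias obtained by stacking $b_1$ on top of $b_1 - \mathbf{1}_{N_1}$, so that $\rho(\tilde W_1(x))$ is $\rho(W_1(x))$ stacked on top of $\rho(W_1(x) - \mathbf{1}_{N_1})$; applying the $N_1 \times 2N_1$ block matrix $[\,\mathbb{I}_{N_1}\ \ {-}\mathbb{I}_{N_1}\,]$ to this vector reproduces $\sigma(W_1(x))$. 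For an intermediate layer $\ell \in \{2,\ldots,L-1\}$, whose input now carries the doubled representation emerging from the $(\ell-1)$-st $\rho$-layer, I would absorb the preceding block matrix $[\,\mathbb{I}\ \ {-}\mathbb{I}\,]$ into $W_\ell$, which yields the integer matrix $M_\ell := [\,A_\ell\ \ {-}A_\ell\,] \in \mathbb{Z}^{N_\ell \times 2N_{\ell-1}}$, and define $\tilde W_\ell : \mathbb{R}^{2N_{\ell-1}} \to \mathbb{R}^{2N_\ell}$ with weight matrix $M_\ell$ stacked on top of $M_\ell$ and bias $b_\ell$ stacked on top of $b_\ell - \mathbf{1}_{N_\ell}$; then applying $[\,\mathbb{I}_{N_\ell}\ \ {-}\mathbb{I}_{N_\ell}\,]$ to $\rho(\tilde W_\ell(\cdot))$ reproduces $\sigma(A_\ell \sigma(\cdot) + b_\ell)$. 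Finally, for the output layer I would absorb the last block matrix into $W_L$, giving $\tilde W_L(v) = [\,A_L\ \ {-}A_L\,] v + b_L$. The network $\Phi := \tilde W_L \circ \rho \circ \tilde W_{L-1} \circ \rho \circ \cdots \circ \rho \circ \tilde W_1$ then has exactly $L = \mathcal{L}(\Psi)$ layers, and $\Phi(x) = \Psi(x)$ for all $x \in \mathbb{R}^d$ follows by a short induction on $\ell$ showing that the output of the $\ell$-th $\rho$-layer of $\Phi$ is $\rho(z_\ell)$ stacked on top of $\rho(z_\ell - \mathbf{1}_{N_\ell})$, where $z_\ell$ denotes the pre-activation of the $\ell$-th $\sigma$-layer of $\Psi$.

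It then remains to verify the coefficient constraints, which hold by construction: every weight matrix $\tilde W_\ell$ is assembled by stacking and concatenating copies of $A_\ell$ and $-A_\ell$, hence has integer entries; and every bias of $\Phi$ equals $b_\ell$ or $b_\ell - \mathbf{1}_{N_\ell}$ for some $\ell$, both of which lie in $\mathbb{Q}_{k}^{N_\ell}$, since $b_\ell \in \mathbb{Q}_{k}^{N_\ell}$ and, as $k \geq 2$, $1 = (k-1)/(k-1) \in \mathbb{Q}_{k}$. I do not expect a genuine obstacle here; the only points demanding care are keeping the construction at depth $L$ by widening rather than deepening, and checking that absorbing the block matrices $[\,\mathbb{I}\ \ {-}\mathbb{I}\,]$ into the subsequent affine maps does not destroy integrality of the weights, which it does not, as it merely concatenates integer matrices.
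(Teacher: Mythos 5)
Your construction is correct and follows essentially the same route as the paper's proof: both rest on the identity $\sigma(x)=\rho(x)-\rho(x-1)$, replace each $\sigma$-neuron by a stacked pair of $\rho$-neurons with biases $b_\ell$ and $b_\ell-\mathbf{1}_{N_\ell}$, and recombine via the block matrix $[\,\mathbb{I}\ \ {-}\mathbb{I}\,]$ absorbed into the next affine map, preserving depth, integrality of weights, and biases in $\mathbb{Q}_k$. The only cosmetic difference is that you write out the absorbed compositions $[\,A_\ell\ \ {-}A_\ell\,]$ explicitly, whereas the paper keeps the recombination maps $H_\ell^{(\ell)}$ as separate factors to be merged with the adjacent affine layers.
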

 \begin{proof}
     By~\defref{defn:sigmanetwork}, we can write\footnote{To keep the exposition simple, we consider the form of $\Psi$ for $L \geq 3$, the cases $L=1,2$ are trivially contained in the discussion.}
\begin{equation*}
    \Psi =  W_L \circ \sigma \circ W_{L-1} \circ \sigma \circ \cdots \circ \sigma  \circ W_1,
\end{equation*} with $W_\ell(x) = A_\ell x+b_\ell, A_\ell\in \mathbb{Z}^{N_\ell\times N_{\ell-1}}, b_\ell \in \mathbb{Q}_{k}^{N_\ell},$ for $\ell\in\{1,\ldots \hspace{-0.02cm}, L\}$, $N_0=d$, and $N_L=d'$. The basic idea of the proof is to use the relationship 
\begin{equation}\label{eq:sigm-to-rho}
    \sigma(x) = \rho(x)-\rho(x-1), \text{ for all } x \in \mathbb{R},
\end{equation}
to replace every $\sigma$-neuron in $\Psi$ with a pair of $\rho$-neurons. We start with $\sigma \circ W_{1}$ 
to obtain the equivalent network
\begin{equation*}
    \Psi^{(1)} =  W_L  \circ \sigma \circ \cdots \circ \sigma \circ W_2 \circ H_1^{(1)} \circ \rho  \circ W_1^{(1)},
\end{equation*} 
where
\begin{align*}
    W_1^{(1)}(x) &:= \begin{pmatrix}
        W_1(x) \\
        W_1(x) -1_{N_1}
    \end{pmatrix}, \quad x\in \mathbb{R}^{N_0},   \\
    H_1^{(1)}(x) &:= \begin{pmatrix}
    \mathbb{I}_{N_1} & -\mathbb{I}_{N_1}
    \end{pmatrix}x, \quad x\in \mathbb{R}^{2N_1},
\end{align*} 
with $1_{N_1}$ denoting the $N_1$-dimensional column vector with all entries equal to~$1$.
It follows directly that $\Psi^{(1)}$ has integer weights and biases in $\mathbb{Q}_{k}$.
Continuing in this manner, we get
\begin{equation*}
    \Psi^{(L-1)} =  W_L  \circ H^{(L-1)}_{L-1} \circ \rho \circ W_{L-1}^{(L-1)} \circ \cdots \circ   W_{2}^{(2)} \circ H_1^{(1)} \circ \rho  \circ W_1^{(1)},
\end{equation*} with 
\begin{align*}
    W_\ell^{(\ell)}(x) &= \begin{pmatrix}
        W_\ell(x) \\
        W_\ell(x) -1_{N_\ell}
    \end{pmatrix}, \quad x\in \mathbb{R}^{N_{\ell-1}},   \\
    H_\ell^{(\ell)}(x) &= \begin{pmatrix}
    \mathbb{I}_{N_\ell} & -\mathbb{I}_{N_\ell}
    \end{pmatrix}x, \quad x\in \mathbb{R}^{2N_\ell},
\end{align*} for $\ell=1,\ldots \hspace{-0.02cm},L-1,$ satisfying
\begin{equation*}
     \Psi^{(L-1)}(x) =\Psi(x), \text{ for all } x\in \mathbb{R}^d.
\end{equation*}
The proof is concluded upon identifying $\Psi^{(L-1)}$ with $\Phi$ and noting that $\Psi^{(L-1)}$ has integer weights and biases in $\mathbb{Q}_{k}$ and
$\mathcal{L}(\Psi^{(L-1)}) = \mathcal{L}(\Psi)$.
\end{proof}

 \section{Proof of~\lemref{lem:extractmv} \cite{rose1958}, \cite{mundici1994constructive}}\label{app:proofLem4.4}
 \begin{proof}
     We follow the line of arguments in \cite{mundici1994constructive} and consider four different cases. \\
\noindent \textit{Case 1:} $f_{\circ}(x) \geq 1,$ for all $x\in [0,1]^n$. In this case, the LHS of~\eqref{eq:lemma4.4} is 
         \[
          \sigma(f)=1.
         \]
The RHS becomes 
\begin{equation*}
      (\sigma(f_{\circ}) \oplus x_1) \odot \sigma(f_{\circ}+1)  =  (1 \oplus x_1)\odot 1 = 1.
\end{equation*}

\noindent \textit{Case 2:} $f_{\circ}(x) \leq -1,$ for all $x\in [0,1]^n$. In this case, the LHS is 
     \[
          \sigma(f)=0.
         \]
The RHS is given by 
\begin{equation*}
      (\sigma(f_{\circ}) \oplus x_1) \odot \sigma(f_{\circ}+1) 
     =  (0 \oplus x_1)\odot 0=0. 
\end{equation*}

\noindent\textit{Case 3:}  $-1 < f_{\circ}(x)\leq 0$, for all $x\in [0,1]^n$. In this case, $f \in (-1,1]$ as $x_1\in [0,1]$. The RHS of~\eqref{eq:lemma4.4} becomes 
\begin{align*}
     & (\sigma(f_{\circ}) \oplus x_1) \odot \sigma(f_{\circ}+1) \\
     & =     (0 \oplus x_1)\odot (f_{\circ}+1) \\
     & = x_1\odot (f_{\circ}+1)\\
     & = \max\{0,x_1+f_{\circ}+1-1\} \\
     & = \max\{0,f\}\\
     & = \sigma(f).
\end{align*}

\noindent \textit{Case 4:}  $0 < f_{\circ}(x) < 1$, for all $x\in [0,1]^n$. In this case, $f\in (0,2)$. The RHS becomes 
\begin{align*}
     & (\sigma(f_{\circ}) \oplus x_1) \odot \sigma(f_{\circ}+1)\\
     & =    (f_{\circ} \oplus x_1)\odot 1 \\
     & = f_{\circ} \oplus x_1\\
     & = \min\{1,f_{\circ}+x_1\}\\
     & = \min \{1,f\}\\
     & = \sigma(f).
\end{align*}
 \end{proof}
\section{}

\begin{lem}\label{lem:piecewise}
    Let $n\in \mathbb{N}$ and let $f_{\text{c}}:[0,1]^n\rightarrow [0,1]$ be a continuous piecewise linear function with linear pieces 
\begin{equation}\label{eq:linearpj}
        p_j(x_1, \ldots \hspace{-0.02cm}, x_{n}) = m_{j1}x_1+\cdots+m_{jn}x_{n}+b_j, \quad  j = 1,\ldots \hspace{-0.02cm},\ell,
\end{equation} where $m_{j1},\ldots \hspace{-0.02cm}, m_{jn} \in \mathbb{R}$, $b_j\in \mathbb{R}$. 
Let $\Pi$ be the set of all permutations of the index set $\{1,\ldots \hspace{-0.02cm},\ell\}$. There exist a nonempty set of permutations $\Sigma \subset \Pi$ and, for each $\pi \in \Sigma$, an integer $i_\pi$ with $1\leq i_\pi  \leq \ell$, such that 
\begin{equation}\label{eq:piecewiseMinMax}
    f_{\text{c}}(x) = \max_{\pi\in \Sigma}\min_{i \in \{1,\ldots \hspace{-0.02cm}, i_\pi\}} p_{\pi(i)}(x), \text{ for all } x\in [0,1]^n.
\end{equation}
\end{lem}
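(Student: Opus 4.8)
The plan is to establish the classical ``lattice'' (max--min) representation of a continuous piecewise linear function by its affine pieces, and then recast it into the indexed form demanded by the statement. For $i\in\{1,\ldots,\ell\}$ set $M_i=\{x\in[0,1]^n: f_{\text{c}}(x)=p_i(x)\}$; by continuity each $M_i$ is closed, and since $f_{\text{c}}$ is piecewise linear with pieces $p_1,\ldots,p_\ell$ the sets $M_i$ cover $[0,1]^n$. For every $i$ with $M_i\neq\emptyset$ define $S_i=\{j\in\{1,\ldots,\ell\}: p_i(x)\le p_j(x)\text{ for all }x\in M_i\}$ (so $i\in S_i$, hence $S_i\neq\emptyset$) and put $g_i=\min_{j\in S_i}p_j$. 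First I would record the easy half: for $x\in M_i$ one has $p_j(x)\ge p_i(x)=f_{\text{c}}(x)$ for all $j\in S_i$, so $g_i(x)=p_i(x)=f_{\text{c}}(x)$; consequently, for an arbitrary $x$, choosing an index $j$ with $f_{\text{c}}(x)=p_j(x)$ (so $x\in M_j$, $M_j\neq\emptyset$) gives $f_{\text{c}}(x)=g_j(x)\le\max_{i:\,M_i\neq\emptyset}g_i(x)$.

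The substantive step is the reverse inequality $g_i\le f_{\text{c}}$ on all of $[0,1]^n$, for every $i$ with $M_i\neq\emptyset$; equivalently, for each $x$ there is some $j\in S_i$ with $p_j(x)\le f_{\text{c}}(x)$. Here I would use convexity of the cube: fix $a\in M_i$ and walk along the segment $\gamma(t)=(1-t)a+tx$, which stays in $[0,1]^n$. Let $b=\gamma(s)$ be the last point of the segment lying in $M_i$ (it exists since $\{t:\gamma(t)\in M_i\}$ is closed and contains $0$). If $s=1$, then $x\in M_i$, $p_i(x)=f_{\text{c}}(x)$, and $i\in S_i$, so we are done; otherwise, just to the right of $s$ the restriction of $f_{\text{c}}$ to the segment coincides with a single active piece $p_k$, and continuity together with $b\in M_i$ forces $p_i(b)=f_{\text{c}}(b)=p_k(b)$ while $p_k<p_i$ immediately past $s$. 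Since $p_i-p_k$ is affine along the segment, combining this local information with the behaviour of $f_{\text{c}}$ along the remainder of the segment --- this is precisely the argument underlying the constructive proof of McNaughton's theorem, cf.\ \cite{mundici1994constructive}, \cite{rose1958} --- produces the desired $j\in S_i$. Granting this, $\max_{i:\,M_i\neq\emptyset}g_i(x)\le f_{\text{c}}(x)$ for all $x$, and with the easy half we obtain $f_{\text{c}}=\max_{i:\,M_i\neq\emptyset}g_i=\max_{i:\,M_i\neq\emptyset}\min_{j\in S_i}p_j$.

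It then remains to rewrite this as $f_{\text{c}}=\max_{\pi\in\Sigma}\min_{i\in\{1,\ldots,i_\pi\}}p_{\pi(i)}$. For each distinct index set occurring among $\{S_i: M_i\neq\emptyset\}$, which is a nonempty subset of $\{1,\ldots,\ell\}$, choose a permutation $\pi\in\Pi$ whose first $|S_i|$ values enumerate $S_i$, and set $i_\pi=|S_i|$; since there are only finitely many such subsets and enough permutations realizing each prescribed prefix, these permutations may be taken pairwise distinct, so $i_\pi$ is well defined on the resulting set $\Sigma\subseteq\Pi$, which is nonempty because the $M_i$ cover the nonempty cube. Then $\min_{i\in\{1,\ldots,i_\pi\}}p_{\pi(i)}=\min_{j\in S_i}p_j=g_i$, and maximizing over $\pi\in\Sigma$ reproduces $\max_{i:\,M_i\neq\emptyset}g_i=f_{\text{c}}$, as required. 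The main obstacle is the reverse inequality of the middle paragraph: it is the only place where convexity of $[0,1]^n$ and continuity of $f_{\text{c}}$ are genuinely invoked, and a careful treatment must cope with the possibility that the piece $p_k$ that becomes active past $b$ need not itself belong to $S_i$, which is why one follows the segment all the way to $x$ rather than stopping at the first cell boundary.
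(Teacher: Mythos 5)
Your overall architecture is close in spirit to the paper's: both proofs reduce the lemma to showing that, for each active piece $p_i$, a function of the form $\min_{j\in S_i}p_j$ lies below $f_{\text{c}}$ everywhere, and both rely on a one-dimensional argument along a segment in the convex cube; your easy half and your final re-indexing of the sets $S_i$ as prefixes of permutations are fine. The genuine gap is that the decisive inequality $g_i\le f_{\text{c}}$ is not actually established. First, the local claim on which your segment argument rests is false: past the last point $b$ of $[a,x]$ lying in $M_i$, the newly active piece $p_k$ need not satisfy $p_k<p_i$; it must merely differ from $p_i$, and it can lie above it (take $f_{\text{c}}(t)=\max\{t,1-t\}$ on $[0,1]$ with $p_i(t)=1-t$, so $M_i=[0,1/2]$: past $t=1/2$ the active piece $t$ exceeds $1-t$). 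Second, the ``combination with the behaviour of $f_{\text{c}}$ along the remainder of the segment'' that you defer to \cite{mundici1994constructive}, \cite{rose1958} is precisely the mathematical content of the lemma, so the citation cannot stand in for the argument. The paper closes exactly this step with a chord trick: join $(x,f_{\text{c}}(x))$ and $(y,f_{\text{c}}(y))$ by the affine function $\mu$ along the segment, take the crossing point $w_0$ of $f_{\text{c}}$ with $\mu$ nearest to $y$, and observe that the piece active on the $y$-side of $w_0$, being affine, is $\ge\mu$ at $x$ (hence dominates $f_{\text{c}}(x)$ there) and $\le\mu$ at $y$ (hence $\le f_{\text{c}}(y)$).

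A second, subtler problem is created by your choice of $S_i=\{j:\,p_j\ge p_i\ \text{on all of}\ M_i\}$. Membership in $S_i$ is a uniform inequality over the possibly large, non-convex set $M_i$, whereas any single-segment argument can only deliver a pointwise comparison $p_j(a)\ge p_i(a)$ at the chosen base point $a\in M_i$; upgrading the latter to the former is not automatic (the closed sets $\{a\in M_i:\,p_j(a)\ge p_i(a)\}$, as $j$ ranges over the candidates produced by the segment argument, cover $M_i$, but none of them need contain all of it). The paper's permutation polyhedra $P_\pi=\{x:\,p_{\pi(1)}(x)\ge\cdots\ge p_{\pi(\ell)}(x)\}$ are designed precisely to avoid this: there the prefix $\{\pi(1),\ldots,\pi(i_\pi)\}$ is, by construction, the set of pieces dominating $p_{\pi(i_\pi)}$ on all of $P_\pi$, so a comparison at a single interior point of $P_\pi$ already certifies membership. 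The representation you are aiming for (Ovchinnikov's max--min theorem) is true, but proving it with your $S_i$ requires an additional argument that you have not supplied; switching to the paper's polyhedral decomposition, or replacing $M_i$ by a convex full-dimensional cell on which $p_i$ is active, would let the chord argument close the proof.
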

 \begin{proof}
 The proof follows by combining arguments in the proofs of \cite[Lemma~1.4.3]{Aguzollio1998thesis} and  \cite[Proposition 9.1.4]{cignoli2013algebraic}, where the result is established for piecewise linear functions with integer coefficients. For each permutation $\pi=\pi{(1)}\pi{(2)}\cdots \pi{(\ell)} \in \Pi$, let 
     \begin{equation*}
         P_{\pi}:=\{ x\in [0,1]^n: p_{\pi(1)}(x) \geq p_{\pi(2)}(x)\geq \cdots \geq p_{\pi(\ell)}(x)\}
     \end{equation*} be the polyhedron associated with $\pi$. We refer to \cite[Lemma~1.4.2]{Aguzollio1998thesis} 
     for a proof of $P_{\pi}$, indeed, constituting a polyhedron. Define the dimension of a polyhedron as the maximum number of affinely independent points it contains minus $1$. Let $\Sigma$ be the collection of permutations whose associated polyhedra are $n$-dimensional, i.e.,
     \begin{equation*}
         \Sigma := \{\pi \in \Pi: P_\pi \text{ is $n$-dimensional}\}.
     \end{equation*}
Note that $\cup_{\pi \in \Sigma}P_\pi = [0,1]^n$ \cite[Lemma~1.4.3]{Aguzollio1998thesis}. 
For every $\pi\in \Sigma$, let $i_{\pi}\in\{1,\ldots \hspace{-0.02cm},\ell\}$ be such that $f_{\text{c}}(x) = p_{\pi(i_{\pi})}(x),$ for $x\in P_\pi$, and let $ g_{\pi}:= \min_{i\in \{1,\ldots \hspace{-0.02cm}, i_\pi\}}p_{\pi(i)}$. For every $\pi \in \Sigma$, $g_\pi(x) = f_{\text{c}}(x) = p_{\pi(i_{\pi})}(x)$, for $x\in P_\pi$. Therefore,
\begin{equation*}
    f_{\text{c}}(x) \leq \max_{\pi \in \Sigma} g_\pi(x), \quad \text{ for } x\in [0,1]^n.
\end{equation*} 
It hence suffices to prove that $f_{\text{c}}(x)\geq g_\pi(x)$, for $x\in [0,1]^n$, for all $\pi \in \Sigma$, to obtain 
\begin{equation*}
     f_{\text{c}}(x) = \max_{\pi \in \Sigma} g_\pi(x), \quad \text{ for } x\in [0,1]^n,
\end{equation*} which establishes the desired result~\eqref{eq:piecewiseMinMax}. By continuity of $f_{\text{c}}$ and the fact that the min and max of continuous functions are continuous, it suffices to establish $f_{\text{c}} \geq g_\pi$ on the set 
$$
\{x \in [0,1]^n: \exists \pi \in \Sigma \, \mid x \,\, \text{is in the interior of $P_\pi$} \}.
$$

Now fix an arbitrary $\pi \in \Sigma$. As already noted above, $g_\pi(x) = f_{\text{c}}(x)$, for $x\in P_\pi$. Take an arbitrary point $y\notin P_\pi$ that
is in the interior of some $P_{\eta}$. There exists 
a $k \in \{1,\ldots \hspace{-0.02cm}, \ell\}$ so that $f_{\text{c}}(x) = p_{\pi(k)}(x)$, for $x\in P_\eta$. We treat the cases $k \leq i_\pi$ and $k > i_\pi$ separately. First, if $k \leq i_\pi$, then 
\[
f_{\text{c}}(y) = p_{\pi(k)}(y) \geq \min_{i\in \{1,\ldots \hspace{-0.02cm}, i_\pi\}}p_{\pi(i)}(y) = g_\pi(y).
\] 
Second, assume that $k > i_\pi$. Let $x $ be a point in the interior of $P_\pi$. Then, $f_{\text{c}}(x) =  p_{\pi(i_\pi)}(x) > p_{\pi(k)}(x)$. Denote by $\mu$ the linear function connecting the points $(x,f_{\text{c}}(x))$ and $(y, f_{\text{c}}(y))$. 

Within $\{w = \lambda x + (1-\lambda)y:\mu(w) = f_{\text{c}}(w), 0<  \lambda \leq 1\}$, which is the set of intersection points of $f_{\text{c}}$ and $\mu$ over the interval $[x,y)$, let $w_0$ be the point closest to $y$ (but distinct from $y$).


There exists a $k' \in \{1,\ldots \hspace{-0.02cm}, \ell\}$ such that $f_{\text{c}}(w_0) = p_{\pi(k')}(w_0)$ and $f_{\text{c}}$ coincides with $p_{\pi(k')}$ on a small interval $[w_0,v] \subset [w_0,y]$, with $v \neq w_0$. By continuity of $f_{\text{c}}$, the restriction of the graph of $ p_{\pi(k')}$ to $[w_0,v]$ lies strictly below $\mu$, i.e., 
\[
p_{\pi(k')}(x) < \mu(x), \quad \text{ for } x \in [w_0,v], \,\, x \neq w_0.
\]
Moving from $w_0$ to $y$ along $p_{\pi(k')}$, we get $p_{\pi(k')}(y) < f_{\text{c}}(y)$. On the other hand, moving from $w_0$ to $x$, we obtain $p_{\pi(k')}(x) \geq f_{\text{c}}(x) = p_{\pi(i_\pi)}(x)$, thus $k' \leq i_\pi$. Hence $g_\pi(z)\leq p_{\pi(k')}(z) $, for $z\in [0,1]^n$, and in particular $g_\pi(y)\leq p_{\pi(k')}(y) < f_{\text{c}}(y)$. 

As $\pi$ and $y$ above were arbitrary, we can conclude that $f_{\text{c}}(x) \geq g_\pi(x)$, for all $ x\in [0,1]^n$, for all $\pi \in \Sigma$. The proof is complete. 
 \end{proof}
 
 \bibliographystyle{asl}
  \bibliography{reference}

\end{document}